\setlist{leftmargin=10mm}
\newcommand{\blue}[1]{\textcolor{blue}{#1}}
\def\rdp{\mathrm{RDP}}
\def\pr{\mathrm{Pr}}
\def\P{\mathbb{P}}
\def\argmax{Argmax}
\def\R{\mathbb{R}}
\def\cA{\mathcal{A}}
\def\cD{\mathcal{D}}
\def\cM{\mathcal{M}}
\def\cN{\mathcal{N}}
\def\cR{\mathcal{R}}
\def\cT{\mathcal{T}}
\def\cX{\mathcal{X}}
\def\cY{\mathcal{Y}}
\newcommand{\argmin}{\text{argmin}}
\newcommand{\ttheta}{\tilde{\theta}}
\theoremstyle{plain}
\newtheorem{theorem}{Theorem}[section]
\newtheorem{proposition}[theorem]{Proposition}
\newtheorem{lemma}[theorem]{Lemma}
\theoremstyle{definition}
\newtheorem{definition}[theorem]{Definition}
\newtheorem{remark}[theorem]{Remark}
\theoremstyle{plain}
\newtheorem{example}[theorem]{Example}
\title{Generalized PTR: User-Friendly Recipes for Data-Adaptive Algorithms with Differential Privacy}
\author{Rachel Redberg, Yuqing Zhu, Yu-Xiang Wang \\
University of California, Santa Barbara\\
\texttt{\{rredberg, yuqingzhu, yuxiangw\}@ucsb.edu} \\
}
\begin{document}

\maketitle

\begin{abstract}
    The ``Propose-Test-Release'' (PTR) framework \citep{dwork2009differential} is a classic recipe for designing differentially private (DP) algorithms that are data-adaptive, i.e. those that  add less noise when the input dataset is ``nice''. We extend PTR to a more general setting by privately testing \emph{data-dependent privacy losses} rather than \emph{local sensitivity}, hence making it applicable beyond the standard noise-adding mechanisms, e.g. to queries with unbounded or undefined sensitivity. We demonstrate the versatility of generalized PTR using private linear regression as a case study. Additionally, we apply our algorithm to solve an open problem from “Private Aggregation of Teacher Ensembles (PATE)” \citep{papernot2017, papernot2018scalable} --- privately releasing the entire model with a delicate data-dependent analysis.
\end{abstract}

  %\vspace{-2em}
\section{Introduction}
   % \vspace{-1em}
\label{sec:introduction}
The guarantees of differential privacy (DP) \citep{dwork2006calibrating} are based on worst-case outcomes across all possible datasets. A common paradigm is therefore to add noise scaled by the \emph{global sensitivity} of a query $f$, i.e. the maximum change in $f$ between any pair of neighboring datasets.

A given dataset $X$ might have a \emph{local sensitivity} that is much smaller than the global sensitivity, in which case we can hope to add a smaller amount of noise (calibrated to the local rather than the global sensitivity) while achieving the same privacy guarantee. However, this must not be undertaken na\"{i}vely -- the local sensitivity is a dataset-dependent function and so calibrating noise to the local sensitivity could leak information about the dataset \citep{nissim2007smooth}.

The ``Propose-Test-Release'' (PTR) framework \citep{dwork2009differential} resolves this issue by introducing a test to privately check whether a proposed bound on the local sensitivity is valid. Only if the test ``passes'' is the output released with noise calibrated to the proposed bound on the local sensitivity. 

PTR is a powerful and flexible tool for designing data-adaptive DP algorithms, but it has several limitations. First, it applies only to noise-adding mechanisms which calibrate noise according to the sensitivity of a query. Second, the test in ``Propose-Test-Release'' is computationally expensive for all but a few simple queries such as privately releasing the median or mode.  Third, while some existing works\blue{~\citep{decarolis2020end, kasiviswanathan2013analyzing, liu2021differential}} follow the approach of testing ``nice'' properties of a dataset before exploiting these properties in a private release to PTR \footnote{We refer to these as PTR-like methods. }, 
there has not been a systematic recipe for \emph{discovering} which properties should be tested.
% \begin{enumerate}
%     \item It applies only to noise-adding mechanisms which calibrate noise according to the sensitivity of a query.
    
%     \item The test in ``Propose-Test-Release'' is computationally expensive for all but a few "nice" cases such as privately releasing the median or mode.
% \end{enumerate}

In this paper, we propose a generalization of PTR which addresses these limitations. The centerpiece of our framework is a differentially private test on the \emph{data-dependent privacy loss}.
%which meets a few requirements. 
This test does not directly consider the local sensitivity of a query and is therefore not limited to additive noise mechanisms. Moreover, in many cases, the test can be efficiently implemented by privately releasing a high-probability upper bound, thus avoiding the need to search an exponentially large space of datasets. Furthermore, the derivation of the test itself often spells out exactly what properties of the input dataset need to be checked, which streamlines the design of data-adaptive DP algorithms.

%Furthermore, the test need not search an exponentially large space of datasets and we illustrate how to design a test which privately checks requisite properties of the input dataset itself.

Our contributions are summarized as follows:
%\todo{to add }
\begin{enumerate}
    %\vspace{-1em}
    % \item We propose a generalization of PTR, which 
    % substantially broadens the applicability of the PTR framework \blue{on two fronts: (a) generalized PTR extends PTR to apply to any DP mechanisms (beyond noise-adding mechanisms); (b) generalized PTR allows us to plug in \emph{any} data-dependent DP analysis to construct a high-probability DP test that adapts to favorable properties of the input dataset -- without painstakingly designing each test from scratch.}
    \item We propose a generalization of PTR which can handle algorithms beyond noise-adding mechanisms. Generalized PTR allows us to plug in \emph{any} data-dependent DP analysis to construct a high-probability DP test that adapts to favorable properties of the input dataset -- without painstakingly designing each test from scratch.
   % \vspace{-0.7em}
    \item We demonstrate that many existing examples of PTR and PTR-like algorithms can be unified under the generalized PTR framework, sometimes resulting in a tighter analysis (see an example of report-noisy-max in Sec~\ref{sec:binary_vote}).   % \vspace{-0.7em}
    \item We show that one can publish a DP model through privately upper-bounding a one-dimensional statistic --- no matter how complex the output space of the mechanism is. We apply this result to solve an open problem from PATE \citep{papernot2017, papernot2018scalable}.    %\vspace{-1mm}
    \item Our results broaden the applicability of private hyper-parameter tuning \citep{liu2019private,papernot2021hyperparameter} in enabling joint-parameter selection of  DP-specific parameters (e.g., noise level) and native parameters of the algorithm (e.g., learning rate, regularization weight), which may jointly affect the data-dependent DP losses.    % \vspace{-0.7em}
\end{enumerate}

\label{section:introduction}

%\vspace{-1em}
\section{Related Work}
%\vspace{-1em}
\label{sec:related_work}

\textbf{Data-dependent DP algorithms.}
Privately calibrating noise to the local sensitivity is a well-studied problem. One approach is to add noise calibrated to the smooth sensitivity \citep{nissim2007smooth}, an upper bound on the local sensitivity which changes slowly between neighboring datasets. %Smoothed sensitivity considers the local sensitivity of all possible datasets, discounted by their distance to an input dataset $X$.
An alternative to this -- and the focus of our work -- is  Propose-Test-Release (PTR) \citep{dwork2009differential}, which works by calculating the distance $\mathcal{D}_{\beta}(X)$ to the nearest dataset to $X$ whose local sensitivity violates a proposed bound $\beta$. The PTR algorithm then adds noise to $\mathcal{D}_{\beta}(X)$ before testing whether this privately computed distance is sufficiently large.

PTR spin-offs abound. Notable examples include stability-based methods \citep{thakurta2013differentially} (stable local sensitivity of $0$ near the input data) and privately releasing upper bounds of local sensitivity \citep{kasiviswanathan2013analyzing,liu2021differential, decarolis2020end}. We refer readers to Chapter 3 of \citet{vadhan2017complexity} for a concise summary of these classical results. 
Recent work
\citep{wang2022renyi} has provided R\'enyi DP bounds for PTR and demonstrated its applications to robust DP-SGD. Our work (see Section~\ref{subsections:pate}) also considers applications of PTR in data-adaptive private deep learning: Instead of testing the local sensitivity of each gradient step as in \citet{wang2022renyi}, our PTR-based PATE algorithm tests the data-dependent privacy loss as a whole.

\citet{liu2021differential} proposed a new variant called High-dimensional Propose-Test-Release (HPTR). HPTR provides a systematic way of solving DP statistical estimation problems by using the exponential mechanism (EM) with carefully constructed scores based on certain one-dimensional robust statistics, which have stable local sensitivity bounds. HPTR focuses on designing data-adaptive DP mechanisms from scratch; our method, in contrast, converts existing randomized algorithms (including EM and even some that do not satisfy DP) into those with formal DP guarantees. Interestingly, our proposed method also depends on a one-dimensional statistic of direct interest: the data-dependent privacy loss.

\textbf{Data-dependent DP losses.} The flip side of data-dependent DP algorithms is the study of data-dependent DP losses \citep{papernot2018scalable,soria2017individual,wang2017per}, which fix the randomized algorithm but parameterize the resulting privacy loss by the specific input dataset. For example: In the simple mechanism that adds Laplace noise with parameter $b$, data-dependent DP losses are $\epsilon(X) = \Delta_{\text{LS}}(X)/b$. The data-dependent DP losses are often much smaller than the DP loss, but they themselves depend on the data and thus may reveal sensitive information; algorithms satisfying a data-dependent privacy guarantee are not formally DP with guarantees any smaller than that of the worst-case. Existing work has considered privately publishing these data-dependent privacy losses \citep{papernot2018scalable,redberg2021privately}, but notice that privately publishing these losses does not improve the DP parameter of the given algorithm. Part of our contribution is to resolve this conundrum by showing that a simple post-processing step of the privately released upper bound of $\epsilon(\text{Data})$ gives a formal DP algorithm.

% \textbf{On data-dependent DP losses.} In addition to the above, there has been an increasing list of empirical DP work that fix the parameters of a randomized algorithm while reporting the resulting data-dependent DP losses $\epsilon(\text{Data})$ after running on a specific dataset \citep{ligett2017accuracy,papernot2018scalable,zhu2020private, feldman2021individual}. The data-dependent DP losses are often smaller than the worst-case DP losses, but technically speaking, these algorithms are not formally DP with DP guarantees any smaller than that of the worst-case. In addition, the data-dependent DP losses themselves are sensitive, and thus cannot be reported. A typical solution is to privately release $\epsilon(\text{Data})$, but it still does not satisfy DP as this would require a prescribed $(\epsilon,\delta)$-DP parameter to be satisfied for all input datasets. Part of our contribution is to resolve this conundrum by showing that a simple post-processing step of the privately released upper bound of $\epsilon(\text{Data})$ gives a formal DP algorithm.

\textbf{Private hyper-parameter tuning.}
%In addition to normal machine learning parameters (e.g., the learning rate) DP algorithms introduce a few new hyper-parameters, e.g., the clipping threshold in DP-SGD.
Our work has a nice connection with private hyper-parameter tuning. 
%Recent work~\citep{papernot2021hyperparameter} empirically demonstrates that repeat privacy-preserving training and tuning hyper-parameters does leak private information. 
Prior work~\citep{liu2019private, papernot2021hyperparameter} requires each candidate configuration to be released with the same DP (or R\'enyi DP) parameter set. Another hidden assumption is that the parameters must not be privacy-correlated
(i.e., parameter choice will not change the privacy guarantee). Otherwise we need to use the largest DP bound across all candidates.
For example, \citet{liu2019private} show that if each mechanism (instantiated with  one group of hyper-parameters) is $(\epsilon, 0)$-DP, then running a random number of mechanisms and reporting the best option satisfies $(3\epsilon, 0)$-DP. 
Our work directly generalizes the above results by (1) considering a wide range of hyper-parameters, either privacy-correlated or not; and (2) requiring only that individual candidates to have a \emph{testable} data-dependent DP.

\label{section:related_work}

%\vspace{-1em}
\section{Preliminaries}
%\vspace{-1em}
\label{sec:preliminaries}
Datasets $X, X' \in \mathcal{X}$ are neighbors if they differ by no more than one datapoint -- i.e., $X \simeq X'$ if $d(X, X') \leq 1$. We will define $d(\cdot)$ to be the number of coordinates that differ between two datasets of the same size $n$: $d(X, Y) = \#\{i \in [n]: X_i \neq Y_i  \}$.

We use $||\cdot||$ to denote the radius of the smallest Euclidean ball that contains the input set, e.g. $||\mathcal{X}|| = \sup_{x \in \mathcal{X}} ||x||$.

The parameter $\phi$ denotes the privacy parameters associated with a mechanism (e.g. noise level, regularization). $\mathcal{M}_{\phi}$ is a mechanism parameterized by $\phi$.
For mechanisms with continuous output space, we will take $\text{Pr}[\mathcal{M}(X) = y]$ to be the probability density function of $\mathcal{M}(X)$ at $y$.

    \begin{definition}[Differential privacy \citep{dwork2006calibrating}] \label{def:dp}
        Fix $\epsilon, \delta \geq 0$. 
A randomized algorithm $\mathcal{M}: \mathcal{X} \rightarrow \mathcal{S}$ satisfies $(\epsilon, \delta)$-DP if for all neighboring datasets $X \simeq X'$ and for all measurable sets $S \subset \mathcal{S}$, 
            \[\text{Pr}\big[\mathcal{M}(X) \in S\big] \leq e^{\epsilon}\text{Pr}\big[\mathcal{M}(X') \in S\big] + \delta.\]
    \end{definition}
%\vspace{-3mm}
% We now define \emph{data-dependent} differential privacy that conditions on an input dataset $X$. 

% \begin{definition}[Data-dependent privacy\cite{papernot2018scalable}]\vspace{-1mm}
% \label{def:data_dep_dp}
% Suppose we have $\delta > 0$ and a function $\epsilon: \mathcal{X} \rightarrow \mathbb{R}$. We say that mechanism $\mathcal{M}$ satisfies ($\epsilon(X), \delta$) data-dependent DP for dataset $X$ if for all possible output sets $S$ and neighboring datasets $X'$,
% \begin{align*}
%     \text{Pr}\big[\mathcal{M}(X) \in S\big] &\leq e^{\epsilon(X)}\text{Pr}\big[\mathcal{M}(X') \in S\big] + \delta, \\
%       \text{Pr}\big[\mathcal{M}(X') \in S\big] &\leq e^{\epsilon(X)}\text{Pr}\big[\mathcal{M}(X) \in S\big] + \delta.
% \end{align*}
% \end{definition}

%\subsection{Additive Noise Mechanisms}
Suppose we wish to privately release the output of a real-valued function $f: \mathcal{X} \rightarrow \mathcal{R}$. We can do so by calculating the \emph{global sensitivity} $\Delta_{GS}$, calibrating the noise scale to the global sensitivity and then adding sampled noise to the output.

\begin{definition}[Local / Global sensitivity]
The local $\ell_*$-sensitivity of a function $f$ is defined as $\Delta_{LS}(X) = \max\limits_{X \simeq X'} || f(X) - f(X') ||_* $ and the global sensitivity of $f$ is $\Delta_{GS} = \sup_X \Delta_{LS}(X)$.
% The local $\ell_*$-sensitivity of a function $f: \cX \to \mathbb{R}^d$ is defined as $\Delta_{LS}(X) = \max\limits_{X \simeq X'} || f(X) - f(X') ||_* $ and the global sensitivity of $f$ is $\Delta_{GS} = \sup_X \Delta_{LS}(X)$.
\end{definition}
\subsection{Propose-Test-Release}
%\vspace{-0.5em}
Calibrating the noise level to the local sensitivity $\Delta_{LS}(X)$ of a function would allow us to add less noise and therefore achieve higher utility for releasing private queries. However, the local sensitivity is a data-dependent function and na\"ively calibrating the noise level to $\Delta_{LS}(X)$ will not satisfy DP.

PTR resolves this issue in a three-step procedure: \textbf{propose} a bound on the local sensitivity, privately \textbf{test} that the bound is valid (with high probability), and if so calibrate noise according to the bound and \textbf{release} the output.

PTR privately computes the distance $\cD_{\beta}(X)$ between the input dataset $X$ and the nearest dataset $X''$ whose local sensitivity exceeds the proposed bound $\beta$:
\begin{align*}
    \cD_{\beta}(X) = \min\limits_{X''} \{ d(X, X''): \Delta_{LS}(X'')> \beta \}.
\end{align*}
%\vspace{-.8em}
% The $\epsilon$-DP "test" fails (with probability $\delta$) if PTR decides to release $f^P(X)$ when $\gamma(X) = 0$, i.e. when dataset $X$ has local sensitivity greater than $\beta$.

\begin{figure}[H]
\vspace{-1.4em}
\centering
% \resizebox{0.95\columnwidth}{!}{%
% \begin{minipage}{0.54\textwidth}
\begin{algorithm}[H]
\caption{Propose-Test-Release \citep{dwork2009differential}}
\label{alg:classic_ptr}
\begin{algorithmic}[1]
\STATE{\textbf{Input}: Dataset $X$; privacy parameters $\epsilon,\delta$; proposed bound $\beta$ on $\Delta_{LS}(X)$; query function $f: \mathcal{X} \rightarrow \mathbb{R}$.}
% \STATE{\textbf{Output}: $f^P(X)$ or $\perp$.}
% \STATE{Compute the distance $\gamma(X)$ to the nearest dataset $X''$ such that $ \Delta_{LS}(X'')> \beta$.}
% \STATE{Privately release $\gamma^P = \gamma(X; \beta) + \text{Lap}\left(\frac{1}{\epsilon}\right)$.}

\STATE{\textbf{if} $\cD_{\beta}(X) + \text{Lap}\left(\frac{1}{\epsilon}\right) \leq \frac{\log(1/\delta)}{\epsilon}$ \textbf{then} output $\perp$,}
\STATE{\textbf{else} release $f(X) + \text{Lap}\left(\frac{\beta}{\epsilon}\right)$.}
\end{algorithmic}
\end{algorithm}
\end{figure}
%\vspace{-.8em}
\begin{theorem} 
Algorithm~\ref{alg:classic_ptr} satisfies ($2 \epsilon, \delta$)-DP.
\citep{dwork2009differential}
\end{theorem}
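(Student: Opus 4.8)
The plan follows the classical argument of \citet{dwork2009differential}: decompose the output distribution of Algorithm~\ref{alg:classic_ptr} according to its two \emph{independent} Laplace draws --- write $Z\sim\text{Lap}(1/\epsilon)$ for the noise in the test and $W\sim\text{Lap}(\beta/\epsilon)$ for the noise in the release --- and bound the ``$\perp$ versus release'' decision and the released value separately. Fix neighbors $X\simeq X'$, set the threshold $t=\log(1/\delta)/\epsilon$, and let $S\subseteq\mathbb{R}\cup\{\perp\}$ be measurable with $S'=S\cap\mathbb{R}$. Independence of $Z$ and $W$ gives
\[
\Pr[\mathcal{M}(X)\in S]=\mathbf{1}[\perp\in S]\,\Pr[\mathcal{D}_\beta(X)+Z\le t]+\Pr[\mathcal{D}_\beta(X)+Z>t]\,\Pr[f(X)+W\in S'],
\]
and the same identity holds at $X'$; I would compare the ``$\perp$-mass'' and the ``release-mass'' terms separately.

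The first step is the observation that $\mathcal{D}_\beta$ has global sensitivity at most $1$: it is the $d(\cdot,\cdot)$-distance from its argument to the \emph{fixed} set $\{X'':\Delta_{LS}(X'')>\beta\}$, hence $1$-Lipschitz in $d$ by the triangle inequality. By the privacy guarantee of the Laplace mechanism this yields $\Pr[\mathcal{D}_\beta(X)+Z\le t]\le e^{\epsilon}\Pr[\mathcal{D}_\beta(X')+Z\le t]$ and, likewise, $\Pr[\mathcal{D}_\beta(X)+Z>t]\le e^{\epsilon}\Pr[\mathcal{D}_\beta(X')+Z>t]$. The first of these already settles the $\perp$-mass term with factor $e^{\epsilon}\le e^{2\epsilon}$ and no additive slack.

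For the release-mass term I would split on the value of $\mathcal{D}_\beta(X)$, which --- since $d$ is integer-valued and $d(X,X'')=0$ only when $X''=X$ --- is either $0$ or at least $1$. If $\mathcal{D}_\beta(X)\ge 1$, then $\Delta_{LS}(X)\le\beta$ by definition of $\mathcal{D}_\beta$, so $\|f(X)-f(X')\|\le\Delta_{LS}(X)\le\beta$ and the release is an honest Laplace mechanism calibrated to a valid sensitivity bound: $\Pr[f(X)+W\in S']\le e^{\epsilon}\Pr[f(X')+W\in S']$; multiplying this by the earlier $e^{\epsilon}$ bound on the pass probability bounds the release-mass at $X$ by $e^{2\epsilon}$ times that at $X'$. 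If instead $\mathcal{D}_\beta(X)=0$ the released value need not be reliable, but then the test almost surely rejects: $\Pr[\mathcal{D}_\beta(X)+Z>t]=\Pr[Z>t]=\tfrac12 e^{-\epsilon t}=\tfrac{\delta}{2}$, so the entire release-mass at $X$ is at most $\delta$ and can be charged to the additive term. Combining either case with the $\perp$-mass bound yields $\Pr[\mathcal{M}(X)\in S]\le e^{2\epsilon}\Pr[\mathcal{M}(X')\in S]+\delta$.

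The argument is largely bookkeeping; the point that needs care is ensuring that the two $\epsilon$-losses (from the test and from the release) only \emph{multiply} to $e^{2\epsilon}$ and are not compounded further --- which is exactly what the decomposition into the independent coordinates $Z$ and $W$ buys us --- and checking that the threshold $t=\log(1/\delta)/\epsilon$ is tuned precisely so the ``bad'' event $\{\mathcal{D}_\beta(X)=0,\ \text{test passes}\}$ contributes at most $\delta$ rather than $e^{\epsilon}\delta$ or $2\delta$. The only other delicate point is the clean dichotomy ``$\mathcal{D}_\beta(X)=0$ or $\ge 1$'', which relies on $d$ taking integer values; it is what makes ``the test can pass'' force ``$\beta$ is a valid local-sensitivity bound at $X$'' with no slack. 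For continuous output spaces the displayed identities should be read with densities in place of probabilities, which introduces no new difficulty.
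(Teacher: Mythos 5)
Your proof is correct and follows essentially the classical argument of \citet{dwork2009differential} that the paper itself defers to by citation (the paper states this theorem without proof): the decomposition into the $\perp$-mass and release-mass via the two independent Laplace draws, the sensitivity-$1$ bound on $\mathcal{D}_\beta$, the dichotomy $\mathcal{D}_\beta(X)=0$ versus $\mathcal{D}_\beta(X)\ge 1$ forcing $\Delta_{LS}(X)\le\beta$, and the $\tfrac{\delta}{2}$ tail bound charged to the additive term are all exactly the standard ingredients, and they are assembled correctly. This also mirrors the three-case structure the paper sketches for the generalized version (Theorem~\ref{thm:gen_ptr}), so no further comment is needed.
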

%\vspace{-1em}
Rather than proposing an arbitrary threshold $\beta$, one can also privately release an upper bound of the local sensitivity and calibrate noise according to this upper bound. This was used for node DP in graph statistics \citep{kasiviswanathan2013analyzing}, and for fitting topic models using spectral methods \citep{decarolis2020end}.

\label{section:preliminary}

%\vspace{-1em}
\section{Generalized PTR}
%\vspace{-1em}
\label{sec:gen_ptr}
This section introduces the generalized PTR framework. We first formalize the notion of \emph{data-dependent} differential privacy that conditions on an input dataset $X$. 

\begin{definition}[Data-dependent privacy]
\label{def:data_dep_dp}
Suppose we have $\delta > 0$ and a function $\epsilon: \mathcal{X} \rightarrow \mathbb{R}$. We say that mechanism $\mathcal{M}$ satisfies ($\epsilon(X), \delta$) data-dependent DP\footnote{We will sometimes write that $\cM(X)$ satisfies $\epsilon(X)$ data-dependent DP with respect to $\delta$.} for dataset $X$ if for all possible output sets $S$ and neighboring datasets $X'$,
\begin{align*}
    \text{Pr}\big[\mathcal{M}(X) \in S\big] &\leq e^{\epsilon(X)}\text{Pr}\big[\mathcal{M}(X') \in S\big] + \delta, \\
       \text{Pr}\big[\mathcal{M}(X') \in S\big] &\leq e^{\epsilon(X)}\text{Pr}\big[\mathcal{M}(X) \in S\big] + \delta.
\end{align*}
\end{definition}

In generalized PTR, we propose a value $\phi$ for the randomized algorithm $\cM$, which could be a noise scale or regularization parameter -- or a set including both. For example, $\phi = (\lambda, \gamma)$ in Example~\ref{exp: posterior}. We then say that $\cM_{\phi}$ is the mechanism $\mathcal{M}$ parameterized by $\phi$, and $\epsilon_{\phi}(X)$ its data-dependent DP.

The following example illustrates how to derive the data-dependent DP for a familiar friend -- the Laplace mechanism.

\begin{example}(\emph{Data-dependent DP of Laplace Mechanism.}) \label{examp:lap_mech}
Given a function $f: \mathcal{X} \rightarrow \mathbb{R}$, we will define
\begin{align*}
    \mathcal{M}_{\phi}(X) = f(X) + \text{Lap}\left(\phi\right).
\end{align*}
We then have
\begin{align*}
    \log \dfrac{\emph{Pr}[\mathcal{M}_{\phi}(X) = y]}{\emph{Pr}[\mathcal{M}_{\phi}(X') = y]} &\leq \dfrac{|f(X) - f(X')|}{\phi}.
\end{align*}
Maximizing the above calculation over all possible outputs $y$ and using Definition~\ref{def:data_dep_dp},
\begin{align*}
    \epsilon_{\phi}(X) = \max\limits_{X': X' \simeq X} \frac{|f(X) - f(X')|}{\phi} = \frac{\Delta_{LS}(X)}{\phi}.
\end{align*}
% We can then verify that choosing $\phi = \beta/\hat{\epsilon}$ and $\hat{\epsilon} = \epsilon$ reduces Algorithm~\ref{alg:no_ls} exactly to Algorithm~\ref{alg:classic_ptr}.
\end{example}

The data-dependent DP $\epsilon_\phi(X)$ is a function of both the dataset $X$ and the parameter $\phi$. Maximizing $\epsilon_\phi(X)$ over $X$ recovers the standard DP guarantee of running $\cM$ with parameter $\phi$.

 %We take the functional view of $\epsilon_\phi(X)$ and use $\epsilon_{\phi, \delta}(X)$ to denote the $\epsilon$ as a function of $\delta$.
  % Add what is data-dependent dp, it's a function representation

%  Suppose we are given an $(\epsilon, \delta)$-DP test $\mathcal{T}$ that fails with probability $\Tilde{\delta}$ in order to determine whether or not $\mathcal{M}_{\phi}$ satisfies ($\hat{\epsilon}, \hat{\delta}$)-DP. Based on the outcome of $\mathcal{T}$,  we decide whether to run $\mathcal{M}_{\phi}$ or to halt and return $\perp$.
 
\begin{figure}[H]
%\vspace{-1em}
\centering
% \resizebox{0.95\columnwidth}{!}{%
% \begin{minipage}{0.54\textwidth}
\begin{algorithm}[H]
\caption{Generalized Propose-Test-Release}
\label{alg:gen_ptr}
\begin{algorithmic}[1]
\STATE{\textbf{Input}: Dataset $X$; mechanism $\cM_\phi: \mathcal{X} \rightarrow \cR$ and its privacy budget $\epsilon, \delta$; $(\hat{\epsilon}, \hat{\delta})$-DP test $\mathcal{T}$; false positive rate $\leq \delta'$; data-dependent DP function $\epsilon_\phi(\cdot)$ w.r.t. $\delta$}.
% \STATE{\blue{Run a test $\mathcal{T}$ to determine if $\cM_\phi$ is $(\hat{\epsilon}, \hat{\delta})$-DP at $X$.}}
% \STATE{\textbf{if} the test $\mathcal{T}$ passes \textbf{then} release $\theta = \mathcal{M}_{\phi}(X)$.}\vspace{-1pt}
\STATE{\textbf{if not} $\mathcal{T}(\cX)$ \textbf{then} output $\perp$,}\vspace{-1pt}
\STATE{\textbf{else} release $\theta = \mathcal{M}_{\phi}(X).$}%\vspace{-2pt}
\end{algorithmic}
\end{algorithm}
\end{figure}

\begin{theorem}[Privacy guarantee of generalized PTR]
\label{thm:gen_ptr}
Consider a proposal $\phi$ and a data-dependent DP function $\epsilon_{\phi}(X)$ w.r.t. $\delta$. Suppose that we have an ($\hat{\epsilon}, \hat{\delta}$)-DP test $\cT: \cX \rightarrow \{0, 1\}$ such that when $\epsilon_{\phi}(X) > \epsilon$, \end{theorem}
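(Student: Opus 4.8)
Write $\cA$ for the mechanism of Algorithm~\ref{alg:gen_ptr}, whose output lies in $\cR \cup \{\perp\}$, and let the missing hypothesis be that $\cT$ has false-positive rate at most $\delta'$, i.e.\ $\Pr[\cT(X) = 1] \le \delta'$ whenever $\epsilon_\phi(X) > \epsilon$; the target conclusion is that $\cA$ is $(\epsilon + \hat\epsilon,\ \delta + \hat\delta + \delta')$-DP. Fix neighbors $X \simeq X'$ and a measurable $S \subseteq \cR \cup \{\perp\}$, and split the analysis according to whether the realized output is $\perp$ or lands in $\cR$. The event $\{\cA(X) = \perp\}$ is exactly $\{\cT(X) = 0\}$, whose probability is governed directly by the $(\hat\epsilon,\hat\delta)$-DP of $\cT$; this part costs only the test's budget. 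So the real work is in controlling the $\cR$-part, $\Pr[\cT(X) = 1]\cdot\Pr[\cM_\phi(X) \in S \cap \cR]$, against the corresponding quantity for $X'$.

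Here I would case on whether $X$ is ``safe,'' meaning $\epsilon_\phi(X) \le \epsilon$. If $X$ is safe, then by Definition~\ref{def:data_dep_dp} the release $\cM_\phi(X)$ already satisfies the two-sided $(\epsilon,\delta)$ bound against $\cM_\phi(X')$ --- and this holds regardless of the value of $\epsilon_\phi(X')$, which is the whole point of anchoring data-dependent DP at $X$. Since $\cT$ and $\cM_\phi$ use independent internal randomness on the same input, $\cA(X)$ restricted to these branches is a composition of an $(\hat\epsilon,\hat\delta)$-DP step with a step that is $(\epsilon,\delta)$-indistinguishable around $X$, which I would push through to an $e^{\epsilon+\hat\epsilon}$ factor with additive slack $\delta + \hat\delta$ on the $\cR$-part. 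If instead $X$ is unsafe, the false-positive hypothesis gives $\Pr[\cT(X) = 1] \le \delta'$, so $\cA(X)$ places at most $\delta'$ mass on $\cR$ and this entire part is absorbed into the slack; when it is the neighbor $X'$ that is unsafe, I would likewise invoke the false-positive bound at $X'$ and fall back on comparing the two $\perp$-masses through the test. Assembling the $\perp$-part and the $\cR$-part, and checking the symmetric direction $X' \to X$ identically, yields $(\epsilon + \hat\epsilon,\ \delta + \hat\delta + \delta')$-DP.

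The main obstacle is that $\cM_\phi$ is \emph{not} a DP mechanism in its own right --- only data-dependently private --- so the composition with $\cT$ cannot be invoked as a black-box composition theorem for two DP algorithms; it must be carried out ``pointwise'' around each neighboring pair, and one must verify that the safe/unsafe dichotomy behaves consistently when a safe dataset is adjacent to an unsafe one, so that no choice of $S$ exploits the mismatch. A secondary but genuine subtlety is the bookkeeping of the three approximate-DP error terms: to combine $\delta$ (from the release), $\hat\delta$ (from the test), and $\delta'$ (from false positives) \emph{additively} rather than with spurious multiplicative $e^{\hat\epsilon}$-type factors, I would phrase each step through the ``bad-event'' (hockey-stick) characterization of $(\epsilon,\delta)$-DP --- isolating, for each dataset, a low-probability output region on which indistinguishability may fail --- instead of manipulating raw probabilities and multiplying failure events together.
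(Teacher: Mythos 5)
Your proposal is correct and follows essentially the same decomposition as the paper's own proof sketch: the $\perp$ branch is post-processing of the $(\hat{\epsilon},\hat{\delta})$-DP test, the unsafe case $\epsilon_{\phi}(X)>\epsilon$ is absorbed into the slack via the false-positive bound $\delta'$, and the safe case is a composition of the test with a release that is $(\epsilon,\delta)$-indistinguishable for the particular pair $(X,X')$. Your added care --- noting that $\cM_{\phi}$ is only pairwise indistinguishable (anchored at $X$) rather than DP, and that the deltas should be combined via the bad-event characterization to stay additive --- is a legitimate tightening of details the paper's sketch glosses over, not a different route.
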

\vspace{-8mm}
\begin{align*}
    \cT(X) =
    \begin{cases}
        0  \text{ \:with probability } 1 - \delta', \\
        1  \text{\: with probability }  \delta'. %\vspace{-2mm}
    \end{cases}
\end{align*}
%\vspace{-2mm}
\textit{Then Algorithm~\ref{alg:gen_ptr} satisfies ($\epsilon + \hat{\epsilon}, \delta +  \hat{\delta} + \delta'$)-DP. }
% \vspace{-2mm}
%  \end{theorem}
 \begin{proof}[Proof sketch]
 There are three main cases to consider:
\begin{enumerate}
    %\vspace{-1em}
    \item We decide not to run $\mathcal{M}_{\phi}$.   % \vspace{-0.5em}
    \item We decide to run $\mathcal{M}_{\phi}$ and $\epsilon_{\phi}(X) > \epsilon$;    %\vspace{-0.5em}
    \item We decide to run $\mathcal{M}_{\phi}$ and $\epsilon_{\phi}(X) \leq \epsilon$.    %\vspace{-0.5em}
\end{enumerate}%\vspace{-2mm}
In the first case, the decision to output $\perp$ is post-processing of an $(\hat{\epsilon}, \hat{\delta})$-DP mechanism and inherits its privacy guarantees. The second case occurs when the $(\hat{\epsilon}, \hat{\delta})$-DP test "fails" (produces a false positive) and occurs with probability at most $\delta'$. The third case is a composition of an $(\hat{\epsilon}, \hat{\delta})$-DP algorithm and an ($\epsilon, \delta$)-DP algorithm.
 \end{proof}
 
% The construction of the test required by Theorem~\ref{thm: gen_ptr}
% can be flexibly adapted to any use case. We outline two  approaches to construct such a test in Sec~\ref{section:applications}. 

% \blue{Theorem~\ref{thm: gen_ptr} requires a private test $\mathcal{T}$ to be clear.   The data-dependent privacy loss might depend on the dataset in different ways, and thus the derivation of the test requires a case-specific study. We outline two approaches to construct such a test in Sec~\ref{section:applications}. }
% Before we reach that point, we first show that the classic PTR (Algorithm~\ref{alg:classic_ptr}) can be viewed under our framework as Algorithm~\ref{alg:gen_ptr} instantiated with a specific type of test.
% \begin{theorem}[Generalized PTR with distance test]\label{exp: dist_ptr}
% Let $\delta, \delta' > 0$ and define an algorithm $\cA: \cR^d \to \{\perp, \cR^d\}$ as follows.
% Let $\gamma(X)$ denote the distance to the nearest dataset $X''$ such that $\epsilon_{\phi, \hat{\delta}}(X'')>\hat{\epsilon}$, i.e.,
% $\gamma(X) = \min_{X''} \{dist(X, X''):\epsilon_{\phi, \hat{\delta}}(X'')>\hat{\epsilon}\}$. Let $\mathcal{T}$ test whether $ \gamma(X)+ Lap(1/\epsilon)>\frac{\log(1/\tilde{\delta})}{\epsilon}$.
%   $\cA$ returns $\cM_\phi(X)$ if $\mathcal{T}$ passes and otherwise returns $\perp$. Then $\cA$ is $(\epsilon+\hat{\epsilon}, \hat{\delta}+ \tilde{\delta})$-DP. 
% \end{theorem}
% \blue{Look at this again! Are the deltas correct?}

Generalized PTR is a \emph{strict} generalization of Propose-Test-Release. For some function $f$, define  $\cM_{\phi}$ and $\cT$ as follows:
\begin{align*}
&\mathcal{M}_{\phi}(X) = f(X) + \text{Lap}(\phi); \\
&\cT(X) = 
\begin{cases}
0 & \text{ if\:\: } \cD_{\beta}(X) + \text{Lap}\left(\frac{1}{\epsilon}\right) > \frac{\log(1/\delta)}{\epsilon},\\
1 & \text{ otherwise.} \\
\end{cases}
\end{align*}
Notice that our choice of parameterization is $\phi = \frac{\beta}{\epsilon}$, where $\phi$ is the scale of the Laplace noise. In other words, we know from Example~\ref{examp:lap_mech} that $\epsilon_{\phi}(X) > \epsilon$ exactly when $\Delta_{LS}(X) > \beta$.

For noise-adding mechanisms such as the Laplace mechanism, the sensitivity is proportional to the privacy loss (in both the global and local sense, i.e. $\Delta_{GS} \propto \epsilon$ and $\Delta_{LS} \propto \epsilon(X)$). Therefore for these mechanisms the only difference between privately testing the local sensitivity (Algorithm~\ref{alg:classic_ptr}) and privately testing the data-dependent DP (Theorem~\ref{thm:gen_ptr}) is a change of parameterization.
%\vspace{-0.5em}
\subsection{Limitations of local sensitivity}
%\vspace{-1em}
Why do we want to generalize PTR beyond noise-adding mechanisms? Compared to classic PTR, the generalized PTR framework allows us to be more flexible in both the type of test conducted and also the type of mechanism whose output we wish to release. For many mechanisms, the local sensitivity either does not exist or is only defined for specific data-dependent quantities (e.g., the sensitivity of the score function in the exponential mechanism) rather than the mechanism's output. 

The following example illustrates this issue.

\begin{example}[Private posterior sampling]\label{exp: posterior}
Let $\cM: \cX\times \cY \to \Theta $ be a private posterior sampling   mechanism~\citep{minami2016differential,wang2015privacy,gopi2022private} for approximately minimizing $F_{X}(\theta)$.

$\cM$ samples $\theta \sim P(\theta)\propto e^{-\gamma(F_X(\theta)+ 0.5\lambda ||\theta||^2)}$ with parameters $\gamma, \lambda$. Note that $\gamma,\lambda$ cannot be appropriately chosen for this mechanism to satisfy DP without going through a sensitivity calculation of $\arg\min F_X(\theta)$. In fact, the global and local sensitivity of the minimizer is unbounded even in linear regression problems, i.e when $F_X(\theta) = \frac{1}{2}||y-X\theta||^2.$ 
%The local sensitivity $\Delta:=||P_{X,y}(\theta)-P_{X', y'}(\theta)||$ is not well-defined  for the sampling algorithm, thus the standard PTR is not applicable.  

\end{example}
Output perturbation algorithms do work for the above problem when we regularize, but they are known to be suboptimal in theory and in practice \citep{chaudhuri2011differentially}. In Section~\ref{subsections:private_linear_regression} we demonstrate how to apply generalized PTR to achieve a data-adaptive posterior sampling mechanism.

Even in the cases of noise-adding mechanisms where PTR seems to be applicable, it does not lead to a tight privacy guarantee. Specifically, by an example of privacy amplification by post-processing (Example~\ref{exp: binary_vote} in the appendix), we demonstrate that the local sensitivity does not capture all sufficient statistics for data-dependent privacy analysis and thus is loose.

% We've just seen an example where the local sensitivity is unbounded. In other cases, the local sensitivity exists and can be tested efficiently -- \emph{but} isn't sufficiently descriptive to make full use of data-dependent properties. Our next example demonstrates that considering only the local sensitivity leads to a loose privacy analysis.

% \begin{example}\label{exp: binary_vote}
% Consider a binary class voting problem: $n$ users vote for a binary class $\{0, 1\}$ and the goal is to output the class that is supported by the majority. Let $n_i$ denote the number of people who vote for the class $i$. We consider the report-noisy-max mechanism: $\cM(X): \text{argmax}_{i \in [0,1]} n_i(X)+ Lap(1/\epsilon)$,where $1/\epsilon$ denotes the scale of Laplace noise.

% For a dataset $X$, the gap between number of votes in each class is $t(X) = |n_0(X) - n_1(X)|$. Observe that if $t(X) > 1$, adding or removing one user will not change the majority class -- in other words, for a neighboring dataset $X'$ we have $\text{argmax}_{i \in [0,1]} n_i(X) = \text{argmax}_{i \in [0,1]} n_i(X')$. The local sensitivity $\Delta_{LS}(X)$ of the report-noisy-max mechanism is therefore $0$ if $t(X) >1$. 
% \end{example}
\label{subsections:local_sensitivity}
%\vspace{-1em}
\subsection{Which $\phi$ to propose}
%\vspace{-0.5em}
% upper bound apporach and hyperparameter tuning
The main limitation of generalized PTR is that one needs to ``propose'' a good guess of parameter $\phi$.  Take the example of $\phi$ being the noise level in a noise-adding mechanism. Choosing too small a $\phi$ will result in a useless output $\perp$, while choosing too large a $\phi$ will add more noise than necessary. Finding this 'Goldilocks' $\phi$ might require trying out many different possibilities -- each of which will consume privacy budget.

% Calculating the data-dependent DP offers a systematic way to find this 'Goldilocks' $\phi$.

\label{subsections:which_phi}

This section introduces a method to jointly tune privacy parameters (e.g., noise scale) along with parameters related only to the utility of an algorithm (e.g., learning rate or batch size in stochastic gradient descent) -- while avoiding the $\perp$ output.

Algorithm~\ref{alg: parameter_ptr} takes a list of parameters as input, runs generalized PTR with each of the parameters, and returns the output with the best utility. We show that the privacy guarantee with respect to $\epsilon$ is independent of the number of $\phi$ that we try.  

Formally, let $\phi_1, ..., \phi_k$ be a set of hyper-parameters and $\tilde{\theta}_i \in\{\perp, \text{Range}(\cM)\}$ denotes the output of running generalized PTR on a private dataset $X$ with $\phi_i$. 
Let $X_{val}$ be a public validation set and $q(\tilde{\theta}_i)$ be the score of evaluating $\tilde{\theta}_i$ with $X_{val}$ (e.g., validation accuracy). The goal is to select a pair $(\tilde{\theta}_i$, $\phi_i)$ such that DP model $\tilde{\theta}_i$ maximizes the validation score.

The generalized PTR framework with privacy calibration is described in Algorithm~\ref{alg: parameter_ptr}. The privacy guarantee of Algorithm~\ref{alg: parameter_ptr} is an application of \citet{liu2019private}.

\begin{algorithm}[H]
	\caption{PTR with hyper-parameter selection}
	\label{alg: parameter_ptr}
	\begin{algorithmic}[1]
	   \STATE {\textbf{Input}:  Privacy budget per PTR algorithm ($\epsilon^*, \delta^*$), cut-off $T$, parameters $\phi_{1:k}$, flipping probability $\tau$ and validation score function $q(\cdot)$. } 
		\STATE {Initialize the set $S=\varnothing$.}
		\STATE{Draw $G$ from a geometric distribution $\cD_\tau$ and let $\hat{T}=\text{min}(T, G)$.}
		\FOR{i = 1 ,..., $\hat{T}$}
		\STATE{ pick a random $\phi_i$ from $\phi_{1:k}$.}
		\STATE{evaluate $\phi_i$: $(\tilde{\theta}_i, q(\tilde{\theta}_i))\gets$ Algorithm~\ref{alg:gen_ptr}($\phi_i, (\epsilon^*, \delta^*)$).}
		\STATE {$S \gets S \cup \{\tilde{\theta}_i, q(\tilde{\theta}_i)\}$.}
		%\STATE{with probability $\gamma$, we output the highest scored candidate from $S$ and halt.}
		\ENDFOR %\vspace{-1mm}
	\STATE{Output the highest scored candidate from $S$.}
	\end{algorithmic}
	%\vspace{-1mm}
\end{algorithm}

\begin{theorem}[ Theorem 3.4 \citet{liu2019private} ]
Fix any $\tau \in [0, 1], \delta_2>0$ and let $T =\frac{1}{\tau} \log \frac{1}{\delta_2}$. If each oracle access to Algorithm~\ref{alg:gen_ptr} is $(\epsilon^*, \delta^*)$-DP, then
Algorithm~\ref{alg: parameter_ptr} is $(3\epsilon^* + 3\sqrt{2\delta^*}, \sqrt{2\delta^*} T +\delta_2 )$-DP.%\yw{What if it does not reach $T$? Also in the algorithm it has $k$ right?  Also, did you define which score it is?}
\end{theorem}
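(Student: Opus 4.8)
This theorem is an instantiation of the private-selection result of \citet{liu2019private} (their Theorem~3.4), so the plan has two parts: (i) check that its hypothesis is met by Algorithm~\ref{alg: parameter_ptr}, and (ii) invoke the result; for completeness I also sketch why the selection bound holds. For (i), the only thing to verify is that every oracle access to Algorithm~\ref{alg:gen_ptr} made inside the loop is $(\epsilon^*,\delta^*)$-DP. Each iteration first picks $\phi_i$ from the list $\phi_{1:k}$ in a way that does not depend on the private data $X$, and then runs Algorithm~\ref{alg:gen_ptr} with privacy budget $(\epsilon^*,\delta^*)$; by Theorem~\ref{thm:gen_ptr} (with the internal split $\epsilon+\hat\epsilon=\epsilon^*$, $\delta+\hat\delta+\delta'=\delta^*$) that call is $(\epsilon^*,\delta^*)$-DP, mixing over the data-independent draw of $\phi_i$ preserves $(\epsilon^*,\delta^*)$-DP, and appending the score $q(\tilde{\theta}_i)$ is post-processing since $q$ depends only on the public validation set $X_{val}$. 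Hence the hypothesis holds and the theorem follows by quoting \citet{liu2019private}.

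For (ii), I would reprove the selection bound in three moves. Fix neighbors $X\simeq X'$. First, trade the $\delta^*$ slack for pure DP on a high-probability event: a standard conversion lemma turns an $(\epsilon^*,\delta^*)$-DP mechanism into one that, except on an output event of probability at most $\sqrt{2\delta^*}$, behaves like an $(\epsilon^*+\sqrt{2\delta^*})$-DP (pure-DP) mechanism. Write $\hat\epsilon:=\epsilon^*+\sqrt{2\delta^*}$. Since $\hat T=\min(T,G)\le T$ always, a union bound over the at most $T$ runs charges at most $\sqrt{2\delta^*}\,T$ to the final $\delta$, and it remains to analyze the idealized process in which each of the $\hat T$ runs is $\hat\epsilon$-DP.

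Second, amplification by a factor of $3$ for ``best of a random number of runs.'' Let $G\sim\cD_\tau$ be the untruncated geometric count; for a score threshold $s$, let $a$ (resp.\ $a'$) be the probability that a single run outputs a value scoring $\ge s$ on $X$ (resp.\ $X'$). Running $G$ independent copies and keeping the highest-scoring one outputs a value scoring $\ge s$ with probability $f(a)=a/\big(a+\tau(1-a)\big)$ (a short computation summing the geometric series), and likewise $f(a')$ on $X'$. From $e^{-\hat\epsilon}a'\le a\le e^{\hat\epsilon}a'$ together with the elementary inequality $f(a)\le e^{3\hat\epsilon}f(a')$ of \citet{liu2019private}, the selected output is $3\hat\epsilon$-DP with respect to threshold (upward-closed-in-score) events. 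Third, truncation: $\hat T=\min(T,G)$ differs from the untruncated process only when $G>T$, an event of probability $(1-\tau)^T\le e^{-\tau T}=\delta_2$ by the choice $T=\tfrac1\tau\log\tfrac1{\delta_2}$, so it contributes an extra $\delta_2$ to $\delta$. Combining the three moves gives $(3\hat\epsilon,\ \sqrt{2\delta^*}\,T+\delta_2)=(3\epsilon^*+3\sqrt{2\delta^*},\ \sqrt{2\delta^*}\,T+\delta_2)$-DP.

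The main obstacle in a fully self-contained argument is the step I glossed over in the second move: extending the factor-$3$ amplification from threshold events to \emph{arbitrary} measurable output sets (here the output is a trained model, not a scalar). \citet{liu2019private} handle this by decomposing an arbitrary set along the score ordering and adding an independent uniform tie-breaker so that ``argmax'' is almost surely well defined; making the constants in the approximate-to-pure-DP conversion match the stated $\sqrt{2\delta^*}$ terms is the other place requiring care. Since the paper only needs the black-box statement, part (i) is really all that the proof must supply.
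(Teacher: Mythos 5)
Your proposal is correct and matches the paper's treatment: the paper presents this result as a black-box application of Theorem~3.4 of \citet{liu2019private} and supplies no proof of its own, so verifying the hypothesis --- that each loop iteration constitutes an $(\epsilon^*,\delta^*)$-DP oracle call, which your part (i) does via Theorem~\ref{thm:gen_ptr}, the data-independent choice of $\phi_i$, and post-processing on the public validation set --- is all that is required. Your additional sketch of the selection bound (the approximate-to-pure-DP conversion, the geometric computation $f(a)=a/(a+\tau(1-a))$, the factor-$3$ amplification, and the truncation cost $(1-\tau)^T\le\delta_2$) is a faithful outline of \citet{liu2019private}'s argument and correctly flags the one step, extending from threshold events to arbitrary measurable output sets, that requires their tie-breaking construction.
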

The theorem implies that one can try a random number of $\phi$ while paying a constant $\epsilon$.
In practice, we can roughly set $\tau = \frac{1}{10k}$ so that the algorithm is likely to test all $k$ parameters. We emphasize that the privacy and the utility guarantee (stated in the appendix) is not our contribution. But the idea of applying generalized PTR to enforce a uniform DP guarantee over all choices of parameters with a data-dependent analysis is new, and in our opinion, significantly broadens the applicability to generic hyper-parameter tuning machinery from \citet{liu2019private}.
%\vspace{-0.5em}
\subsection{Construction of the DP test}

Classic PTR uses the Laplace mechanism to construct a differentially private upper bound of $\cD_{\beta}(X)$, the distance from input dataset $X$ to the closest dataset whose local sensitivity exceeds the proposed bound $\beta$. The tail bound of the Laplace distribution then ensures that if $\cD_{\beta}(X) = 0$ (i.e. if $\Delta_{LS}(X) > \beta$), then the output will be released with only a small probability $\delta$.

The following theorem shows that we could instead use a differentially private upper bound of the data-dependent DP $\epsilon_{\phi}(X)$ in order to test whether to run the mechanism $\cM_{\phi}$.

\begin{theorem}[Generalized PTR with private upper bound]\label{exp: upperbound}
Suppose we have a differentially private upper bound of $\epsilon_\phi(X)$ w.r.t. $\delta$ such that with probability at least $1-\delta'$, $\epsilon_{\phi }^P(X)>\epsilon_{\phi}(X)$. Further suppose we have an $(\hat{\epsilon}, \hat{\delta})$-DP test $\cT$ such that
\begin{align*}
    T(X) &= \begin{cases}
    1 & \text{ if } \epsilon_{\phi }^P(X) < \epsilon, \\
    0 & \text{ otherwise}.
    \end{cases}
    %\vspace{-1em}
\end{align*}
%\yw{Why do you need it to hold for all $\tilde{\delta}$? I thought we only need $\hat{\delta}$}

Then Algorithm~\ref{alg:gen_ptr} is $(\epsilon +\hat{\epsilon}, \delta +\hat{\delta} + \delta')$-DP. %\vspace{-0.5em}
\end{theorem}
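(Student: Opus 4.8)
The plan is to show that Theorem~\ref{exp: upperbound} is a direct corollary of Theorem~\ref{thm:gen_ptr}, so the work is almost entirely in verifying that the test $\cT$ built from the private upper bound $\epsilon_\phi^P(X)$ satisfies the hypotheses of the general statement. First I would check the privacy of $\cT$ itself: by assumption $\epsilon_\phi^P(X)$ is released by an $(\hat\epsilon,\hat\delta)$-DP mechanism, and $\cT(X)$ is obtained by thresholding this quantity at $\epsilon$, i.e. $\cT(X) = \mathbbm{1}[\epsilon_\phi^P(X) < \epsilon]$. Since thresholding is a deterministic post-processing of the DP release, $\cT$ inherits $(\hat\epsilon,\hat\delta)$-DP, which is exactly the test-privacy requirement in Theorem~\ref{thm:gen_ptr}.

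Next I would verify the false-positive condition. Theorem~\ref{thm:gen_ptr} requires that whenever $\epsilon_\phi(X) > \epsilon$, the test outputs $1$ with probability at most $\delta'$ (and $0$ otherwise). Here $\cT(X) = 1$ means $\epsilon_\phi^P(X) < \epsilon$. But by the validity of the upper bound, with probability at least $1-\delta'$ we have $\epsilon_\phi^P(X) > \epsilon_\phi(X) > \epsilon$, which forces $\cT(X) = 0$. Hence $\Pr[\cT(X) = 1] \le \delta'$ on the event $\epsilon_\phi(X) > \epsilon$, matching the hypothesis (with the roles of $0$ and $1$ as in the theorem statement). I should be slightly careful about the direction of the inequality and the boundary case $\epsilon_\phi^P(X) = \epsilon$, but in all cases $\epsilon_\phi^P(X) \ge \epsilon$ implies $\cT(X)=0$, so the bound $\delta'$ on the false-positive probability still holds.

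With both hypotheses of Theorem~\ref{thm:gen_ptr} verified for this particular $\cT$, I would then simply invoke that theorem to conclude that Algorithm~\ref{alg:gen_ptr}, instantiated with this test, is $(\epsilon+\hat\epsilon,\ \delta+\hat\delta+\delta')$-DP, which is the claimed bound.

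The main obstacle here is not really a technical difficulty but a bookkeeping one: making sure the event ``$\epsilon_\phi^P$ is a valid upper bound'' (probability $\ge 1-\delta'$) is correctly identified with the ``false positive'' event of the abstract test, and that the randomness of $\epsilon_\phi^P(X)$ is the only randomness in $\cT$ so that the $(\hat\epsilon,\hat\delta)$ accounting is clean and not double-counted against the $\delta'$ term. A secondary subtlety is that Theorem~\ref{thm:gen_ptr} as stated only controls the test's behavior when $\epsilon_\phi(X) > \epsilon$ and says nothing when $\epsilon_\phi(X) \le \epsilon$; I would note that this is fine because in that regime running $\cM_\phi$ is genuinely $(\epsilon,\delta)$-DP (by definition of data-dependent DP), so no constraint on $\cT$ is needed there — exactly the third case in the proof sketch of Theorem~\ref{thm:gen_ptr}.
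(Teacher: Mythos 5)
Your proposal is correct and matches the paper's (implicit) argument: the paper treats this theorem as a direct instantiation of Theorem~\ref{thm:gen_ptr}, with the test's $(\hat{\epsilon},\hat{\delta})$-DP following from post-processing of the private release $\epsilon_\phi^P(X)$ and the $\delta'$ false-positive bound following from the validity of the upper bound exactly as you describe. Your extra care about the boundary case and about not double-counting the randomness of $\epsilon_\phi^P$ is sound and only makes explicit what the paper leaves unstated.
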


%\vspace{-0.5em}

In Section~\ref{subsections:pate}, we demonstrate that one can upper bound the data-dependent DP through a modification of the smooth sensitivity framework applied on $\epsilon_\phi(X)$. Moreover, in Section~\ref{subsections:private_linear_regression} we provide a direct application of Theorem~\ref{exp: upperbound} with private linear regression by making use of the per-instance DP technique~\citep{wang2017per}.

The applications in Section~\ref{sections:applications} are illustrative of two distinct approaches to constructing the DP test for generalized PTR:

\begin{enumerate}
%\vspace{-0.5em}
    \item Private sufficient statistics release (used in the private linear regression example of Section~\ref{subsections:private_linear_regression}) specifies the data-dependent DP as a function of the dataset and privately releases each data-dependent component. %\vspace{-0.5em}
    \item The second approach (used in the PATE example of Section~\ref{subsections:pate}) uses the smooth sensitivity framework to privately release the data-dependent DP as a whole, and then construct a high-confidence test using the Gaussian mechanism. %\vspace{-0.5em}
    
\end{enumerate}
%\vspace{-0.5em}
These two approaches cover most of the scenarios arising in data-adaptive analysis. For example, in the appendix we demonstrate the merits of generalized PTR in handling data-adaptive private generalized linear models (GLMs)  using private sufficient statistics release. Moreover, sufficient statistics release together with our private hyper-parameter tuning (Algorithm~\ref{alg: parameter_ptr}) can be used to construct data-adaptive extensions of DP-PCA and Sparse-DP-ERM (see details in the future work section).

\label{subsections:test_construction}

\label{section:genPTR}

%\vspace{-1em}
 \section{Applications}
% \vspace{-1.em}
\label{sections:applications}
In this section, we put into action our approaches to construct the DP test and provide applications in private linear regression and PATE.

%  \vspace{-2mm}
% \subsection{``Which $\phi$ to propose?'' --- Generalized PTR with privacy calibration}
%     \vspace{-2mm}

\subsection{Private Linear Regression}

\begin{figure*}[t]
	\centering	
	\subfigure[Bike dataset]{
	\includegraphics[width=0.48\textwidth]{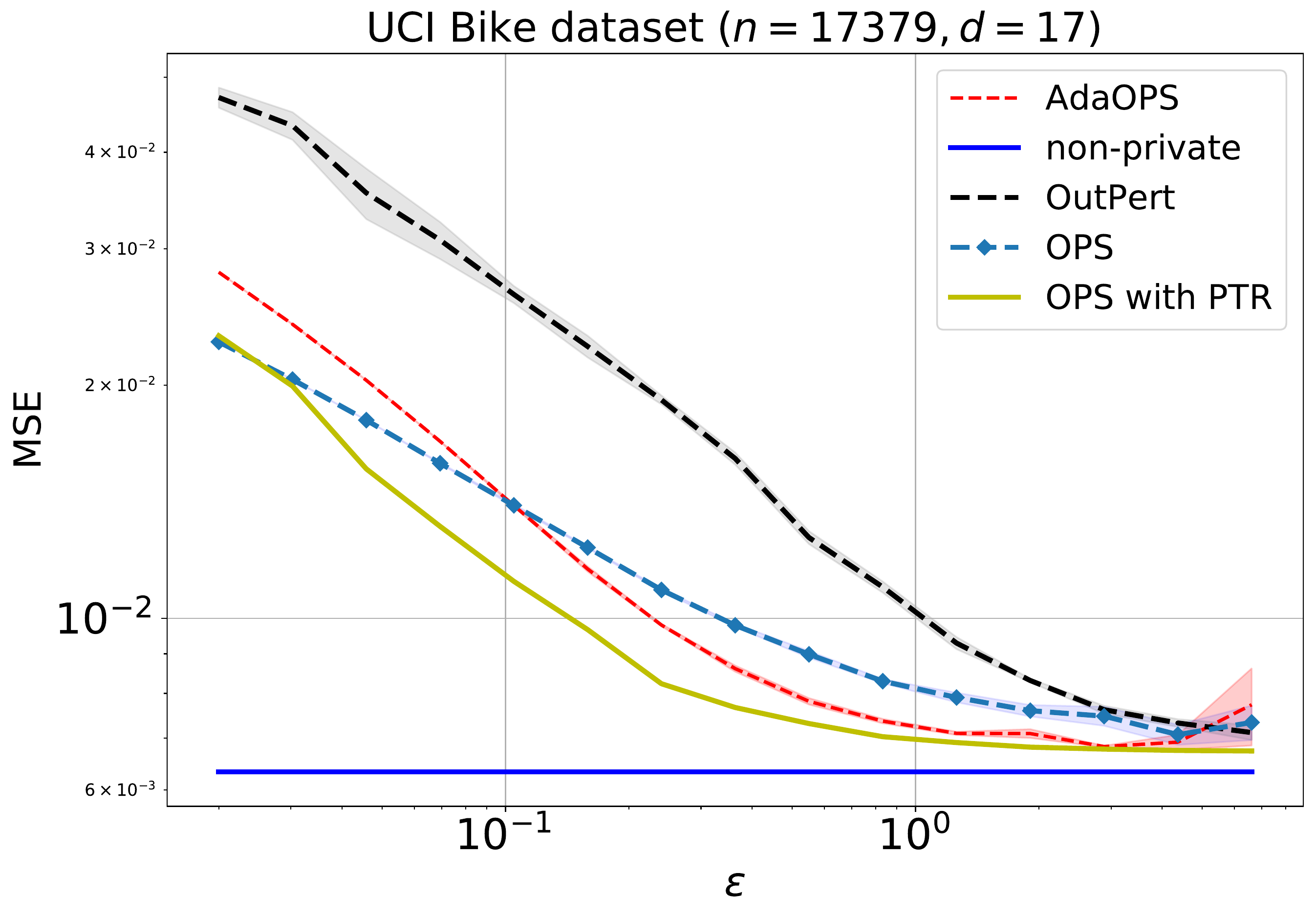}\label{fig:bike}}
	\subfigure[Elevators dataset]{
	\includegraphics[width=0.46\textwidth]{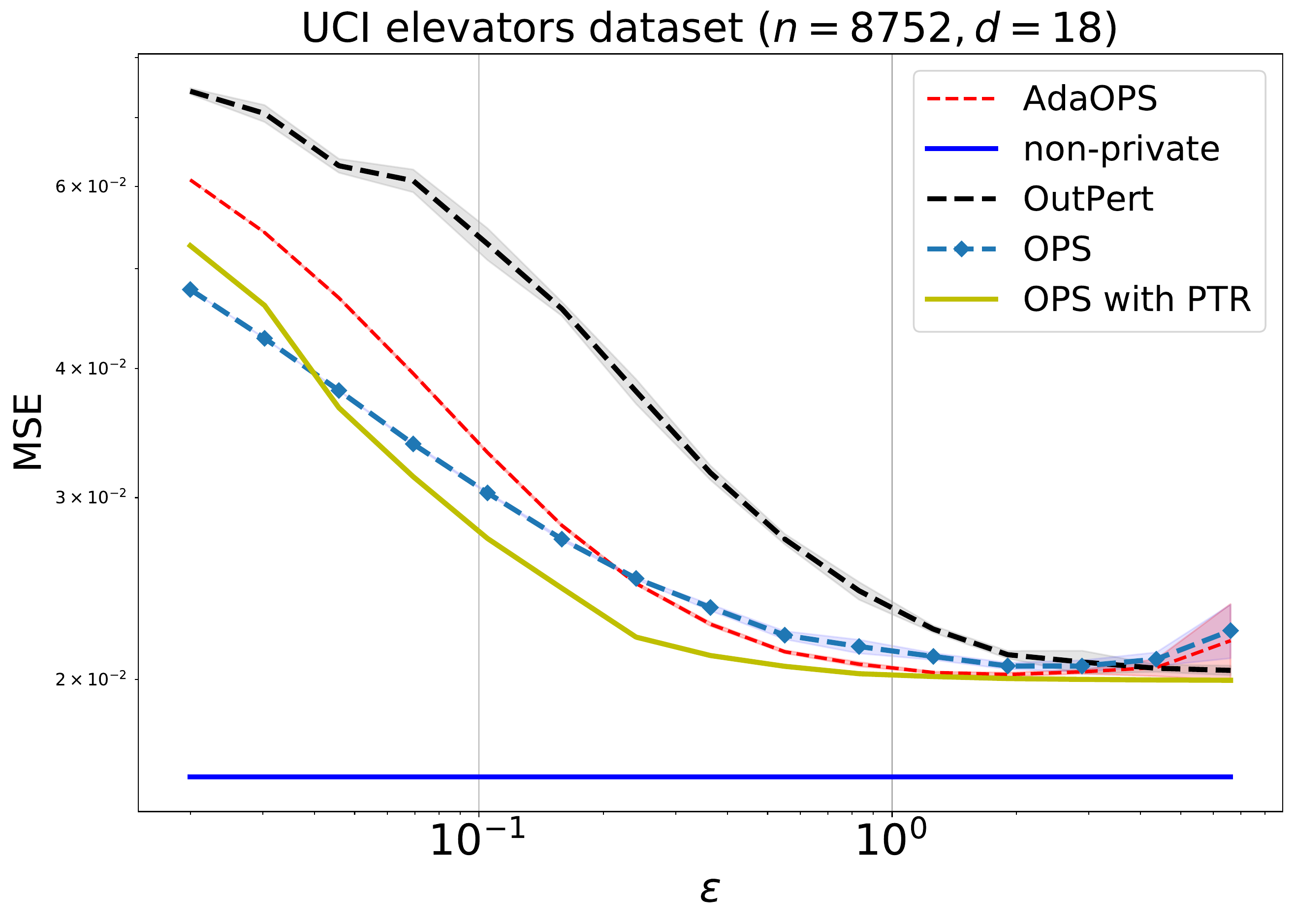}\label{fig:ele}}
	%\vspace{-0.5em}
	\caption{Differentially private linear regression algorithms on UCI datasets. $y$-axis reports the MSE error with confidence intervals. $\epsilon$ is evaluated with $\delta = 1e-6$. }
	\label{fig:selection}%\vspace{-0.5em}
\end{figure*}

\begin{theorem}[\citep{wang2017per}]\label{thm: per}
For input data $X \in \mathcal{X}$ and $Y \in \mathcal{Y}$, define the following:
\begin{itemize}
    %\vspace{-1em}
    \item $\lambda_{\min}(X)$ denotes the smallest eigenvalue of $X^TX$; %\vspace{-1em}
    \item $||\theta_\lambda^*||$ is the magnitude of the solution $\theta_\lambda^* = (X^TX +\lambda I )^{-1}X^T Y$;%\vspace{-1em}
    \item and $L(X, \mathbf{y}):=||\cX||(||\cX||||\theta_\lambda^*||+||\cY||)$ is the local Lipschitz constant, denoted $L$ in brief.%\vspace{-1em}
\end{itemize}
For brevity, denote $\lambda^* = \lambda + \lambda_{\min}(X)$.
The algorithm used in Example~\ref{exp: posterior} with  parameter $\phi = (\lambda, \gamma)$ obeys $(\epsilon_{\phi}(Z), \delta)$ data-dependent DP for each dataset $ Z = (X, Y)$  with $\epsilon_{\phi}(Z)$ equal to
\[\sqrt{\frac{\gamma L^2 \log(2/\delta)}{\lambda^*}} + \frac{\gamma L^2}{2(\lambda^*  +||\cX||^2)}+ \frac{1 + \log(2/\delta)||\cX||^2}{2(\lambda^*)}.
\]
\end{theorem}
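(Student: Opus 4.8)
The plan is to bound the per-instance privacy loss of the Gibbs posterior directly, taking advantage of the fact that for the least-squares objective the posterior is \emph{exactly} a Gaussian. Fix neighboring datasets $Z=(X,Y)$ and $Z'=(X',Y')$ differing only in the $j$-th datapoint, write $G_Z(\theta)=F_Z(\theta)+\tfrac{\lambda}{2}\|\theta\|^2$, and recall that $\cM_\phi(Z)$ has density $\propto e^{-\gamma G_Z(\theta)}$. The privacy loss at an output $\theta$ factorizes as
\begin{align*}
\mathcal L(\theta):=\log\frac{\Pr[\cM_\phi(Z)=\theta]}{\Pr[\cM_\phi(Z')=\theta]}=-\gamma\,\Delta(\theta)+\log\E_{\theta'\sim\cM_\phi(Z)}\!\left[e^{\gamma\Delta(\theta')}\right],
\end{align*}
where $\Delta(\theta)=G_Z(\theta)-G_{Z'}(\theta)=f_{z_j}(\theta)-f_{z'_j}(\theta)$ is the difference of the two single-datapoint losses that change (the ridge penalty is data-independent and cancels, and $F$ is a sum over datapoints, so only one term survives). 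Data-dependent $(\epsilon_\phi(Z),\delta)$-DP then reduces to showing $\Pr_{\theta\sim\cM_\phi(Z)}[\mathcal L(\theta)>\epsilon_\phi(Z)]\le\delta$, together with the symmetric statement with $Z$ and $Z'$ interchanged. Note that this identity involves only the law of $\cM_\phi(Z)$ and the function $\Delta$ --- never $\theta_\lambda^*(Z')$ or the spectrum of $X'^\top X'$ --- which is exactly why the final bound can be phrased purely in terms of $\lambda^*=\lambda+\lambda_{\min}(X)$ and $L$.

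Next I would record the two structural facts that drive the estimate. First, since $G_Z$ is quadratic with Hessian $X^\top X+\lambda I\succeq\lambda^* I$, we have $\cM_\phi(Z)=\mathcal N\!\big(\theta_\lambda^*,\,(\gamma(X^\top X+\lambda I))^{-1}\big)$ exactly; in particular its covariance is $\preceq\tfrac{1}{\gamma\lambda^*}I$. Second, $\Delta$ is itself quadratic, so its Taylor expansion around $\theta_\lambda^*$ is exact: $\Delta(\theta_\lambda^*+u)=\Delta(\theta_\lambda^*)+\langle\nabla\Delta(\theta_\lambda^*),u\rangle+\tfrac12 u^\top Hu$ with $H=x_{z_j}x_{z_j}^\top-x_{z'_j}x_{z'_j}^\top$, where $\|H\|\le\|\cX\|^2$ and $\|\nabla\Delta(\theta_\lambda^*)\|$ is controlled by the local Lipschitz constant $L$ --- each residual obeys $|x^\top\theta_\lambda^*-y|\le\|\cX\|\|\theta_\lambda^*\|+\|\cY\|$, so each per-datapoint gradient at $\theta_\lambda^*$ has norm at most $L$. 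Substituting $u\sim\mathcal N(0,(\gamma(X^\top X+\lambda I))^{-1})$ turns $\log\E[e^{\gamma\Delta}]$ into a Gaussian integral of $\exp(\text{linear}+\text{quadratic})$, which has the closed form (shifted-mean quadratic form) plus $\big(-\tfrac12\log\det(I-(X^\top X+\lambda I)^{-1}H)\big)$; bounding the spectrum of $(X^\top X+\lambda I)^{-1}H$ via $\|H\|\le\|\cX\|^2$ and the leverage-score estimate $x_{z_j}^\top(X^\top X+\lambda I)^{-1}x_{z_j}\le\min\{1,\|\cX\|^2/\lambda^*\}<1$ (this is the only place where $\lambda>0$, equivalently $\lambda^*>\lambda_{\min}(X)$, is genuinely used) keeps the $\log\det$ finite and yields the two deterministic contributions $\tfrac{\gamma L^2}{2(\lambda^*+\|\cX\|^2)}$ and $\tfrac{1}{2\lambda^*}$ (the refined denominator $\lambda^*+\|\cX\|^2$ coming from the same leverage refinement applied to $\nabla\Delta(\theta_\lambda^*)^\top(X^\top X+\lambda I)^{-1}\nabla\Delta(\theta_\lambda^*)$).

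Then I would control the stochastic term $-\gamma\Delta(\theta)$ by concentration under the posterior. The linear part $\langle\nabla\Delta(\theta_\lambda^*),u\rangle$ is centered Gaussian with variance at most $L^2/(\gamma\lambda^*)$, so its sub-Gaussian lower tail gives a deviation of order $\sqrt{L^2\log(2/\delta)/(\gamma\lambda^*)}$; multiplying by $\gamma$ produces the leading term $\sqrt{\gamma L^2\log(2/\delta)/\lambda^*}$. The quadratic part $\tfrac12 u^\top Hu$ is a signed $\chi^2$-type variable; using $\|H\|\le\|\cX\|^2$ and covariance $\preceq\tfrac1{\gamma\lambda^*}I$, its sub-exponential lower tail contributes on the order of $\|\cX\|^2\log(2/\delta)/(\gamma\lambda^*)$, i.e. after multiplying by $\gamma$ the $\|\cX\|^2\log(2/\delta)/(2\lambda^*)$ piece of the third term; the $\log(2/\delta)$ (rather than $\log(1/\delta)$) is the price of a union bound over the linear and quadratic deviations. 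Adding the deterministic log-MGF bound and the two tail bounds, and observing that the $\gamma\,\E[\Delta]$ pieces cancel between $-\gamma\E[\Delta]$ and the log-MGF, gives exactly the three-term $\epsilon_\phi(Z)$. Finally, the reverse direction (the second inequality of Definition~\ref{def:data_dep_dp}) follows by the same argument applied to $-\Delta$ under $\cM_\phi(Z')$; one checks the swapped posterior has comparable curvature so the stated bound still dominates, or simply invokes the $Z'$-analogue of the bound.

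The main obstacle is the bookkeeping in the second step: landing precisely on the three stated terms --- especially the refined denominator $\lambda^*+\|\cX\|^2$ and the additive $\tfrac1{2\lambda^*}$ --- requires evaluating the Gaussian log-MGF and the $\log\det$ correction exactly and pushing through the leverage-score / rank-one perturbation bounds, rather than settling for a cruder $\sqrt{\cdot}+O(\cdot)$ estimate. An alternative, less computational route is to invoke the general ``privacy without sensitivity'' analysis of posterior sampling for strongly convex, locally Lipschitz objectives; then the obstacle shifts to checking that the least-squares per-datapoint loss satisfies the local-Lipschitz hypothesis with constant $L$ on the high-probability support of $\cM_\phi(Z)$ and that the error incurred by its unbounded quadratic growth is absorbed into the lower-order $\|\cX\|^2/\lambda^*$ terms.
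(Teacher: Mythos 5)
Your proposal is correct and follows essentially the same route as the paper's own analysis: the paper imports Theorem~\ref{thm: per} from \citet{wang2017per} but proves its GLM generalization in the appendix by writing the privacy loss of the (exactly) Gaussian posterior as a log-determinant ratio plus a difference of quadratic forms, then bounding these with the rank-one determinant/leverage-score lemma, the strong-convexity bound $\|\theta^*-\tilde{\theta}^*\|\leq L/\lambda^*$, and the Gaussian tail of the half-normal $\|\theta-\theta^*\|_{H}$ with a union bound producing the $\log(2/\delta)$ factors. Your Gibbs-posterior identity $\mathcal{L}(\theta)=-\gamma\Delta(\theta)+\log\E_{\theta'}\left[e^{\gamma\Delta(\theta')}\right]$ is an equivalent reformulation of that normalizing-constant decomposition in the exactly-Gaussian linear-regression case, and the remaining work you flag (landing on the exact denominators $\lambda^*+\|\cX\|^2$ and $2\lambda^*$) is precisely the bookkeeping carried out in the cited proof.
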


Notice that the data-dependent DP is a function of $(\lambda_{\min}, L, ||\theta_\lambda^*||, \lambda, \gamma)$, where $(\lambda_{\min}, L, ||\theta_\lambda^*||)$ are data-dependent quantities. One can apply the generalized PTR framework as in the following example.
\begin{example}[OPS with PTR] We demonstrate here how to apply generalized PTR to the one-posterior sample (OPS) algorithm, a differentially private mechanism which outputs one sample from the posterior distribution of a Bayesian model with bounded log-likelihood.
\label{examp:ops}
\begin{itemize}
    %\vspace{-1em}
    \item Propose $\phi=(\lambda, \gamma)$.    %\vspace{-0.5em}
    \item Based on $(\lambda, \gamma)$, differentially privately release $\lambda_{min}, ||\theta_\lambda^*||, L$ 
    with privacy budget $(\epsilon, \delta/2)$.    %\vspace{-0.5em}
    \item Condition on a high probability event (with probability at least $1-\delta/2$) of $\lambda_{min}, ||\theta_\lambda^*||, L$, test if $\blue{\epsilon_{\phi}^P(X)}$  is smaller than the predefined privacy budget $(\hat{\epsilon}, \hat{\delta})$, where $\epsilon_\phi^P(X)$ denotes the sanitized data-dependent DP.   % \vspace{-1em}
    \item Based on the outcome of the test, decide whether to release $\theta \propto e^{-\frac{\gamma}{2}||Y-X\theta||^2 + \lambda||\theta||^2}$.   % \vspace{-1em}
\end{itemize}
\begin{theorem}
The algorithm outlined in Example~\ref{examp:ops} satisfies $(\epsilon+ \hat{\epsilon}, \delta + \hat{\delta})$-DP. 
\end{theorem}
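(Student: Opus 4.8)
The plan is to recognize the algorithm of Example~\ref{examp:ops} as a concrete instantiation of generalized PTR with a private upper bound (Theorem~\ref{exp: upperbound}) and then carry out the privacy accounting. Take $\mathcal{M}_\phi(X)$ to be the posterior-sampling mechanism that draws $\theta \propto e^{-\frac{\gamma}{2}||Y - X\theta||^2 + \lambda||\theta||^2}$; by Theorem~\ref{thm: per} it satisfies $(\epsilon_\phi(Z), \delta)$ data-dependent DP for $Z = (X,Y)$, where $\epsilon_\phi(Z)$ is the stated closed form in $(\lambda_{\min}(X), L, ||\theta_\lambda^*||, \lambda, \gamma)$. The role of the DP test $\cT$ is played by the private release of $(\lambda_{\min}(X), ||\theta_\lambda^*||, L)$ with budget $(\epsilon, \delta/2)$, followed by the deterministic comparison $\epsilon_\phi^P(X) < \hat\epsilon$; the budget charged to the posterior sample once the test passes is the predefined $(\hat\epsilon, \hat\delta)$.

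The first substantive step is to show that $\epsilon_\phi^P(X)$, obtained by substituting the privately released estimates into the formula of Theorem~\ref{thm: per}, is a valid high-probability upper bound on $\epsilon_\phi(Z)$. The key is monotonicity: the displayed expression for $\epsilon_\phi(Z)$ is nondecreasing in $L$ and nonincreasing in $\lambda^* = \lambda + \lambda_{\min}(X)$ (each of the three summands is), while $L = ||\mathcal{X}||(||\mathcal{X}||\,||\theta_\lambda^*|| + ||\mathcal{Y}||)$ is nondecreasing in $||\theta_\lambda^*||$. Hence, if we release a one-sided lower confidence bound for $\lambda_{\min}(X)$ and one-sided upper confidence bounds for $||\theta_\lambda^*||$ and $L$, each valid with probability at least $1-\delta/4$ so that a union bound gives joint validity with probability at least $1-\delta/2$, then plugging them into the formula yields $\epsilon_\phi^P(X) \ge \epsilon_\phi(Z)$ with probability at least $1-\delta/2$. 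Releasing these bounds privately within budget $(\epsilon, \delta/2)$ is the ``private sufficient statistics'' route: under the boundedness assumptions $||\mathcal{X}||, ||\mathcal{Y}|| < \infty$, the Gram matrix $X^TX$ and cross term $X^TY$ have bounded sensitivity, $\lambda_{\min}(X^TX)$ has sensitivity $||\mathcal{X}||^2$, and $||\theta_\lambda^*||$ and $L$ are post-processing of noisy versions of these statistics.

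With $\epsilon_\phi^P(X)$ in hand, the remaining step is a direct appeal to Theorem~\ref{exp: upperbound} (equivalently Theorem~\ref{thm:gen_ptr}) with failure probability $\delta' = \delta/2$, test budget $(\epsilon, \delta/2)$, and mechanism budget $(\hat\epsilon, \hat\delta)$, following the three-case decomposition of its proof sketch: (i) when the test rejects, $\perp$ is post-processing of the $(\epsilon, \delta/2)$-DP statistics release; (ii) the event that the test accepts while $\epsilon_\phi(Z) > \hat\epsilon$ is contained in the failure of the upper bound, hence has probability at most $\delta/2$; (iii) when the test accepts and $\epsilon_\phi(Z) \le \hat\epsilon$, the posterior sample is $(\hat\epsilon, \hat\delta)$ data-dependent DP and composes with the $(\epsilon, \delta/2)$-DP release. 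Summing gives $(\epsilon + \hat\epsilon,\ \delta/2 + \delta/2 + \hat\delta) = (\epsilon + \hat\epsilon, \delta + \hat\delta)$-DP.

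I expect the main obstacle to be the second step --- certifying that $\epsilon_\phi^P(X)$ really upper-bounds $\epsilon_\phi(Z)$. Beyond checking the monotonicity of the (mildly intricate) expression in all of its arguments, including the indirect dependence through $L$, the delicate point is the private lower bound on $\lambda_{\min}(X)$: a naive additive-noise estimate can be negative or can make $\lambda^*$ too small, which would blow up the $1/\sqrt{\lambda^*}$ and $1/\lambda^*$ terms and render the ``upper bound'' vacuous, so one must either truncate the noisy eigenvalue at a safe floor or release $X^TX$ and take the smallest eigenvalue of a suitably corrected estimate. Once a finite, valid $\epsilon_\phi^P(X)$ is guaranteed, the rest is the bookkeeping above.
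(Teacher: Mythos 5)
Your overall architecture is exactly the paper's: instantiate Theorem~\ref{exp: upperbound} with the sufficient-statistics release as the $(\epsilon,\delta/2)$-DP test, take $\delta'=\delta/2$ for the failure of the one-sided bounds, use monotonicity of the expression in Theorem~\ref{thm: per} in $L$ and $\lambda^*$, and compose to get $(\epsilon+\hat\epsilon,\ \delta/2+\delta/2+\hat\delta)$. Your worry about the noisy eigenvalue going negative is also well-placed and is handled in the paper exactly as you suggest: the released $\tilde\lambda_{\min}$ is shifted down by a tail bound and truncated at $0$, which remains a valid lower bound since $\lambda_{\min}(X^TX)\geq 0$, and $\lambda^*\geq\lambda>0$ keeps the bound finite.

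The one place you deviate, and where your sketch has a gap if taken literally, is the private upper bound on $\|\theta_\lambda^*\|$ and $L$. You propose perturbing $X^TX$ and $X^TY$ and treating $\|\theta_\lambda^*\|$ as post-processing; but the norm of the estimator computed from \emph{noisy} sufficient statistics is not automatically a high-probability \emph{upper bound} on $\|\theta_\lambda^*\|$ --- you would additionally need a perturbation bound on $\|\tilde\theta-\theta_\lambda^*\|$ and to add that slack, which you do not supply. The paper instead releases the scalar $\log(\|\cY\|+\|\cX\|\,\|\theta_\lambda^*\|)$ directly via the Gaussian mechanism, using the fact (Lemma~\ref{lem: adaops_ls}) that this log-quantity has local sensitivity $\log\bigl(1+\|\cX\|^2/(\lambda_{\min}+\lambda)\bigr)$; crucially this sensitivity is itself data-dependent, so the release is done \emph{adaptively after} and conditioned on the already-released lower bound $\tilde\lambda_{\min}$, and both $L$ and $\|\theta_\lambda^*\|$ are then post-processing of that single scalar. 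This sequential structure (eigenvalue first, then the log-Lipschitz statistic whose sensitivity uses the eigenvalue) is the substantive content of the paper's proof and is what your second step needs to be replaced with; the rest of your accounting is correct as written.
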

\end{example}

%\vspace{-0.5em}
The main idea of the above algorithm boils down to  privately releasing all data-dependent quantities in data-dependent DP, constructing high-probability confidence intervals of these quantities, and then deciding whether to run the mechanism $\cM$ with the proposed parameters. We defer the details of the privacy calibration of data-dependent quantities to the appendix. 

One may ask why we cannot directly tune privacy parameters ($\lambda, \gamma$) based on the sanitized data-dependent DP. This is because, in many scenarios, data-dependent quantities depend on the choice of privacy parameters, e.g., $||\theta_\lambda^*||$ is a complicated function of $\lambda$. Thus, the optimization on $\lambda$ becomes a circular problem --- to solve $\lambda$, we need to sanitize $||\theta_
\lambda^*||$, which needs to choose a $\lambda$ to begin with. Alternatively, generalized PTR provides a clear and flexible framework to test the validity of privacy parameters adapted to the dataset. 

\begin{remark}
The above ``circular'' issue is even more serious for generalized linear models (GLMs) beyond linear regression. The data-dependent DP there involves a local strong-convexity parameter, a complex function of the regularizer $\lambda$  and we only have zeroth-order access to. In the appendix, we demonstrate how to apply generalized PTR to provide a generic solution to a family of private GLMs where the link function satisfies a self-concordance assumption.
\end{remark}
%\vspace{-0.5em}

We next apply Algorithm~\ref{alg: parameter_ptr} for Example~\ref{examp:ops}  with UCI regression datasets. Standard z-scoring is applied and each data point is normalize with a Euclidean norm of 1.  We consider $(60\%,10\%,30\%)$ splits for training, validation and testing test.

\textbf{Baselines}
\begin{itemize}
%\vspace{-0.5em}
    \item Output Perturbation (Outpert)~\citep{chaudhuri2011differentially}: $\theta=(X^TX + \lambda I)^{-1}X^T\bf{y}$. Release $\hat{\theta} = \theta + \bf{b}$ with an appropriate $\lambda$, where $\bf{b}$ is a Gaussian random vector.%\vspace{-1em}
    \item Posterior sampling (OPS). Sample $\hat{\theta}\sim P(\theta)\propto e^{-\gamma(F(\theta)+ 0.5 \lambda ||\theta||^2)}$ with parameters $\gamma, \lambda$.%\vspace{-1em}
    \item Adaptive posterior sampling (AdaOPS)~\citep{wang2018revisiting}. Run OPS with $(\lambda, \gamma)$ chosen adaptively according to the dataset.%\vspace{-1em}
\end{itemize}
%\yw{\cite{gopi2022private} had a nice and clean Gaussian DP bound for OPS. I thought if we can privately test the strong convexity from the dataset, it should satisfy a stronger Gaussian DP. How are privacy accounted in the experiments now?} % use composition.  Each oracle call is (eps, delta). split (epsilon/2, delta/2) to release data dependent quantity, use the remaining to tune gamma for an input lambda.
Outpert and OPS serve as two non-adaptive baselines. In particular, we consider OPS-Balanced~\citep{wang2018revisiting}, which chooses $\lambda$ to minimize a data-independent upper bound of empirical risk and dominates other OPS variants. 
AdaOPS is one state-of-the-art algorithm for adaptive private regression, which  automatically chooses $\lambda$ by minimizing an upper bound of the data-dependent empirical risk.

We implement OPS-PTR as follows: 
propose a list of $\lambda$ through grid search (we choose $k=30$ and $\lambda$ ranges from $[2.5, 2.5^{10}]$ on a logarithmic scale); instantiate Algorithm~\ref{alg: parameter_ptr} with $\tau =0.1k$, $T=\frac{1}{\tau} \log(1/\delta_2)$ and  $\delta_2 = 1/2 \delta$;
calibrate $\gamma$ to meet the privacy requirement for each $\lambda$. sample $\hat{\theta}$ using $(\lambda,\gamma)$ and return the one with the best validation accuracy. Notice that we use a ``no $\perp$'' variant of Algorithm~\ref{alg:gen_ptr} as the calibration of $\gamma$ is clear given a fixed $\lambda$ and privacy budget (see more details in the appendix). We can propose various combinations of $(\lambda, \gamma)$ for more general applications.

 Figure~\ref{fig:selection}  demonstrates how the MSE error of the linear regression algorithms varies with the privacy budget $\epsilon$. OutPert suffers from the large global sensitivity of output $\theta$. OPS performs well but does not benefit from the data-dependent quantities. AdaOPS is able to adaptively choose $(\lambda, \gamma)$ based on the dataset, but suffers from the estimation error of the data-dependent empirical risk. On the other hand, OPS-PTR selects a  $(\lambda, \gamma)$ pair that minimizes the empirical error on the validation set directly, and the privacy parameter $\gamma$ adapts to the dataset thus achieving the best result.

\label{subsections:private_linear_regression}

\subsection{PATE}
\label{subsections:pate}
%\vspace{-1em}
In this section, we apply the generalized PTR framework to solve an open problem from the Private Aggregation of Teacher Ensembles (PATE) \citep{papernot2017, papernot2018scalable} --- privately publishing the entire model through privately releasing data-dependent DP losses. Our algorithm makes use of the smooth sensitivity framework ~\citep{nissim2007smooth} and the Gaussian mechanism to construct a high-probability test of the data-dependent DP. The one-dimensional statistical nature of data-dependent DP enables efficient computations under the smooth sensitivity framework. Thus, this approach is generally applicable for other private data-adaptive analysis beyond PATE.

% shall we put this remark at the end of the section?

PATE  is a knowledge transfer framework for model-agnostic private learning. In this framework,  an ensemble of teacher models is trained on the disjoint private data and uses the teachers' aggregated consensus answers to supervise the training of a ``student'' model agnostic to the underlying machine-learning algorithms. By publishing only the aggregated answers and by the careful analysis of the ``consensus'', PATE has become a practical technique in recent private model training.

The tight privacy guarantee of PATE heavily relies on a delicate data-dependent DP analysis, for which the authors of PATE use the smooth sensitivity framework  to privately publish the data-dependent privacy cost. However, it remains an open problem to show that the released model is DP under  data-dependent analysis. Our generalized PTR resolves this gap by  carefully testing a private upper bound of the data-dependent privacy cost. Our algorithm is fully described in Algorithm~\ref{alg: pate_ptr}, where the modification over the original PATE framework is highlighted in blue.

% put a pate framework here
 
Algorithm~\ref{alg: pate_ptr} takes the input of privacy budget $(\epsilon',\hat{\epsilon}, \delta)$, unlabeled public data $x_{1:T}$ and $K$ teachers' predictions on these data.  The parameter $\epsilon$ denotes the privacy cost of publishing the data-dependent DP and $\epsilon'$ is the predefined privacy budget for testing. $n_j(x_i)$ denotes the the number of teachers that agree on label $j$ for $x_i$ and $C$ denotes the number of classes. The goal is to privately release a list of plurality outcomes --- $\text{argmax}_{j\in[C]} n_j(x_i)$ for $i \in[T]$ --- and use these outcomes to supervise the training of a ``student'' model in the public domain. The parameter $\sigma_1$ denotes the noise scale for the vote count.

In their privacy analysis,  \citet{papernot2018scalable} compute the data-dependent $\rdp_{\sigma_1}(\alpha, X)$ of labeling the entire group of student queries.  $\rdp_{\sigma_1}(\alpha, X)$ can be orders of magnitude smaller than its data-independent version if there is a strong agreement among teachers. Note that $\rdp_{\sigma_1}(\alpha, X)$  is a function of the RDP order $\alpha$ and the dataset $X$, analogous to our  Definition~\ref{def:data_dep_dp}  but subject to RDP~\citep{mironov2017renyi}.
% We cite the comparison between two approaches for a special case.

\begin{theorem}[\citep{papernot2018scalable}]\label{thm: dep_gau}
If the top three vote counts of $x_i$ are $n_1>n_2>n_3$ and $n_1 -n_2, n_2-n_3\gg \sigma_1$, then the data-dependent RDP of releasing $\text{argmax}_j \{n_j +\cN(0, \sigma_1^2)\}$
satisfies $(\alpha, \exp\{-2\alpha/{\sigma_1^2}\}/\alpha)$-RDP and the data-independent RDP (using the Gaussian mechanism) satisfies $(\alpha, \frac{\alpha}{\sigma_1^2})$-RDP. 
\end{theorem}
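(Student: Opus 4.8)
The statement bundles two essentially independent claims, and I would prove them separately, starting with the easy data-independent bound. Releasing $\arg\max_j\{n_j+\cN(0,\sigma_1^2)\}$ is a post-processing of the Gaussian mechanism that outputs the whole noisy histogram $(n_j + Z_j)_{j\in[C]}$ with i.i.d.\ $Z_j\sim\cN(0,\sigma_1^2)$. A neighboring dataset changes at most one teacher's vote --- moving it from some class $a$ to some class $b$ --- so the histogram changes by $e_b - e_a$, which has $\ell_2$-norm $\sqrt2$. The Gaussian-mechanism R\'enyi bound then gives $\frac{\alpha\,\|e_b-e_a\|_2^2}{2\sigma_1^2}=\frac{\alpha}{\sigma_1^2}$, and since RDP is closed under post-processing, the argmax output is also $(\alpha,\alpha/\sigma_1^2)$-RDP. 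That is a two-line argument.

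The data-dependent bound is where the work lies. After relabeling, let $o^\star = 1 = \arg\max_j n_j$ be the noiseless plurality, and write $P=\cM(X)$, $Q=\cM(X')$ for the output laws on $[C]$ at a neighbor $X'$. The first step is to control the probability that the noise overturns the plurality, $q := \Pr[\cM(X)\ne 1]$: by a union bound over losing classes and the fact that $Z_j - Z_1 \sim \cN(0,2\sigma_1^2)$,
\[
q \;\le\; \sum_{j\ge 2}\Pr[\,Z_j - Z_1 > n_1 - n_j\,] \;\le\; \sum_{j\ge 2}\tfrac12\exp\!\Big(-\tfrac{(n_1-n_j)^2}{4\sigma_1^2}\Big)\;\le\;\tfrac{C-1}{2}\exp\!\Big(-\tfrac{(n_1-n_2)^2}{4\sigma_1^2}\Big),
\]
which is negligible under $n_1 - n_2 \gg \sigma_1$. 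Re-running this estimate at $X'$ gives essentially the same bound: passing to a neighbor moves at most one vote, so the relevant gaps drop by at most $2$ --- immaterial next to $\sigma_1$ --- and the hypotheses $n_1-n_2,\ n_2-n_3 \gg \sigma_1$ ensure class $1$ is still the comfortable winner at $X'$ (even when a vote is transferred out of class $1$). Hence $P$ and $Q$ both place mass $\ge 1-q$ on the common outcome $1$.

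The final step converts this concentration into a R\'enyi-divergence bound. I would expand
\[
e^{(\alpha-1)D_\alpha(P\|Q)} \;=\; P(1)^\alpha Q(1)^{1-\alpha}\;+\;\sum_{j\ge 2}P(j)^\alpha Q(j)^{1-\alpha},
\]
bound the first term by $(1-q)^{-(\alpha-1)}$ (using $P(1)\le 1$ and $Q(1)\ge 1-q$), and bound the second term --- whose total $P$-mass is $\le q$ --- using the data-independent ratio control on the at-most-two coordinates that differ, producing a contribution of the form $q\,e^{c(\alpha-1)/\sigma_1^2}$. Taking $\log$ and dividing by $\alpha-1$ yields $D_\alpha(P\|Q)\le\frac{1}{\alpha-1}\log\!\big((1-q)^{-(\alpha-1)}+q\,e^{c(\alpha-1)/\sigma_1^2}\big)$, and $D_\alpha(Q\|P)$ is handled symmetrically, so the RDP follows. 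The main obstacle is precisely this bookkeeping: pinning down the constants so that the Gaussian-tail bound on $q$ together with the per-coordinate ratio control reproduces the displayed form $\exp\{-2\alpha/\sigma_1^2\}/\alpha$ --- this is where the ``$\gg$'' hypotheses do real work, absorbing lower-order and gap-dependent terms. The qualitative conclusion we actually use downstream is robust to these details: with strong teacher agreement the data-dependent RDP is exponentially smaller than the worst-case $\alpha/\sigma_1^2$.
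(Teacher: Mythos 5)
The paper never proves this statement --- it is imported verbatim from \citet{papernot2018scalable}, and the only machinery the paper reproduces is the precise version in the appendix (the bound $\frac{1}{\alpha-1}\log\bigl((1-\tilde q)A^{\alpha-1}+\tilde q B^{\alpha-1}\bigr)$ with $A=(1-\tilde q)/\bigl(1-(\tilde qe^{\epsilon_2})^{(\mu_2-1)/\mu_2}\bigr)$ and $B=e^{\epsilon_1}/\tilde q^{1/(\mu_1-1)}$). So your proposal has to be judged against that cited argument. Your data-independent half is correct and complete: $\ell_2$-sensitivity $\sqrt2$ of the histogram, Gaussian RDP $\alpha/\sigma_1^2$, post-processing. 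Your setup for the data-dependent half --- union-bound control of $q=\Pr[\cM(X)\ne 1]$, the observation that $P$ and $Q$ both concentrate on the common plurality, and the split of $\sum_j P(j)^\alpha Q(j)^{1-\alpha}$ into the dominant term and a tail --- is exactly the skeleton of the cited proof.

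The gap is in how you close the tail term. You bound $\sum_{j\ge2}P(j)^\alpha Q(j)^{1-\alpha}$ by ``the data-independent ratio control on the at-most-two coordinates that differ, producing a contribution of the form $q\,e^{c(\alpha-1)/\sigma_1^2}$.'' No such uniform pointwise ratio control exists: Gaussian report-noisy-max does not satisfy pure DP, and for a losing class $j$ the ratio $P(j)/Q(j)$ is a ratio of Gaussian tail probabilities whose logarithm grows linearly in the gap $n_1-n_j$ (e.g.\ for $C=2$ it behaves like $e^{(n_1-n_2)/\sigma_1^2}$ up to constants), not like a dataset-independent $e^{c/\sigma_1^2}$. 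The whole point of the cited argument is that this blow-up must be played off against the smallness of $q\approx e^{-(n_1-n_2)^2/(4\sigma_1^2)}$; Papernot et al.\ do this by invoking the mechanism's RDP at two auxiliary orders $\mu_1,\mu_2$ (chosen as functions of $\tilde q$) to get the term $\tilde q\,B^{\alpha-1}$ with $B=e^{\epsilon_1}/\tilde q^{1/(\mu_1-1)}$, and only after optimizing $\mu_1,\mu_2$ and using $n_1-n_2\gg\sigma_1$ does the expression collapse to the advertised $e^{-2\alpha/\sigma_1^2}/\alpha$ scaling. As written, your tail bound is not merely loose bookkeeping --- the quantity you invoke is unbounded over outcomes, so that step would fail; you need either the explicit gap-dependent tail-ratio estimate or the two-order moment trick to make the tail term provably subdominant.
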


\begin{algorithm}[H]
	\caption{PATE with generalized PTR}
	\label{alg: pate_ptr}
	\begin{algorithmic}[1]
		\STATE {\textbf{Input}: Unlabeled public data $x_{1:T}$, aggregated teachers prediction $n(\cdot)$, privacy parameter $\hat{\epsilon},\epsilon', \delta$, noisy parameter $\sigma_1.$}
		\STATE{Set  $\alpha =\frac{2\log(2/\delta)}{\hat{\epsilon}}+1 $, $\sigma_s = \sigma_2 = \sqrt{\frac{3\alpha + 2}{\hat{\epsilon}}}, \delta_2 = \delta/2, $ smoothness parameter $ \beta = \frac{0.2}{\alpha}$.}
		\STATE{Compute noisy labels: ${y_i}^p \gets \text{argmax}_{j\in [C]}\{n_j(x_i)+ \cN(0, \sigma_1^2)\} $ for all $i \in[1:T]$.}
		\STATE{$\rdp_{\sigma_1}(\alpha, X)\gets$ data-dependent RDP at the $\alpha$-th order.}
		\STATE {$SS_\beta(X) \gets$ the smooth sensitivity  of $\rdp_{\sigma_1}^{\text{upper}} (\alpha,X)$.}
		\STATE \blue{Privately release $\mu:=  \log(SS_\beta(X)) + \beta \cdot \cN(0, \sigma_2^2)+\sqrt{2\log(2/\delta_2)}\cdot \sigma_2 \cdot \beta$}
		\STATE{\blue{ $\rdp_{\sigma_1}^{\text{upper}}(\alpha)\gets $ an upper bound of data-dependent RDP through Lemma~\ref{lem: upperbound}}}.
		\STATE{\blue{$\epsilon_{\sigma_1} \gets$ DP guarantee converted from $\rdp_{\sigma_1}^{\text{upper}}(\alpha).$ }}
		%\gets \rdp_{\sigma_1} (\alpha, X)+ SS_{\beta}(X) \cdot \cN(0, \sigma_s^2)+ \blue{\sigma_s \cdot \sqrt{2\log(2/\delta_2)}e^{\mu} }+ \frac{\log(2/\delta)}{\alpha-1}$.}
		\STATE{\blue{If $\epsilon'\geq \epsilon_{\sigma_1}$ \textbf{return} a student model trained using $(x_{1:T}; y_{1:T}^p)$}}.
		\STATE{\blue{\text{Else return} $\perp$.}}
		%\STATE \blue{\textbf{if} $  \epsilon_{\sigma_1}(\alpha)^p + \frac{\log(2/\delta)}{\alpha-1} \leq \epsilon$ \textbf{return} a student model trained using $(x_{1:T}; y_{1:T}^p)$ \textbf{else return} $\perp.$}
	\end{algorithmic}
	%\vspace{-1mm}
\end{algorithm}

However, $\rdp_{\sigma_1}(\alpha, X)$ is data-dependent and thus cannot be revealed. The authors therefore privately publish the data-dependent RDP using the smooth sensitivity framework~\citep{nissim2007smooth}.
\begin{comment}
\begin{definition}[Smooth Sensitivity]\label{def: smooth}
	Given the smoothness parameter $\beta$, a $\beta$-smooth sensitivity of $f(X)$ is defined as 
	\[SS_\beta(X):= \max_{d\geq 0} e^{-\beta d} \cdot \max_{\tilde{X'}: dist(X, \tilde{X'})\leq d} \Delta_{LS}(\tilde{X}')\]
\end{definition}
\end{comment}
The smooth sensitivity calculates a smooth upper bound on the local sensitivity of $\rdp_{\sigma_1}(\alpha, X)$, denoted as $SS_\beta(X)$, such that $SS_\beta (X) \leq e^\beta SS_\beta(X')$ for any neighboring dataset $X$ and $X'$. By adding Gaussian noise scaled by the smooth sensitivity (i.e., release $\epsilon_{\sigma_1}(\alpha, X)+ SS_\beta(X)\cdot \cN(0, \sigma_s^2)$), the privacy cost is safely published.

Unlike most noise-adding mechanisms, the standard deviation $\sigma_s$ cannot be published since $SS_\beta(X)$ is a data-dependent quantity. Moreover, this approach fails to provide a valid privacy guarantee of the noisy labels obtained through the PATE algorithm, as the published privacy cost could be smaller than the real privacy cost.
Our solution in Algorithm~\ref{alg: pate_ptr} looks like the following:
%\vspace{-2mm}
\begin{itemize}
    	%\vspace{-0.5em}
    \item Privately release an upper bound of the smooth sensitivity $SS_\beta(X)$ with $e^{\mu}$. % \vspace{-0.5em}
    \item Conditioned on a high-probability event of $e^{\mu}$, publish the data-dependent RDP with $\rdp_{\sigma_1}^{\text{upper}}(\alpha)$.
   % \vspace{-0.5em}
    \item Convert $\rdp_{\sigma_1}^{\text{upper}}(\alpha)$ back to the standard DP guarantee using RDP to DP conversion at $\delta/2$.  %\vspace{-0.5em}
    \item Test if the converted DP is above the predefined budget $\epsilon'$.
% \vspace{0.5em}
\end{itemize}
%\vspace{-2em}
The following lemma states that $\rdp_{\sigma_1}^{\text{upper}}(\alpha)$ is a valid upper bound of the data-dependent RDP.

\begin{figure*}[t]
	\centering	
	\subfigure[High consensus and strong data-dependent DP ]{
	\includegraphics[width=0.46\textwidth]{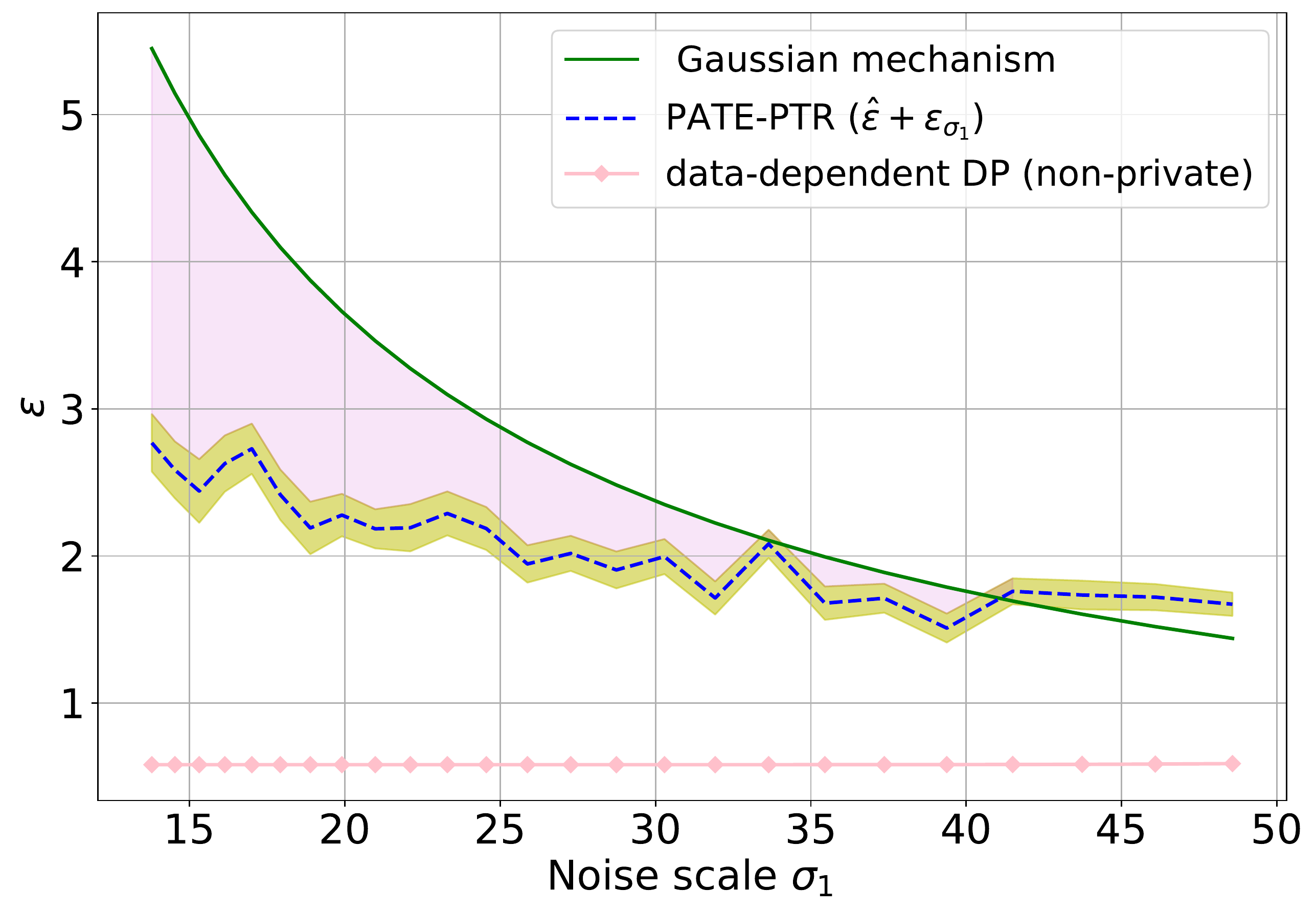}\label{fig:high}}
	\subfigure[Low consensus and low data-dependent DP]{
	\includegraphics[width=0.46\textwidth]{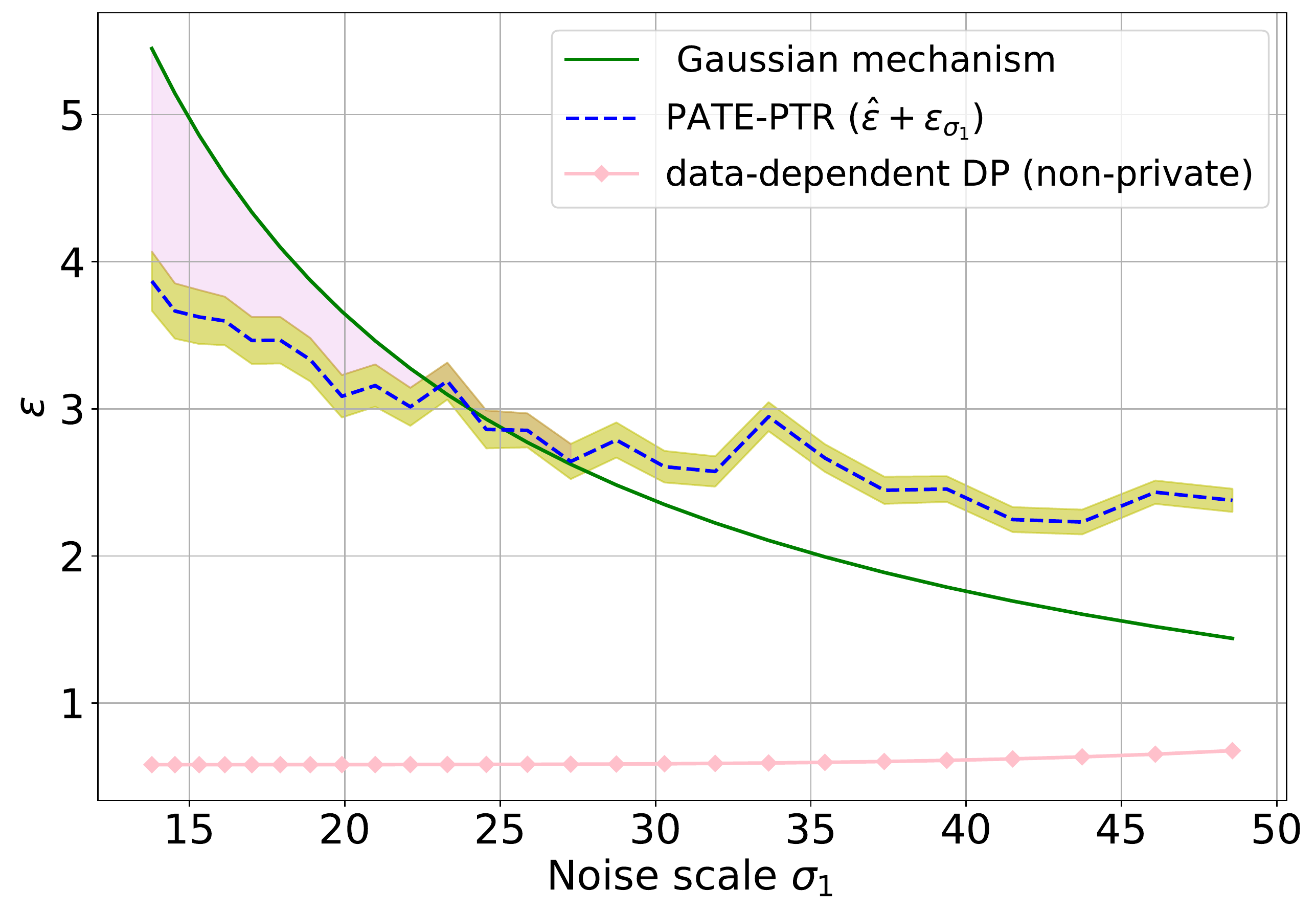}\label{fig:low}} %\vspace{-3mm}
	\caption{Privacy and utility tradeoffs with PATE. 
	When $\sigma_1$ is aligned, three algorithms provide the same utility. $y$-axis plots the privacy cost of labeling $T=200$ public data with $\delta = 10^{-5}$.
   The left figure considers the high-consensus case, where the data-adaptive analysis is preferred.} 
	\label{fig: exp_pate}
	%\vspace{-1em}
\end{figure*}

\begin{lemma}[Private upper bound of data-dependent  RDP]\label{lem: upperbound}
We are given a RDP function $\rdp(\alpha, X)$ and a $\beta$-smooth sensitivity bound $SS(\cdot)$ of $\rdp(\alpha, X)$. Let $\mu$ (defined in Algorithm~\ref{alg: pate_ptr}) denote the private release of $\log(SS_\beta(X))$. Let the $(\beta, \sigma_s, \sigma_2)$-GNSS mechanism be 
\[\scriptstyle
\rdp^{\text{upper}}(\alpha):=\rdp(\alpha, X) + SS_\beta(X) \cdot \cN(0, \sigma_s^2) + \sigma_s \sqrt{2\log(\frac{2}{\delta_2}) } e^{\mu} \]
	 Then, the release of $\rdp^{\text{upper}}(X)$ satisfies $(\alpha, \frac{3\alpha +2}{2\sigma_s^2})$-RDP for all $1<\alpha < \frac{1}{2\beta}$; w.p. at least $1-\delta_2$, $\rdp^{\text{upper}}(\alpha)$ is an upper bound of $\rdp(\alpha, X)$.
%\vspace{-2mm}
\end{lemma}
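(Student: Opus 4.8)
The plan is to prove Lemma~\ref{lem: upperbound} in two independent parts: (i) the RDP guarantee of the release $\rdp^{\text{upper}}(\alpha)$, and (ii) the high-probability upper-bound claim. For part (i), the key observation is that $\rdp^{\text{upper}}(\alpha)$ is obtained by adding to the (data-dependent) quantity $\rdp(\alpha, X)$ two noise terms: a Gaussian scaled by $SS_\beta(X)$, and a deterministic shift $\sigma_s\sqrt{2\log(2/\delta_2)}\,e^{\mu}$ that itself depends on the privately released $\mu$. So I would decompose the mechanism as a composition: first release $\mu$, then release $\rdp(\alpha,X) + SS_\beta(X)\cdot\cN(0,\sigma_s^2)$ conditioned on $\mu$ (the deterministic shift is post-processing once $\mu$ is fixed). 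The release of $\mu = \log(SS_\beta(X)) + \beta\cdot\cN(0,\sigma_2^2) + \sqrt{2\log(2/\delta_2)}\cdot\sigma_2\cdot\beta$ is a Gaussian mechanism applied to $\log SS_\beta(X)$, whose global sensitivity is $\beta$ by the smoothness property $SS_\beta(X)\le e^\beta SS_\beta(X')$; hence it is $(\alpha, \alpha\beta^2/(2\sigma_2^2))$-RDP. The release of $\rdp(\alpha,X) + SS_\beta(X)\cdot\cN(0,\sigma_s^2)$ is the (Gaussian) smooth-sensitivity mechanism, which by the RDP analysis of smooth sensitivity (e.g. the bound carried over from \citet{nissim2007smooth} adapted to RDP) is $(\alpha, \frac{\alpha}{\sigma_s^2} + \text{correction})$-RDP for $\alpha < \frac{1}{2\beta}$. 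Composing the two and plugging in $\sigma_s=\sigma_2$ and $\beta = 0.2/\alpha$ should collapse the sum into the stated $\frac{3\alpha+2}{2\sigma_s^2}$; I expect this to be where the specific constants ($0.2$, the $+2$) get consumed, so the main arithmetic obstacle is matching the smooth-sensitivity RDP bound's constants precisely to the claimed closed form.

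For part (ii), the high-probability statement, I would condition on two events. First, the Gaussian noise $\beta\cdot\cN(0,\sigma_2^2)$ in $\mu$ exceeds $-\sqrt{2\log(2/\delta_2)}\cdot\sigma_2\cdot\beta$ with probability at least $1-\delta_2/2$ (one-sided Gaussian tail), which by the definition of $\mu$ guarantees $e^\mu \ge SS_\beta(X)$, i.e. $e^\mu$ is a valid upper bound on the smooth sensitivity. Second, on the event that $e^\mu \ge SS_\beta(X)$, the deterministic shift $\sigma_s\sqrt{2\log(2/\delta_2)}\,e^\mu$ dominates $\sigma_s\sqrt{2\log(2/\delta_2)}\,SS_\beta(X)$, and a one-sided tail bound on the term $SS_\beta(X)\cdot\cN(0,\sigma_s^2)$ says it is at least $-SS_\beta(X)\cdot\sqrt{2\log(2/\delta_2)}\,\sigma_s$ with probability at least $1-\delta_2/2$. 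Adding these, on the intersection (probability at least $1-\delta_2$ by a union bound) we get $\rdp^{\text{upper}}(\alpha) \ge \rdp(\alpha,X) + SS_\beta(X)\cdot\cN(0,\sigma_s^2) + \sigma_s\sqrt{2\log(2/\delta_2)}\,SS_\beta(X) \ge \rdp(\alpha,X)$, as desired. (A minor subtlety: the lemma as stated writes $\rdp^{\text{upper}}(X)$ in the conclusion but $\rdp^{\text{upper}}(\alpha)$ in the definition; I would treat these as the same object and phrase the conclusion in terms of $\rdp^{\text{upper}}(\alpha)$.)

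One more point to handle carefully: the definition uses $\rdp(\alpha, X)$ and $SS_\beta(X)$ directly, but the actual release in Algorithm~\ref{alg: pate_ptr} uses $\rdp_{\sigma_1}^{\text{upper}}(\alpha, X)$ and its smooth sensitivity; I would remark that $SS(\cdot)$ here is a smooth sensitivity \emph{bound} (not necessarily the exact smooth sensitivity) of whatever RDP surrogate is being released, and the argument only uses the two defining properties --- $SS_\beta(X) \le e^\beta SS_\beta(X')$ for neighbors, and $SS_\beta(X)$ upper-bounds the local sensitivity of the released RDP quantity --- so nothing else about its form matters. The hardest part overall is part (i): getting the RDP composition constants to land exactly on $\frac{3\alpha+2}{2\sigma_s^2}$ requires the precise RDP version of the Gaussian-smooth-sensitivity bound (valid only for $\alpha < \frac{1}{2\beta}$, which is exactly the stated range), and I would lean on the corresponding lemma in \citet{papernot2018scalable} / \citet{nissim2007smooth} rather than re-deriving the tail integral from scratch.
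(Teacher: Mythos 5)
Your decomposition is essentially the paper's own proof: part (ii) is verbatim the paper's argument (condition on the event $e^\mu \ge SS_\beta(X)$, which holds with probability $1-\delta_2/2$ by a one-sided Gaussian tail; on that event a second one-sided tail bound on $SS_\beta(X)\cdot\cN(0,\sigma_s^2)$ plus a union bound finishes), and part (i) is the same composition of a Gaussian mechanism on $\log SS_\beta(X)$ with the smooth-sensitivity-scaled Gaussian release, for which the paper invokes Theorem~23 of \citet{papernot2018scalable} to get $(\alpha, \frac{\alpha+1}{\sigma_s^2})$-RDP when $\alpha < \frac{1}{2\beta}$. One arithmetic slip to fix: the noise added to $\log SS_\beta(X)$ is $\beta\cdot\cN(0,\sigma_2^2)$, i.e.\ standard deviation $\beta\sigma_2$, so with global sensitivity $\beta$ the RDP of releasing $\mu$ is $\frac{\alpha\beta^2}{2(\beta\sigma_2)^2} = \frac{\alpha}{2\sigma_2^2}$, not $\frac{\alpha\beta^2}{2\sigma_2^2}$; composing then gives $\frac{\alpha}{2\sigma_2^2} + \frac{\alpha+1}{\sigma_s^2} = \frac{3\alpha+2}{2\sigma_s^2}$ once $\sigma_2=\sigma_s$, and the choice $\beta=0.2/\alpha$ plays no role in this arithmetic --- it only guarantees the validity condition $\alpha<\frac{1}{2\beta}$.
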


The proof (deferred to the appendix) makes use of the facts that: (1) the log of $SS_\beta(X)$ has a bounded global sensitivity $\beta$ through the definition of smooth sensitivity; (2) releasing $\rdp_{\sigma_1}(\alpha, X)+ SS_\beta(X)\cdot \cN(0, \sigma_s^2)$ is $(\alpha, \frac{\alpha+1}{\sigma_s^2})$-RDP (Theorem 23 from \citet{papernot2018scalable}).

%The theorem provides a universal solution to sani any smooth sensitivity based data-dependent DP.
% remark, the theorem provides a general approach

%Next, we convert $\epsilon_{\sigma_1}(\alpha)^p$ back to the standard DP guarantee, and compare it with the pre-defined privacy budget $\epsilon$. Once the test passes, Algorithm~\ref{alg: pate_ptr} returns the noise parameter as well as the noisy aggregated labels.

Now, we are ready to state the privacy guarantee of Algorithm~\ref{alg: pate_ptr}.

\begin{theorem}\label{thm: pate_ptr}
Algorithm~\ref{alg: pate_ptr} satisfies $(\epsilon'+\hat{\epsilon}, \delta)$-DP.
\end{theorem}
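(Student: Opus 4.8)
The plan is to read Algorithm~\ref{alg: pate_ptr} as an instance of the generalized-PTR template, with the ``mechanism'' $\cM_1$ being the noisy-labeling step that outputs $y_{1:T}^p$ --- whose privacy is only data-dependent, governed by $\rdp_{\sigma_1}(\alpha,X)$, the PATE bound of \citet{papernot2018scalable} illustrated in Theorem~\ref{thm: dep_gau} --- and the ``test'' being the blue-highlighted lines: release $\mu$ together with $\rdp_{\sigma_1}^{\text{upper}}(\alpha)$ (the $(\beta,\sigma_s,\sigma_2)$-GNSS release of Lemma~\ref{lem: upperbound}), convert it to a DP number $\epsilon_{\sigma_1}$, and compare with $\epsilon'$. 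Training and publishing the student model from $(x_{1:T},y_{1:T}^p)$ is post-processing of $\cM_1$ and of the public data, and computing $\rdp_{\sigma_1}^{\text{upper}}(\alpha)$, $\epsilon_{\sigma_1}$ and the decision bit is post-processing of the GNSS release; so it suffices to control $\cM_1$, the GNSS release, and the way the bit gates $\cM_1$.

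First I would substitute the constants fixed in Line~2. Since $\sigma_s=\sigma_2=\sqrt{(3\alpha+2)/\hat\epsilon}$, Lemma~\ref{lem: upperbound} gives that the GNSS release is $\bigl(\alpha,\tfrac{3\alpha+2}{2\sigma_s^2}\bigr)$-RDP $=\bigl(\alpha,\hat\epsilon/2\bigr)$-RDP (the constraint $1<\alpha<\tfrac{1}{2\beta}$ holds because $\beta=0.2/\alpha$), and that outside an event of probability at most $\delta_2=\delta/2$ one has $\rdp_{\sigma_1}^{\text{upper}}(\alpha)\ge\rdp_{\sigma_1}(\alpha,X)$. Because $\alpha-1=\tfrac{2\log(2/\delta)}{\hat\epsilon}$, an RDP-to-DP conversion at level $\delta/2$ adds exactly $\tfrac{\log(2/\delta)}{\alpha-1}=\hat\epsilon/2$ to any $\alpha$-RDP bound: hence the GNSS release alone is $(\hat\epsilon,\delta/2)$-DP, and converting $\rdp_{\sigma_1}(\alpha,X)$ shows $\cM_1$ is data-dependently $(\epsilon_{\sigma_1}(X),\delta/2)$-DP with $\epsilon_{\sigma_1}(X):=\rdp_{\sigma_1}(\alpha,X)+\hat\epsilon/2$ (using that PATE's data-dependent RDP is two-sided, so the conversion yields data-dependent DP in the symmetric sense of Definition~\ref{def:data_dep_dp}). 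By monotonicity of the conversion, on the good event $\epsilon_{\sigma_1}=\rdp_{\sigma_1}^{\text{upper}}(\alpha)+\hat\epsilon/2\ge\epsilon_{\sigma_1}(X)$, so $\epsilon_{\sigma_1}$ is a valid private upper bound on the data-dependent DP of $\cM_1$.

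Next I would run the three-case argument behind Theorem~\ref{thm:gen_ptr} and Theorem~\ref{exp: upperbound}, but keeping $\cM_1$ and the GNSS release in R\'enyi form so the composition is converted to approximate DP only once. Fix neighbors $X\simeq X'$. (i) If the test fails, the output is post-processing of the GNSS release, which is $(\hat\epsilon,\delta/2)$-DP. (ii) If the test passes while $\epsilon_{\sigma_1}(X)>\epsilon'$, then on the good event $\epsilon_{\sigma_1}\ge\epsilon_{\sigma_1}(X)>\epsilon'$ would force the test to fail, so this case sits inside the bad event and contributes only an additive $\delta/2$. (iii) If the test passes and $\epsilon_{\sigma_1}(X)\le\epsilon'$, then $\rdp_{\sigma_1}(\alpha,X)\le\epsilon_{\sigma_1}(X)-\hat\epsilon/2\le\epsilon'-\hat\epsilon/2$, and the adaptive composition of the $(\alpha,\hat\epsilon/2)$-RDP GNSS release with the conditionally executed $(\alpha,\rdp_{\sigma_1}(\alpha,X))$-RDP mechanism $\cM_1$ is $(\alpha,\epsilon')$-RDP, which I convert once, at level $\delta/2$, to $(\epsilon'+\hat\epsilon/2,\delta/2)$-DP. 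Decomposing the output space into the ``$\perp$'' part (a function of the GNSS release only) and the ``student model'' part and combining the three cases exactly as in the proof of Theorem~\ref{thm:gen_ptr} gives $(\epsilon'+\hat\epsilon/2,\ \delta/2+\delta/2)$-DP, which is subsumed by the claimed $(\epsilon'+\hat\epsilon,\delta)$-DP.

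The delicate point is the $\delta$-accounting. A black-box call to Theorem~\ref{exp: upperbound} would charge one $\delta/2$ for the GNSS release's own RDP-to-DP conversion, a second for $\cM_1$'s conversion, and a third for the failure of the private upper bound, totalling $3\delta/2$; so the proof must instead compose $\cM_1$ with the GNSS release in RDP first, perform a single conversion there (spending $\delta/2$), and reserve the other $\delta/2$ for the event $\{\rdp_{\sigma_1}^{\text{upper}}(\alpha)<\rdp_{\sigma_1}(\alpha,X)\}$ of Lemma~\ref{lem: upperbound}. The two remaining things to verify are that PATE's data-dependent RDP is genuinely two-sided, and that training the student network is post-processing of $y_{1:T}^p$ alone so it inflates neither $\epsilon$ nor $\delta$.
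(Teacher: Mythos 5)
Your proposal is correct and its skeleton matches the paper's: invoke Lemma~\ref{lem: upperbound} to get that the GNSS test is $(\alpha,\hat{\epsilon}/2)$-RDP under the choice $\sigma_s=\sqrt{(3\alpha+2)/\hat{\epsilon}}$, note that $\alpha=\tfrac{2\log(2/\delta)}{\hat{\epsilon}}+1$ makes the RDP-to-DP conversion at level $\delta/2$ cost exactly $\hat{\epsilon}/2$, and then run the three-case generalized-PTR argument. Where you genuinely depart from the paper is in the $\delta$-accounting, and your version is the more defensible one. The paper's proof is a two-sentence sketch that identifies the two $\hat{\epsilon}/2$ contributions but never says how a single $\delta$ is split among the three places it is needed (the test's own conversion, the labeling mechanism's conversion, and the failure event of the private upper bound); as you observe, feeding these into Theorem~\ref{exp: upperbound} as a black box yields $\delta/2+\delta/2+\delta/2=3\delta/2$. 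Your fix --- keep the GNSS release and the conditionally released labeling in R\'enyi form, compose adaptively to $(\alpha,\epsilon')$-RDP in the good case, and convert only once at level $\delta/2$, reserving the other $\delta/2$ for the event $\{\rdp^{\text{upper}}(\alpha)<\rdp(\alpha,X)\}$ --- lands at $\bigl(\max(\hat{\epsilon},\,\epsilon'+\hat{\epsilon}/2),\,\delta\bigr)$-DP, which is subsumed by the claimed $(\epsilon'+\hat{\epsilon},\delta)$ (note the $\max$ with $\hat{\epsilon}$ from the $\perp$ branch, in case $\epsilon'<\hat{\epsilon}/2$; your ``subsumed by'' conclusion is unaffected). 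The two caveats you flag are real and are also left implicit by the paper: the PATE data-dependent RDP bound must hold in both directions $D_\alpha(\cM(X)\Vert\cM(X'))$ and $D_\alpha(\cM(X')\Vert\cM(X))$ for the conversion to yield the symmetric guarantee of Definition~\ref{def:data_dep_dp}, and the student-model training must depend on the private data only through $y_{1:T}^p$. Neither invalidates the argument, but both deserve the explicit mention you give them.
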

%\vspace{-2mm}
%short version
In the proof, the choice of $\alpha$ ensures that the cost of the $\delta/2$ contribution (used in the RDP-to-DP conversion) is roughly $\hat{\epsilon}/2$. Then  the release of $\rdp_{\sigma_1}^{\text{upper}}(\alpha)$ with  $\sigma_s =\sqrt{\frac{2+3\alpha}{\hat{\epsilon}}}$ accounts for another cost of $(\epsilon/2, \delta/2)$-DP.% Finally, the testing step is $\epsilon$-DP.

%In the proof, we set $\alpha =\frac{2\log(2/\delta)}{\epsilon}+1$ and use RDP to DP conversion with $\delta/2$ ensures that the cost of $\delta/2$ contribution to be roughly $\epsilon/2$. Then, the release of $\epsilon_{\sigma_1}^p(\alpha)$ with  $\sigma_s =\sqrt{\frac{2+3\alpha}{\epsilon}}$ counts for another $(\epsilon/2, \delta/2)$-DP.

%Regarding the  practical choice on $\epsilon$, we can set 
%$\epsilon = \frac{1}{4}\bigg(\frac{T}{\sigma_1^2}+2\sqrt{\frac{T\log(1/\delta)}{\sigma_1^2}}\bigg)$ (a quarter of the data-independent cost), so that the algorithm returns $\perp$ only when the data-dependent DP is close to its data-independent version.  Alternatively, we provide a no-$\perp$ version of Algorithm~\ref{alg: pate_ptr} by removing its testing step, which satisfies $(\epsilon_{\sigma_1}(\alpha)^p + \epsilon, \delta)$-DP. 

\textbf{Empirical results.} We next empirically evaluate Algorithm~\ref{alg: pate_ptr} (PATE-PTR) on the MNIST dataset. Following the experimental setup from \citet{papernot2018scalable}, we consider the training set to be the private domain, and the testing set is used as the public domain.  We first partition the training set into $400$ disjoint sets and $400$ teacher models, each trained individually. Then we select $T=200$ unlabeled data from the public domain, with the goal of privately labeling them. 
To illustrate the behaviors of algorithms under various data distributions, we consider two settings of unlabeled data, high-consensus and low-consensus. In the low-consensus setting, we choose $T$ unlabeled data such that there is no high agreement among teachers, so the advantage of data-adaptive analysis is diminished. We provide further details on the distribution of these two settings in the appendix.
%In the low-consensus setting, we apply a screening on public data and only use data whose maximum vote count is smaller than $150$, so the advantage of data-adaptive analysis is diminished. We provide the distribution of two settings in the appendix.
% mention how to compute the smooth sensitivity

\textbf{Baselines.}
We consider the Gaussian mechanism as a data-independent baseline, where the privacy guarantee is valid but does not take advantage of the properties of the dataset. The data-dependent DP (~\citet{papernot2018scalable}) serves as a non-private baseline, which requires further sanitation.  Note that these two baselines provide different privacy analyses of the same algorithm (see Theorem~\ref{thm: dep_gau}).

Figure~\ref{fig: exp_pate} plots privacy-utility tradeoffs between the three approaches by varying the noise scale $\sigma_1$. The purple region denotes a set of privacy budget choices ($\hat{\epsilon}+\epsilon'$ used in Algorithm~\ref{alg: pate_ptr}) such that the utility of the three algorithms is aligned under the same $\sigma_1$. In more detail, the purple region is lower-bounded by $\hat{\epsilon} +\epsilon_{\sigma_1}$.
We first fix $\sigma_s=\sigma_2=15$ such that $\hat{\epsilon}$ is fixed. Then we empirically calculate the average of $\epsilon_{\sigma_1}$ (the private upper bound of the data-dependent DP) over $10$ trials. Running Algorithm~\ref{alg: pate_ptr} with any choice of $\hat{\epsilon}+\epsilon'$ chosen from the purple region implies  $\epsilon'>\epsilon_{\sigma_1}$. Therefore, PATE-PTR will output the same noisy labels (with high probability) as the two baselines.

\textbf{Observation} 
As $\sigma_1$ increases, the privacy loss of the Gaussian mechanism decreases, while the data-dependent DP curve does not change much. This is because the data-dependent DP of each query is a complex function of both the noise scale and the data and does not monotonically decrease when $\sigma_1$ increases (see more details in the appendix). However, the data-dependent DP still dominates the Gaussian mechanism for a wide range of $\sigma_1$. Moreover, PATE-PTR nicely interpolates between the data-independent DP guarantee and the non-private data-adaptive DP guarantee.  In the low-consensus case, the gap between the data-dependent DP and the DP guarantee of the Gaussian mechanism unsurprisingly decreases. Meanwhile, PATE-PTR (the purple region) performs well when the noise scale is small but deteriorates when the data-independent approach proves more advantageous. 
This example demonstrates that using PTR as a post-processing step to convert the data-dependent DP to standard DP is effective when the data-adaptive approach dominates others.

 %Figure~\ref{fig: exp_pate} plots privacy-utility tradeoffs between three approaches by varying noisy scale $\sigma_1$. As $\sigma_1$ increases, the privacy loss of  Gaussian mechanism decreases, while the curve of data-dependent DP does not change much. This is because the data-dependent DP of each query is a function of $q$, where $q$ denotes an upper bound of the probability where the plurality output does not match the noisy output. $q$ is a complex function of both the noisy scale and data and is not monotonically decreasing when $\sigma_1$ is increasing. 

 \label{section:applications}

\section{Limitations and Future Work}
One weakness of generalized PTR is that it requires a case-specific privacy analysis. Have we simply exchanged the problem of designing a data-adaptive DP algorithm with the problem of analyzing the data-dependent privacy loss? We argue that this limitation is inherited from classic PTR. In situations where classic PTR is not applicable, we've outlined several approaches to constructing the DP test for our framework (see Sections~\ref{subsections:test_construction} and~\ref{section:applications}).

Furthermore, the data-dependent privacy loss is often more straightforward to compute than local sensitivity, and often exists in intermediate steps of classic DP analysis already. Most DP analysis involves providing a high-probability tail bound of the privacy loss random variable. If we stop before taking the max over the input dataset, then we get a data-dependent DP loss right away (as in Example~\ref{examp:lap_mech}).

There are several exciting directions for applying generalized PTR to more problems. Sufficient statistics release and our private hyperparameter tuning (Algorithm~\ref{alg: parameter_ptr}) can be used to construct data-adaptive extensions of DP-PCA \citep{dwork2014analyze} and Sparse-DP-ERM \citep{kifer2012private}. For DP-PCA we could use our Algorithm~\ref{alg: parameter_ptr} to tune the variance of the noise added to the spectral gap; for Sparse-DP-ERM we would test the restricted strong convexity parameter (RSC), i.e. not adding additional regularization if the RSC is already large.

\section{Conclusion}

Generalized PTR extends the classic ``Propose-Test-Release'' framework to a more general setting by testing the data-dependent privacy loss of an input dataset, rather than its local sensitivity. In this paper we've provided several examples -- private linear regression with hyperparameter selection and PATE -- to illustrate how generalized PTR can enhance DP algorithm design via a data-adaptive approach.

\label{section:conclusion}

\subsection*{Acknowledgments}

The work was partially supported by NSF Award \# 2048091 and the Google Research Scholar Award. Yuqing was supported by the Google PhD Fellowship.

\newpage
\onecolumn

\appendix

%\addcontentsline{toc}{section}{Appendix} % Add the appendix text to the document TOC
%\part{Appendix} % Start the appendix part
%\setcounter{parttocdepth}{1}
%\parttoc % Insert the appendix TOC

\tableofcontents
%\appendix
\section{Omitted examples in the main body}

In this appendix, we provide more examples to demonstrate the merits of generalized PTR. We focus on a simple example of post-processed Laplace mechanism in Section~\ref{sec:binary_vote} and then an example on differentially private learning of generalized linear models in Section~\ref{sec:gen_ptr}. In both cases, we observe that generalized PTR provides data-adaptive algorithms with formal DP guarantees, that are simple, effective and not previously proposed in the literature (to the best of our knowledge).

\subsection{Limits of the classic PTR in private binary voting}\label{sec:binary_vote}

The following example demonstrates that  classic PTR  does not capture sufficient data-dependent quantities
even when the local sensitivity exists and can be efficiently tested.
\begin{example}\label{exp: binary_vote}
Consider a binary class voting problem: $n$ users vote for a binary class $\{0, 1\}$ and the goal is to output the class that is supported by the majority. Let $n_i$ denote the number of people who vote for the class $i$. We consider the report-noisy-max mechanism: 
\begin{align*}
\cM(X): \text{argmax}_{i \in [0,1]} n_i(X)+ Lap(b),
\end{align*}
where $b=1/\epsilon$ denotes the scale of Laplace noise. 

\end{example}
In the example, we will (1) demonstrate the merit of data-dependent DP; and 
(2) empirically compare classic PTR with generalized PTR.

We first explicitly state the data-dependent DP.
\begin{theorem}\label{thm: binary_vote}
The data-dependent DP of the above example is 
\[\epsilon(X):= \max_{X'}\{ |\log\frac{p}{p'}|, |\log\frac{1-p}{1-p'}|\},\] where $p:=\Pr[n_0(X) + Lap(1/\epsilon)> n_1(X) +Lap(1/\epsilon)]$ and $p':= \Pr[n_0(X') + Lap(1/\epsilon)> n_1(X')+Lap(1/\epsilon)]$. There are four possible neighboring datasets $X': n_0(X')=\max(n_0(X)\pm 1,0), n_1(X')=n_1(X)$ or  $n_0(X')=n_0(X), n_1(X')=\max(n_1(X)\pm 1,0)$.
\end{theorem}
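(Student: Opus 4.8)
## Proof Proposal for Theorem~\ref{thm: binary_vote}

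\textbf{Proof plan.} The structural observation that drives everything is that $\cM$ has only two possible outputs, $0$ and $1$: on any dataset its output distribution is Bernoulli. Writing $p = \Pr[\cM(X) = 0]$ and $p' = \Pr[\cM(X') = 0]$ for a neighbor $X'$ (the definitions in the statement), we automatically get $\Pr[\cM(X) = 1] = 1-p$ and $\Pr[\cM(X') = 1] = 1-p'$, since a tie in the $\text{argmax}$ has probability zero (the difference of two independent $\text{Lap}(1/\epsilon)$ variables is absolutely continuous). Consequently, when we instantiate Definition~\ref{def:data_dep_dp} with $\delta = 0$, the only measurable sets $S \subseteq \{0,1\}$ that matter are the two singletons: $S = \emptyset$ gives the trivial $0 \leq 0$ and $S = \{0,1\}$ gives $1 \leq e^{\epsilon(X)}$, both automatically satisfied for any $\epsilon(X) \geq 0$.

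First I would fix a neighbor $X'$ and find the smallest $\epsilon$ for which both directions of Definition~\ref{def:data_dep_dp} hold for both singletons. The set $S = \{0\}$ forces $p \leq e^{\epsilon} p'$ and $p' \leq e^{\epsilon} p$, i.e. $|\log(p/p')| \leq \epsilon$; the set $S = \{1\}$ forces $|\log((1-p)/(1-p'))| \leq \epsilon$. Hence the minimal valid value, relative to this particular $X'$, is $\max\{|\log(p/p')|,\, |\log((1-p)/(1-p'))|\}$. Because data-dependent DP must hold for \emph{every} neighbor $X'$ simultaneously, $\epsilon(X)$ is the maximum of this quantity over all neighbors, which is precisely the claimed formula. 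The enumeration of neighbors then reduces to the fact that $\cM$ depends on $X$ only through $(n_0(X), n_1(X))$, and that changing $X$ to a neighbor alters exactly one of these counts by $\pm 1$ (clipped at $0$ to remain a valid count), yielding the four candidate datasets $X'$ listed in the statement. Finally, $p$ is well defined and strictly between $0$ and $1$: $p = \Pr[n_0(X) + Z_0 > n_1(X) + Z_1]$ with $Z_0, Z_1$ i.i.d.\ $\text{Lap}(1/\epsilon)$, so $Z_0 - Z_1$ has full support on $\mathbb{R}$ and all the logarithms above are finite.

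This is essentially a direct unfolding of the definition, so I do not expect a real obstacle; the only points requiring care are (i) justifying that the quantification over all measurable $S$ collapses to the two singletons, (ii) tracking the \emph{two-sided} requirement in Definition~\ref{def:data_dep_dp}, which is exactly what introduces the absolute values on the log-ratios, and (iii) the remark that ties contribute zero probability, so the events $\{\cM = 0\}$ and $\{\cM = 1\}$ genuinely partition the sample space. The real content of the statement — worth emphasizing after the proof — is that for a binary-valued mechanism the data-dependent privacy loss is governed by the two Bernoulli parameters $p, p'$ rather than by the local sensitivity of the underlying vote count, which is the sense in which it captures strictly more information than classic PTR.
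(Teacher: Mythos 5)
Your proof is correct and is exactly the intended argument: the paper states this theorem without proof, treating it as a direct unfolding of Definition~\ref{def:data_dep_dp} for a mechanism with binary output, where the quantification over measurable $S$ collapses to the two singletons and the two-sided requirement yields the absolute values. Your added care about ties having probability zero and about $p\in(0,1)$ so the logarithms are finite is a sound (and slightly more complete) rendering of what the paper leaves implicit.
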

In Figure~\ref{fig:gap}, we empirically compare the above data-dependent DP with the Laplace mechanism by varying the gap between the two vote counts $|n_0(X)-n_1(X)|$.  The noise scale is fixed to $\epsilon=10$.
The data-dependent DP substantially improves over the standard DP if the gap is large. However, the data-dependent DP is a function of the dataset. We next demonstrate how to apply generalized PTR to exploit the data-dependent DP.

Notice that the probability $n_0(X) + Lap(1/\epsilon)> n_1(X) +Lap(1/\epsilon)$  is equal to the probability that a random variable $Z:= X-Y$ exceeds $\epsilon(n_1(X)- n_0(X))$, where $X, Y$ are two independent $\text{Lap}(1)$ distributions. We can compute the pdf of $Z$ through the convolution of two Laplace distributions, which implies $f_{X-Y}(z) = \dfrac{1+|z|}{4e^{|z|}}$.  Let $t$ denote the difference between $n_1(X)$ and $n_0(X)$, i.e., $t=n_1(X)-n_0(X)$. Then we have
\[  p = \pr[Z>\epsilon \cdot t] = \frac{2+\epsilon \cdot t}{4 \exp(\epsilon \cdot t)}\]
Similarly, $p' =\dfrac{2+\epsilon \cdot (t+ \ell)}{4 \exp(\epsilon \cdot (t+\ell))} $, where $\ell\in[-1,1]$ denotes adding or removing one data point to construct the neighboring dataset $X'$.
Therefore, we can upper bound $\log(p/{p'})$ by
\begin{align*}
\log\frac{p}{p'} &= \frac{2 +\epsilon \cdot t}{4\exp(\epsilon \cdot t)}\cdot \frac{4\exp(\epsilon(t+\ell))}{2+\epsilon\cdot ( t+\ell)} \\ &\leq \epsilon \cdot \log\bigg(\frac{2+\epsilon t}{2+\epsilon(t+1)}\bigg) \\ &=\epsilon \log\bigg(1- \frac{\epsilon}{2+\epsilon(t+1)} \bigg)
\end{align*}

Then we can apply  generalized PTR by privately lower-bounding $t$.
%Consider the case when $\log \frac{p}{p'}$ dominates $\epsilon(X)$, then $\epsilon(X) = \epsilon \bigg|\log\bigg(1- \frac{\epsilon}{2+\epsilon(t+1)}\bigg)\bigg|$ is a  function of both the privacy parameter $\epsilon$ and the data-dependent quantity $t$. To apply the generalized 

On the other hand, the local sensitivity $\Delta_{LS}(X)$ of this noise-adding mechanism is $0$ if $t>1$. Specifically,  if the gap is larger than one, adding or removing one user will not change the result. To apply  classic PTR, we let  $\gamma(X)$ denote the distance to the nearest dataset $X^{''}$ such that $\Delta_{LS}>0$ and test if $\gamma(X)+\text{Lap}(1/\epsilon)> \frac{\log(1/\delta)}{\epsilon}$.
 Notice in this example that $\gamma(X)=\max(t-1, 0)$ can be computed efficiently.  
We provide the detailed implementation of these approaches.
\begin{enumerate}
    \item Gen PTR: lower bound $t$ with $t^p = t - \frac{log(1/\delta)}{\tilde{\epsilon}} + \text{Lap}(1/\tilde{\epsilon})$.
    Calculate an upper bound of data-dependent DP $\epsilon^p$ using Theorem~\ref{thm: binary_vote} with $t^p$. The algorithm then tests if $\epsilon^p$ is within an predefined privacy budget $\epsilon'$. If the test passes, the algorithm
    returns $\text{argmax}_{i \in [0,1]} n_i(X)+ Lap(1/\epsilon)$ satisfies $(\tilde{\epsilon}+\epsilon', \delta)$-DP.
    \item classic PTR: lower bound $t$ with $t^p = t - \frac{log(1/\delta)}{\tilde{\epsilon}} + \text{Lap}(1/\tilde{\epsilon})$. If $t^p>1$,  classic PTR outputs the ground-truth result else returns a random class. This algorithm satisfies $(\tilde{\epsilon}, \delta)$-DP.
    \item Laplace mechanism. $\cM(X): \text{argmax}_{i \in [0,1]} n_i(X)+ Lap(1/\epsilon)$. $\cM$ is $(\epsilon, \delta)$-DP.
\end{enumerate}
\begin{figure*}[t]
	\centering		\subfigure[data-dependent DP vs Laplace mechanism ]{
	\includegraphics[width=0.47\textwidth]{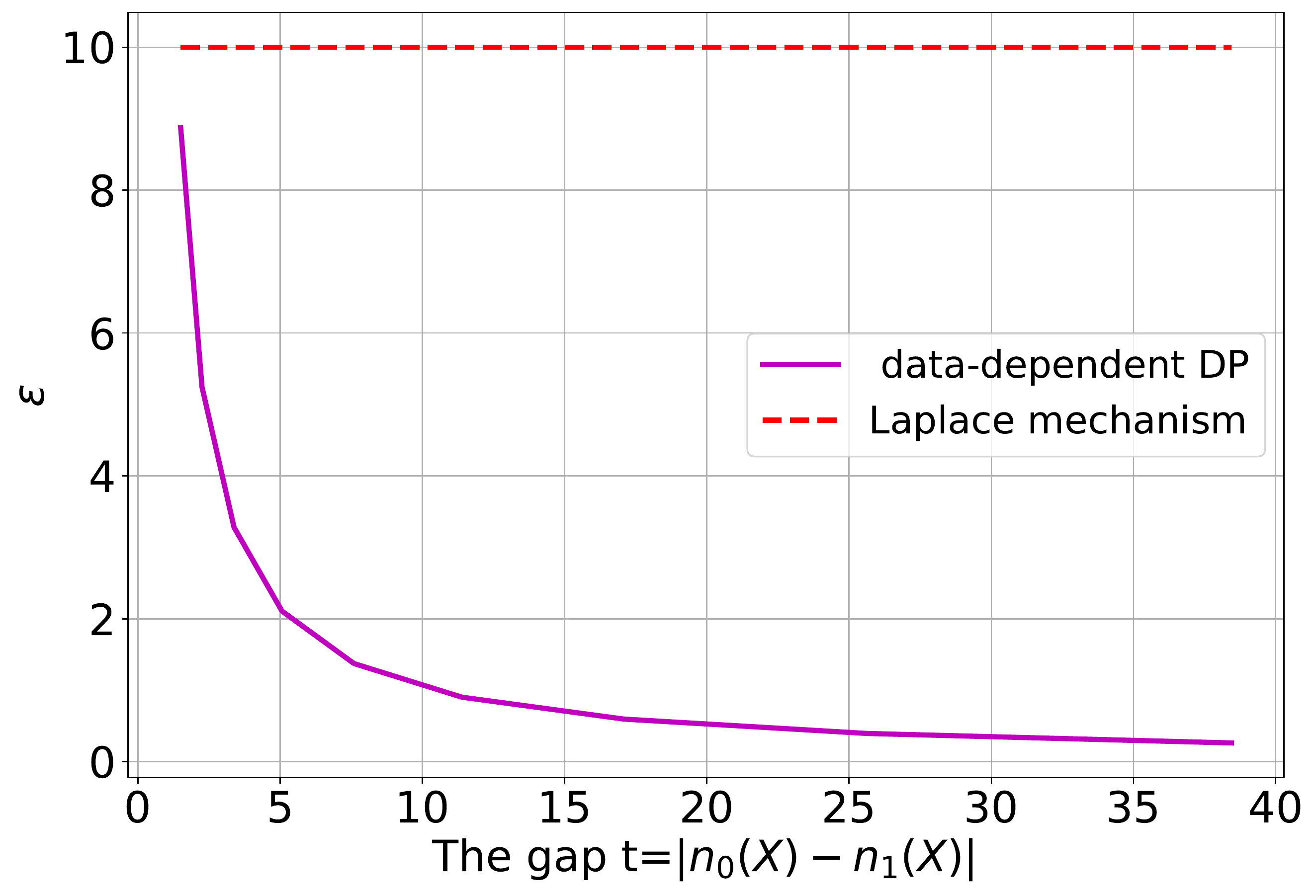}\label{fig:gap}}	\subfigure[ Privacy-utility tradeoff between three approaches. ]{
	\includegraphics[width=0.50\textwidth]{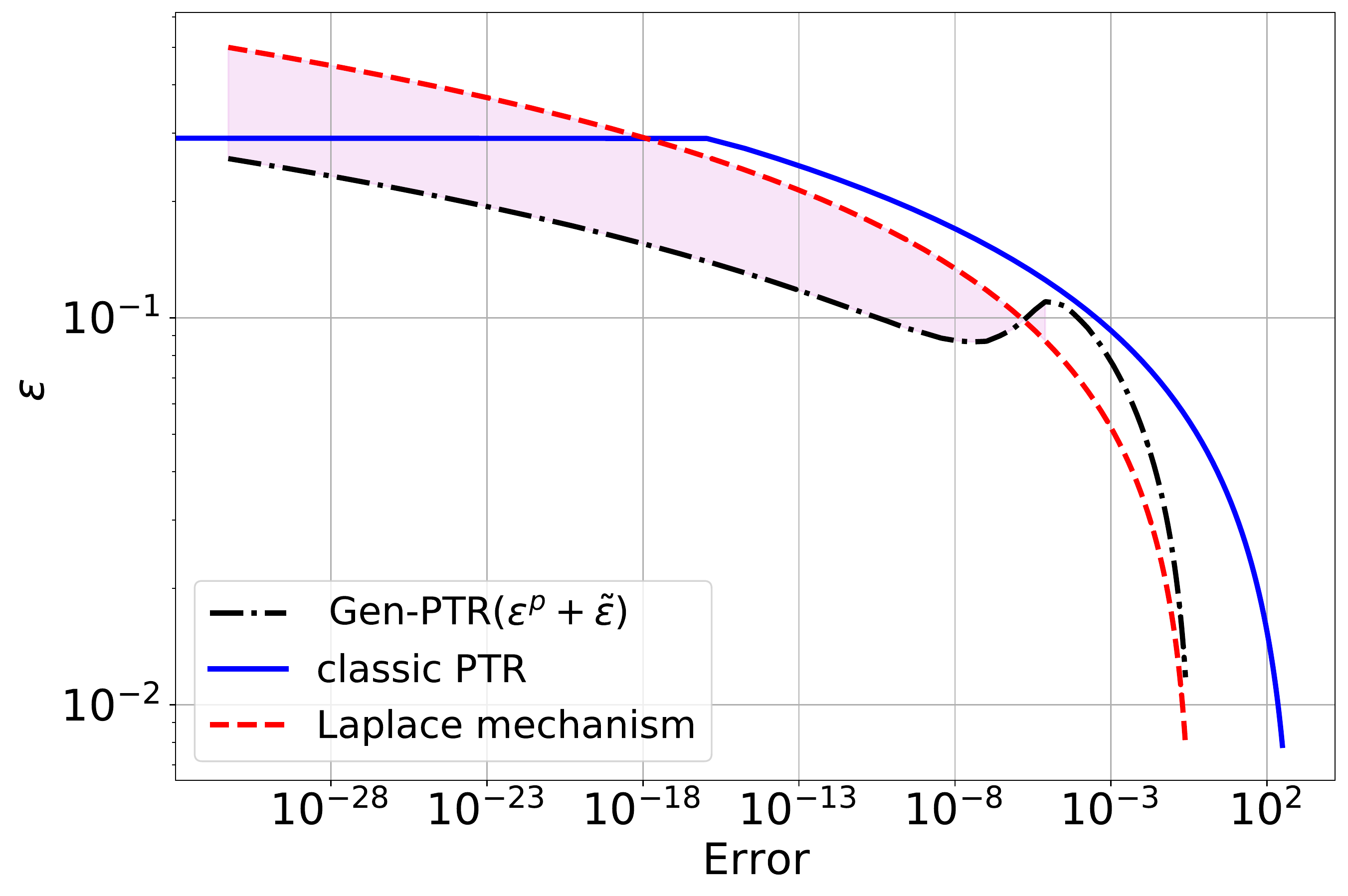}\label{fig:comp}}
	\caption{In Figure~\ref{fig:gap}, we compare the privacy guarantee by varying the gap. In Figure~\ref{fig:comp} We fix $t=n_0(X)-n_1(X)=100$ and compare privacy cost when the accuracy is aligned. 
	 Gen-PTR with any choice of privacy budget ($\tilde{\epsilon}+\epsilon'$) chosen from the purple region would achieve the same utility as Laplace mechanism but with a smaller privacy cost. The curve of Gen-PTR is always below than that of the classic PTR, which implies that Gen-PTR can result a tighter privacy analysis when the utility is aligned. } 
\end{figure*}

We argue that though the Gen-PTR and the classic PTR are similar in privately lower-bounding the data-dependent quantity $t$, the latter does not capture sufficient information for data-adaptive analysis. That is to say, only testing the local sensitivity restricts us from learning helpful information to amplify the privacy guarantee if the test fails. In contrast, our generalized PTR, where privacy parameters and the  local sensitivity parameterize the data-dependent DP, can handle those failure cases nicely.

To confirm this conjecture, Figure~\ref{fig:comp} plots a privacy-utility trade-off curve between these three approaches.
We consider a voting example with $n_0(X)=n_1(X)+100$ and $t=100$, chosen such that the data-adaptive analysis is favorable. 

In Figure~\ref{fig:comp}, we vary the noise scale $b=1/\epsilon$ between $[0, 0.5]$. For each choice of $b$, we plot the privacy guarantee of three algorithms 
 when the error rate is aligned. For Gen-PTR, we set $\tilde{\epsilon} = \frac{1}{2b}$ and empirically calculate $\epsilon^p$ over $100000$ trials. 

In the plot, when $\epsilon \ll \frac{\log(1/\delta)}{t}$,  the classic PTR is even worse than the Laplace mechanism. This is because the classic PTR is likely to return $\perp$ while the Laplace mechanism returns $\text{argmax}_{i\in[0,1]} n_i(X) + \text{Lap}(1/\epsilon)$, which contains more useful information. 
Compared to the Laplace mechanism, Gen-PTR requires an extra privacy allocation $\tilde{\epsilon}$ to release the gap $t$. However, it still achieves an overall smaller privacy cost when the error rate $\leq 10^{-5}$ (the purple region).  
Meanwhile,  Gen-PTR dominates the classic PTR (i.e., the dashed black curve is always below the blue curve). Note that the classic PTR and the Gen-PTR utilize the gap information differently: the classic PTR outputs $\perp$ if the gap is not sufficiently large, while the Gen-PTR encodes the gap into the data-dependent DP function and tests the data-dependent DP in the end. This empirical result suggests that testing the local sensitivity can be loosely compared to testing the data-dependent DP. Thus, Gen-PTR could provide a better privacy-utility trade-off.
%\yw{The part in the main paper says that the example demonstrates that the local sensitivity does not capture the full information. You may wish to expand a bit here and explain the argument..}

\subsection{Self-concordant generalized linear model (GLM)}\label{sec:glm}

In this section, we demonstrate the effectiveness and flexibility of generalized PTR in handling a family of GLMs where the link function satisfies a self-concordance assumption.  This section is organized as follows:
\begin{itemize}
    \item Introduce a family of GLMs with the self-concordance property.
    \item Introduce a general output perturbation algorithm for private GLMs.
    \item Analyze the data-dependent DP of GLMs with the self-concordance property.
    \item Provide an example of applying our generalized PTR framework to logistic regression.
\end{itemize}

Consider the empirical risk minimization problem of the generalized linear model
\[
\theta^* = \argmin_\theta \sum_{i=1^n} l_i(\theta)+r(\theta),
\]

where  $l: \R\times \R \rightarrow \R$ belongs to a family of convex GLMs: $l_i (\theta) =  l(y, x_i^T\theta)$. Let $r: \R^d \rightarrow \R$ be a regularization function.

We now define the self-concordance property.
\begin{definition}[Generalized self-concordance {\citep{bach2010self}}]
	A convex and three-times differentiable function $f: \Theta \rightarrow \R$ is $R$-generalized-self-concordant on an open nonempty convex set $\Theta^*\subset \Theta$ with respect to norm $\|\cdot\|$ if for all $u\in \Theta^*$ and all $v\in \R^d$,
	$$
	\nabla^3 f(u)  [v,v,v]  \leq 2 R\|v\|(\nabla^2 f(u)[v,v]).
	$$
\end{definition}

The closer R is to 0, the ``nicer'' --- more self-concordant --- the function is.  A consequence of (generalized) self-concordance is the spectral (multiplicative) stability of Hessian to small perturbations of parameters.

\begin{lemma}[Stability of Hessian{\citep[Theorem~2.1.1]{nesterov1994interior}, \citep[Proposition~1]{bach2010self}}]\label{lem:selfconcordant-hessian}
	Let $H_\theta :=  \nabla^2F_s(\theta)$. If $F_s$ is $R$-self-concordant at $\theta$, then for any $v$ such that $R \|v\|_{H_\theta} < 1$, we have that
	\begin{align*}
	(1-R\|v\|_{H_\theta})^2 \nabla^2 F_s(\theta) 	&\prec	\nabla^2 F_s(\theta+v) \\ &\prec  \frac{1}{(1-R\|v\|_{H_\theta})^2}   \nabla^2 F_s(\theta)  .
	\end{align*}
	If instead we assume $F_s$ is $R$-generalized-self-concordant at $\theta$ with respect to norm $\|\cdot\|$, then
\begin{align*}
	e^{-R\|v\|} \nabla^2 F_s(\theta) \prec  \nabla^2 F_s(\theta+v)  \prec e^{R\|v\|}  \nabla^2 F_s(\theta) 
	\end{align*}
\end{lemma}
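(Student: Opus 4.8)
The plan is to reduce this Loewner-order statement to a family of one-dimensional inequalities and then run a Gr\"onwall-type argument along the segment $t\mapsto\theta+tv$, $t\in[0,1]$ (which we assume lies in the region where $F_s$ is self-concordant). Concretely, I would fix an arbitrary test direction $w\in\R^d$ and set $\phi(t):=\langle w,\nabla^2 F_s(\theta+tv)\,w\rangle$; proving $(1-R\|v\|_{H_\theta})^2\,\phi(0)\le\phi(1)\le\phi(0)/(1-R\|v\|_{H_\theta})^2$ in the self-concordant case (resp.\ $e^{-R\|v\|}\phi(0)\le\phi(1)\le e^{R\|v\|}\phi(0)$ in the generalized case) for every $w$ is equivalent to the two operator inequalities claimed. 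A preliminary point, handled by a standard continuity/bootstrap argument, is that $\nabla^2 F_s$ stays positive definite along the whole segment (using $R\|v\|_{H_\theta}<1$ in the classical case), so that $\phi(t)>0$ and $\log\phi(t)$ is well defined on $[0,1]$.

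Next I would differentiate: since $F_s$ is three-times differentiable, $\phi'(t)=\nabla^3 F_s(\theta+tv)[v,w,w]$. The self-concordance hypothesis only controls the diagonal term $\nabla^3 F_s(u)[v,v,v]$, so the first real step is a polarization lemma for symmetric trilinear forms that upgrades it to the mixed bound $|\nabla^3 F_s(u)[v,w,w]|\le 2R\,\|v\|_{\nabla^2 F_s(u)}\,\langle w,\nabla^2 F_s(u)w\rangle$ in the classical case and $|\nabla^3 F_s(u)[v,w,w]|\le R\,\|v\|\,\langle w,\nabla^2 F_s(u)w\rangle$ in the generalized case; this is the content of \citet[Theorem~2.1.1]{nesterov1994interior} and \citet[Proposition~1]{bach2010self}, which I would cite rather than reprove. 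In the generalized case the reference norm $\|v\|$ does not depend on the base point, so I immediately get $|(\log\phi)'(t)|\le R\|v\|$, and integrating over $[0,1]$ yields the $e^{\pm R\|v\|}$ bounds.

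In the self-concordant case the reference norm $\|v\|_{\nabla^2 F_s(\theta+tv)}$ itself varies with $t$, so I need a second scalar estimate. Let $\rho(t):=\|v\|_{\nabla^2 F_s(\theta+tv)}$, with $\rho(0)=\|v\|_{H_\theta}$. Differentiating and applying the diagonal self-concordance inequality gives $|\rho'(t)|\le R\,\rho(t)^2$, whose comparison-ODE solution is $\rho(t)\le\rho(0)/(1-tR\rho(0))$ on $[0,1]$ --- exactly where $R\|v\|_{H_\theta}<1$ is needed. Substituting into $|(\log\sqrt{\phi})'(t)|\le R\,\rho(t)$ and integrating gives $|\log\sqrt{\phi(1)}-\log\sqrt{\phi(0)}|\le-\log(1-R\rho(0))$, which exponentiates to the rational factor $(1-R\|v\|_{H_\theta})^{\pm2}$.

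The main obstacle, in my estimation, is the polarization step: the self-concordance definition as given only bounds $\nabla^3 F_s(u)[v,v,v]$, and one has to pass carefully to the mixed form $\nabla^3 F_s(u)[v,w,w]$ with the correct constant via a Cauchy--Schwarz argument for the trilinear form in the Hessian inner product. The coupled scalar ODE for $\rho(t)$ is the other slightly delicate point, since it is what converts an exponential bound into the tighter rational bound in the classical regime. Both ingredients are classical, so an acceptable alternative is simply to invoke \citet{nesterov1994interior} and \citet{bach2010self} and record only the reduction to the scalar problem.
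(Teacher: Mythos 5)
Your proposal is correct in outline, but note that the paper does not prove this lemma at all: it is quoted verbatim as a known result, with the proof delegated entirely to \citet[Theorem~2.1.1]{nesterov1994interior} and \citet[Proposition~1]{bach2010self}. What you have written is essentially a reconstruction of the classical argument behind those references, and it is the right one: the reduction to the scalar functions $\phi(t)=\nabla^2F_s(\theta+tv)[w,w]$, the polarization step upgrading the diagonal bound on $\nabla^3F_s[v,v,v]$ to the mixed bound on $\nabla^3F_s[v,w,w]$, the direct integration of $(\log\phi)'$ in the generalized case, and the comparison ODE $|\rho'(t)|\le R\rho(t)^2$ for $\rho(t)=\|v\|_{\nabla^2F_s(\theta+tv)}$ that converts the exponential estimate into the rational factor $(1-R\|v\|_{H_\theta})^{\pm2}$ in the classical case (this is also exactly where the hypothesis $R\|v\|_{H_\theta}<1$ enters). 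So your route is more self-contained than the paper's, at the cost of needing the polarization lemma, which you sensibly propose to cite rather than reprove. One small bookkeeping point: the paper's stated definition of generalized self-concordance carries a factor $2R$ on the right-hand side, whereas the conclusion $e^{\pm R\|v\|}$ (and your mixed bound $|\nabla^3F_s(u)[v,w,w]|\le R\|v\|\,\nabla^2F_s(u)[w,w]$) is consistent with Bach's original convention without the factor $2$; if you insist on the definition as printed, your integration yields $e^{\pm 2R\|v\|}$. This is an inconsistency inherited from the paper's statement of the definition, not a flaw in your argument, but it is worth flagging explicitly if you write the proof out in full.
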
\label{stability}
The two bounds are almost identical when  $R\|v\|$ and $R\|v\|_{\theta}$ are close to $0$. In particular, for $x\leq 1/2$, we have that $e^{-2x} \leq 1-x \leq e^{-x}$.

In particular, the loss function of binary logistic regression is $1$-generalized self-concordant.
\begin{example}[Binary logistic regression]
	Assume $\|x\|_2\leq 1$ for all $x\in \cX$ and $y\in\{-1,1\}$. Then  binary logistic regression with datasets in $\cX\times \cY$ has a  log-likelihood of 
	$
	F(\theta) = \sum_{i=1}^n \log(1+e^{-y_i x_i^T\theta}).
	$
	The univariate function $l :=  \log(1+\exp(\cdot))$ satisfies 
	$$|l'''|  =  \left|\frac{\exp{(\cdot)}  (1- \exp{(\cdot)})}{(1+\exp{(\cdot)})^3}\right| \leq  \frac{\exp{(\cdot)}}{(1+\exp{(\cdot)})^2} := l''.$$
\end{example}

We next apply the modified output perturbation algorithm to privately release $\theta^*$.
	The algorithm is simply:
	\begin{enumerate}
		\item Solve
			$$
		\theta^* = \argmin_{\theta}  \sum_{i=1}^n l_i(\theta) + r(\theta).
        $$
		\item Release$$
		\hat{\theta} =  \theta^*  +   Z,$$
		where $\gamma>0$ is a tuning parameter and $Z\sim \cN(0, \gamma^{-1} (\sum_{i=1}^n \nabla^2 l_i(\theta)+ \nabla^2 r(\theta))^{-1}).$

	\end{enumerate}

The data-dependent DP of the above procedure is stated as follows.

\begin{theorem}[Data-dependent DP of GLM]\label{thm: glm}
Denote the smooth part of the loss function $F_s=\sum_{i=1}^n l(y_i, <x_i, \cdot>) + r_s(\cdot)$.
 Assume the following:
	\begin{enumerate}
		\item The GLM loss function $l$ is convex, three-times continuously differentiable and $R$-generalized-self-concordant w.r.t. $\|\cdot\|_2$,
		\item $F_s$ is locally $\alpha$-strongly convex w.r.t. $\|\cdot\|_2$,

		\item and in addition, denote $L := \sup_{\theta\in [\theta^*,\tilde{\theta}^*]}|l'(y,x^T\theta)|$, $\beta := \sup_{\theta\in [\theta^*,\tilde{\theta}^*]}|l''(y,x^T\theta)|$. That is, $\ell(\cdot)$ is $L$-Lipschitz and $\beta$-smooth.
	\end{enumerate}
	We then have the data-dependent DP
	$$\epsilon(Z) \leq   \frac{R(L+\beta)}{\alpha} (1+\log(2/\delta))  +  \frac{\gamma L^2}{\alpha}  + \sqrt{\frac{\gamma L^2}{\alpha}\log(2/\delta)}.$$
\end{theorem}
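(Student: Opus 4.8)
The plan is to derive the data-dependent DP bound by tracking the privacy loss random variable of the output-perturbation mechanism and controlling it via the multiplicative stability of the Hessian afforded by generalized self-concordance. Since the mechanism releases $\hat\theta = \theta^*(Z) + W$ with $W \sim \cN(0, \gamma^{-1} H^{-1})$ where $H = \nabla^2 F_s(\theta^*)$ is the (data-dependent) Hessian at the minimizer, this is a Gaussian mechanism in a data-dependent metric. First I would write out the log-density ratio $\log \frac{\Pr[\cM(Z) = \htheta]}{\Pr[\cM(Z') = \htheta]}$ for a neighboring dataset $Z'$ with minimizer $\tilde\theta^*$ and Hessian $\tilde H = \nabla^2 \tilde F_s(\tilde\theta^*)$. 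This ratio has two sources: (i) the mean shift $\theta^* - \tilde\theta^*$, and (ii) the mismatch between the covariance matrices $H^{-1}$ and $\tilde H^{-1}$.

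Next I would bound each piece. For the mean shift, I would use local $\alpha$-strong convexity together with the fact that neighboring datasets change $F_s$ by a single $L$-Lipschitz term $l_i$; a standard optimality-conditions argument (compare $\nabla F_s(\theta^*) = 0$ and $\nabla \tilde F_s(\tilde\theta^*) = 0$) gives $\|\theta^* - \tilde\theta^*\|_2 \leq L/\alpha$ (up to the $\beta$-smoothness correction that produces the $L+\beta$ in the numerator — roughly, the perturbed gradient at $\theta^*$ has norm at most $L$, but one must also account for the change in the Hessian between the two minimizers, which is where the $\beta$ enters). For the covariance mismatch, I would invoke Lemma~\ref{lem:selfconcordant-hessian}: since $\|\theta^* - \tilde\theta^*\|_2 \leq L/\alpha$ is small, the generalized-self-concordance bound gives $e^{-R\|\theta^*-\tilde\theta^*\|}\, \tilde H \prec H \prec e^{R\|\theta^*-\tilde\theta^*\|}\,\tilde H$, so the two Gaussians have spectrally comparable covariances with ratio controlled by $R(L+\beta)/\alpha$. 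Plugging these into the KL/Rényi-type bound for two Gaussians (or directly into a high-probability tail bound on the privacy loss), the covariance term contributes the $\frac{R(L+\beta)}{\alpha}(1 + \log(2/\delta))$ summand, while the mean-shift term, measured in the $H$-norm and scaled by $\sqrt\gamma$, contributes $\frac{\gamma L^2}{\alpha} + \sqrt{\frac{\gamma L^2}{\alpha}\log(2/\delta)}$ via the usual sub-Gaussian concentration of a shifted Gaussian's log-likelihood ratio (mean of the privacy-loss RV plus a $\sqrt{\log(2/\delta)}$-scaled standard-deviation term).

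The steps in order: (1) set up the log-density ratio and split it into a mean-shift inner-product term and a quadratic-form/covariance-determinant term; (2) bound $\|\theta^* - \tilde\theta^*\|$ using strong convexity, Lipschitzness, and smoothness of the perturbed loss; (3) apply the Hessian-stability lemma to relate $H$ and $\tilde H$ multiplicatively; (4) bound the quadratic/covariance term deterministically, yielding the $R(L+\beta)/\alpha$ contribution (the $1 + \log(2/\delta)$ arises because on the $\delta$-failure event the sub-exponential quadratic part is controlled by its mean plus a $\log(2/\delta)$ deviation); (5) bound the mean-shift term as a Gaussian random variable and take a high-probability bound, producing the last two summands; (6) take a union bound over the (finitely many, by neighboring structure) directions of perturbation and over the two tail events, and read off $\epsilon(Z)$ as the sum.

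The main obstacle I anticipate is step (4)–(5): carefully handling the covariance mismatch between $\cN(0,\gamma^{-1}H^{-1})$ and $\cN(0,\gamma^{-1}\tilde H^{-1})$ so that the spectral ratio $e^{\pm R\|\theta^*-\tilde\theta^*\|}$ translates into an \emph{additive} privacy-loss bound of order $R(L+\beta)/\alpha$ rather than something worse — this requires the linearization $e^x - 1 \approx x$ valid because $R\|\theta^*-\tilde\theta^*\|$ is small (which in turn needs $\gamma$ and the data to be such that $RL/\alpha < 1/2$, matching the regime noted after Lemma~\ref{lem:selfconcordant-hessian}), and combining it cleanly with the mean-shift term under a single high-probability event to get the stated closed form. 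A secondary subtlety is justifying that $\theta^*$ and $\tilde\theta^*$ both lie in the region where local strong convexity and the self-concordance bound apply, i.e. that the segment $[\theta^*, \tilde\theta^*]$ stays inside $\Theta^*$; this is implicitly assumed via the local definitions of $L$, $\beta$, and $\alpha$ over that segment.
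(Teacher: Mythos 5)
Your overall architecture is the same as the paper's: write the privacy loss random variable for the two Gaussians $\cN(\theta^*,[\gamma\nabla^2F_s(\theta^*)]^{-1})$ and $\cN(\tilde{\theta}^*,[\gamma\nabla^2\tilde{F}_s(\tilde{\theta}^*)]^{-1})$, split it into a log-determinant (covariance) part and a quadratic-form (mean-shift) part, control the Hessian variation with the generalized-self-concordance stability lemma, bound $\|\theta^*-\tilde{\theta}^*\|$ by strong convexity, and finish with a Gaussian tail bound on the $\chi$-distributed quantity $\|\theta-\theta^*\|_{H_1}$ followed by a tail-to-DP conversion. That is exactly the paper's route (it proves a per-instance bound first and then takes the supremum over the added point $z=(x,y)$).

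There is, however, one concrete step in your plan that would fail: your accounting of where $\beta$ enters. The mean-shift bound carries no $\beta$ at all — optimality conditions plus local strong convexity give $\|\theta^*-\tilde{\theta}^*\|_2\le L\|x\|_2/\alpha$ cleanly. Moreover, the Hessian-stability lemma only relates $\nabla^2F_s(\theta^*)$ to $\nabla^2F_s(\tilde{\theta}^*)$, i.e., the \emph{same} function at two points, with spectral ratio $e^{\pm R\|\theta^*-\tilde{\theta}^*\|_2}=e^{\pm RL\|x\|_2/\alpha}$; it cannot produce the $\beta$ either, so your claim that it yields a ratio "controlled by $R(L+\beta)/\alpha$" is not justified. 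The covariance actually used for the neighboring dataset is $[\gamma(\nabla^2F_s(\tilde{\theta}^*)+l''(y,x^T\tilde{\theta}^*)xx^T)]^{-1}$, and the extra rank-one term is what generates every $\beta$ in the final bound. It must be handled separately: for the log-determinant part via the matrix determinant lemma, $|H_2|/|\tilde{H}_2|=1-\mu_2$ with leverage score $\mu_2=l''\,x^T\tilde{H}_2^{-1}x\le\beta\|x\|_{\tilde{H}_2^{-1}}^2\le\beta/\alpha$, and for the quadratic part via direct expansion of $\|\theta-\tilde{\theta}^*\|_{H_2}^2-\|\theta-\tilde{\theta}^*\|_{\tilde{H}_2}^2=-l''\langle x,\theta-\tilde{\theta}^*\rangle^2$, which contributes terms of size $2\beta\|x\|_{H^{-1}}^2\lesssim 2\beta/\alpha$ multiplying both the $\chi^2$ variable and the constant. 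Without this rank-one analysis your derivation would terminate with only $RL/\alpha$ in the first summand and no mechanism for the $+\beta$; the rest of your plan (linearizing $e^{RL/\alpha}-1\approx RL/\alpha$, the $(1+\log(2/\delta))$ structure from the quadratic term, and the $\frac{\gamma L^2}{\alpha}+\sqrt{\frac{\gamma L^2}{\alpha}\log(2/\delta)}$ mean-shift contribution) matches the paper.
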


The proof follows by taking an  upper bound of the per-instance DP loss (Theorem~\ref{thm: glm}) $\epsilon(Z,z)$ over $z=(x,y)\in (\cX, \cY)$.

Notice that the Hessians can be arbitrarily singular and $\alpha$ could be $0$, which leads to an infinite privacy loss without additional assumptions. Thus, we will impose an additional regularization of form $\frac{\lambda}{2}||\theta||^2$, which ensures that for any dataset $F_S$ is $\lambda$-strongly convex.

This is not yet DP because it is still about a fixed  dataset. We also need a pre-specified privacy budget $(\epsilon,\delta)$.
We next demonstrate how to apply the generalized PTR to provide a general solution to the above GLM, using logistic regression as an example.

 \begin{remark}[Logistic regression]\label{remark: log}
 	For logistic regression, we know $L\leq 1$, $\beta \leq 1/4$ and if $\|x\|_2 \leq 1$, it is $1$-generalized self-concordant. For any dataset $Z=(X, y)$, the data-dependent DP $\epsilon(X)$ w.r.t. $\delta$ can be simplified to:
 	$$
 	\frac{1.25}{\alpha}(1+\log(2/\delta)) +\frac{\gamma}{\alpha} +   \sqrt{\frac{\gamma}{\alpha}\log(2/\delta)}
 	$$
 \end{remark}
Now, the data-dependent DP is a function of $\alpha$ and $\gamma$, where $\alpha$ denotes the local strong convexity at $\theta_\lambda^*$ and $\gamma$ controls the  noise scale. We next show how to select these two parameters adapted to the dataset. 

\begin{example}
We demonstrate here how we apply  generalized PTR to  output perturbation of the logistic regression problem.  
	\begin{enumerate}
		%\item First calculate $\alpha,\gamma$ using the conservative approach.
		\item Take an exponential grid of parameters $\{\lambda\}$ and propose each $\lambda$.
		\item Solve for $\theta_{\lambda}^*  = \argmin_{\theta} F(\theta) + \lambda \|\theta\|^2/2$
		\item Calculate the smallest eigenvalue $\lambda_{\min}(\nabla^2F(\theta_\lambda^*))$ (e.g., using power method).
		\item Differentially privately release $\lambda_{\min}$ with $\lambda_{\min}^p:= \max\{\lambda_{\min}+ \frac{\sqrt{\log(4/\delta)}}{\epsilon/2}\cdot \Delta_{GS}\cdot Z -\frac{\sqrt{2\log(4/\delta) \cdot \log(1/\delta)}\Delta_{GS}}{\epsilon/2},0\}$, where $\Delta_{GS}$ denote the global sensitivity of $\lambda_{\min}$
		using Theorem~\ref{thm: gs_lambda}.
		\item Let $\epsilon^p(\cdot)$ be instantiated with $\epsilon(X)$ w.r.t. $\delta$ from Remark~\ref{remark: log}, where $\alpha = \lambda_{\min}^p +\lambda$. 
		Then, conditioned on a high probability event, $\epsilon^p(\cdot)$ (a function of $\gamma$) is a valid DP bound that holds for all datasets and all parameters $\gamma$.
		\item  Calculate the maximum $\gamma$ such that $\epsilon_{\delta/2}^p(\gamma)\leq \epsilon/2$.
		%\item Calculate the maximum $\gamma$ that remains feasible to the privacy budget.
		\item Release  $\hat{\theta}  \sim \cN(\theta_\lambda^*, \gamma^{-1}\nabla^2 F_s(\theta_\lambda^*)^{-1})$.
		\item Evaluate the utility on the validation set and return the $(\lambda, \gamma)$ pair that leads to the highest utility.
	\end{enumerate}
\begin{theorem}
For each proposed $\lambda$, the algorithm that releases  $\hat{\theta}  \sim \cN(\theta_\lambda^*, \gamma^{-1}\nabla^2 F_s(\theta_\lambda^*)^{-1})$ is $(\epsilon, 2\delta)$-DP.
\end{theorem}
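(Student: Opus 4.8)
The plan is to apply the Generalized PTR privacy guarantee (Theorem~\ref{exp: upperbound}, the ``private upper bound'' variant) with a careful accounting of the privacy budget split between releasing $\lambda_{\min}$ and running the posterior-sampling/output-perturbation mechanism. First I would observe that the algorithm has exactly the structure of Generalized PTR with a privately released upper bound: step (4) privately releases $\lambda_{\min}^p$, which by the Gaussian tail bound is (with probability at least $1-\delta$) a lower bound on the true $\lambda_{\min}(\nabla^2 F(\theta_\lambda^*))$; since $\epsilon(X)$ from Remark~\ref{remark: log} is monotonically decreasing in $\alpha = \lambda_{\min} + \lambda$, plugging in the smaller $\lambda_{\min}^p$ yields $\epsilon^p(\cdot)$ which is (on that high-probability event) a valid \emph{upper} bound on the data-dependent DP $\epsilon_\phi(X)$ uniformly over all $\gamma$. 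Step (6) then chooses the largest $\gamma$ making $\epsilon^p_{\delta/2}(\gamma) \le \epsilon/2$, which plays the role of the test $\cT$ — here the ``test'' always passes because $\gamma$ is calibrated rather than proposed (the ``no $\perp$'' variant mentioned for the linear-regression experiments).

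The key steps, in order: (i) invoke Theorem~\ref{thm: gs_lambda} to get the global sensitivity $\Delta_{GS}$ of $\lambda_{\min}(\nabla^2 F(\theta_\lambda^*))$ under the regularization $\frac{\lambda}{2}\|\theta\|^2$ (the added regularizer is what makes this finite), and verify that the Gaussian mechanism in step (4) with noise scale $\sqrt{\log(4/\delta)}\,\Delta_{GS}/(\epsilon/2)$ satisfies $(\epsilon/2, \delta/2)$-DP; (ii) show the shift term $-\sqrt{2\log(4/\delta)\log(1/\delta)}\,\Delta_{GS}/(\epsilon/2)$ guarantees $\Pr[\lambda_{\min}^p \le \lambda_{\min}] \ge 1-\delta$ via a standard Gaussian tail bound, so the event $E = \{\lambda_{\min}^p \le \lambda_{\min}\}$ — equivalently $\alpha^p := \lambda_{\min}^p + \lambda \le \alpha$ — holds w.p. $\ge 1-\delta$; (iii) on $E$, use monotonicity of the expression in Remark~\ref{remark: log} in $\alpha$ to conclude $\epsilon^p_{\delta/2}(\gamma) \ge \epsilon_\phi(X)$ (the true data-dependent DP w.r.t. $\delta$ — note the $\delta$ vs.\ $\delta/2$ bookkeeping, handled by the two-$\delta$-halves structure), hence by step (6) the true mechanism $\cN(\theta_\lambda^*, \gamma^{-1}\nabla^2 F_s(\theta_\lambda^*)^{-1})$ satisfies $(\epsilon/2, \delta/2)$ data-dependent DP for $X$; (iv) compose: $\lambda_{\min}^p$ is $(\epsilon/2, \delta/2)$-DP, the conditional release of $\hat\theta$ is $(\epsilon/2, \delta/2)$-DP on the good event which fails w.p.\ $\le \delta$, and post-processing $\lambda_{\min}^p$ into $\gamma$ is free; summing gives $(\epsilon, \delta/2 + \delta/2 + \delta) = (\epsilon, 2\delta)$-DP. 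This matches the additive-$\delta'$ accounting of Theorem~\ref{exp: upperbound} with $\hat\epsilon = \epsilon/2$, $\hat\delta = \delta/2$, $\delta' = \delta$, and nominal budget $\epsilon/2$.

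The main obstacle I expect is step (iii): making rigorous that $\epsilon^p$ is a valid DP bound \emph{simultaneously for all $\gamma$} and that it dominates the genuine data-dependent DP. This requires (a) that Theorem~\ref{thm: glm}/Remark~\ref{remark: log} genuinely upper-bounds the per-instance privacy loss $\epsilon(Z,z)$ uniformly over neighbors $z$ — which in turn needs the self-concordance-based Hessian stability (Lemma~\ref{lem:selfconcordant-hessian}) to control how the covariance $\nabla^2 F_s(\theta_\lambda^*)^{-1}$ changes between neighboring datasets, and needs the local strong-convexity constant $\alpha$ to be a \emph{lower} bound valid along the whole segment $[\theta_\lambda^*, \tilde\theta_\lambda^*]$, not just at $\theta_\lambda^*$; and (b) reconciling the $\log(2/\delta)$ appearing inside Remark~\ref{remark: log} with the fact that we evaluate ``w.r.t.\ $\delta/2$'' in step (6) — I would simply track the factor-of-two and absorb it, since the final claim already pays $2\delta$. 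A secondary subtlety is that $\theta_\lambda^*$ itself (and hence $L$, $\beta$, the segment $[\theta_\lambda^*,\tilde\theta_\lambda^*]$) is data-dependent, so one must check that the bounds used ($L \le 1$, $\beta \le 1/4$ for logistic loss with $\|x\|_2 \le 1$) are genuinely data-independent worst-case constants — which they are, which is precisely why logistic regression is the clean example. Everything else is routine Gaussian-tail and composition bookkeeping.
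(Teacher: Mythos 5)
Your proposal is correct and follows essentially the same route as the paper's proof: release $\lambda_{\min}$ with the Gaussian mechanism under an $(\epsilon/2,\delta/2)$ budget, use the shifted noise to get a high-probability ($1-\delta$) lower bound on $\alpha=\lambda_{\min}+\lambda$ so that the resulting $\epsilon^p(\cdot)$ upper-bounds the data-dependent DP uniformly in $\gamma$, calibrate $\gamma$ so that this upper bound is at most $(\epsilon/2,\delta/2)$, and compose to get $(\epsilon,2\delta)$. The paper's own proof is only a three-line sketch of exactly this accounting, so your more detailed treatment of the monotonicity-in-$\alpha$ step and the tail-bound direction is a faithful elaboration rather than a different argument.
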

\begin{proof}
The proof  follows the recipe of generalized PTR with private upper bound (Example~\ref{exp: upperbound}). First, the release of $\lambda_{\min}(\nabla^2 F(\theta_\lambda^*))$ is $(\epsilon/2, \delta/2)$-DP. Then, with probability at least $1-\delta$, $\epsilon_\delta^p(\cdot)> \epsilon_\delta(X)$ holds for all $X$ and $\gamma$.  Finally, $\gamma$ is chosen such that the valid upper bound is $(\epsilon/2, \delta/2)$-DP.
\end{proof}

For the hyper-parameter tuning on $\lambda$ (Steps 1 and 8), we can use Algorithm~\ref{alg: parameter_ptr} to evaluate each $\lambda$.

Unlike Example~\ref{examp:ops}, the $\lambda_\text{min}(\nabla^2 F(\theta_\lambda^*))$ is a complicated data-dependent function of $\lambda$. Thus, we cannot privately release the data-dependent quantity $\lambda_\text{min}(\nabla^2 F(\theta_\lambda^*))$ without an input $\lambda$. The PTR approach allows us to test a number of different $\lambda$ and hence get a more favorable privacy-utility trade-off.
\end{example}
An interesting perspective of this algorithm for logistic regression is that increasing the regularization $\alpha$ is effectively increasing the number of data points within the soft ``margin''\footnote{If we think of logistic regression as a smoothed version of SVM, then increasing $\alpha$ leads to more support vectors. The ``margin'' is ``softer'' in logistic regression, but qualitatively the same.} of separation, hence a larger contribution to the Hessian from the loss function.

\begin{remark}
    The PTR solution for GLMs follows a similar recipe: propose a regularization strength $\lambda$; construct a lower bound of the strong convexity $\alpha$ at the optimal solution $\theta_\lambda^*$; and test the validity of data-dependent DP using Theorem~\ref{thm: glm}.
\end{remark}

Before moving on to other applications of generalized PTR, we will show how to differentially privately release $\lambda_{min}$ according to the requirements of the logistic regression example.

\subsection{Differentially privately release $\lambda_{min}\left(\nabla^2F(\theta)\right)$}

To privately release $\lambda_{min}{\nabla^2F(\theta)}$, we first need to compute its global sensitivity. Once we have that then we can release it differentially privately using either the Laplace mechanism or the Gaussian mechanism.

\begin{theorem}[Global sensitivity of the minimum eigenvalue at the optimal solution]\label{thm: gs_lambda}
	Let $F(\theta) = \sum_{i=1}^n f_i(\theta) +  r(\theta)$ and $\tilde{F}(\theta) = F(\theta) + f(\theta)$ where  $f_1,...,f_n$ are  loss functions corresponding to a particular datapoint $x$.  
	Let $\theta^* = \argmin_\theta F(\theta)$ and $\tilde{\theta}^* = \argmin_\theta \tilde{F}(\theta)$. Assume $f$ is $L$-Lipschitz and $\beta$-smooth, $r(\theta)$ is $\lambda$-strongly convex, and $F$ and $\tilde{F}$ are $R$-self-concordant. If in addition, $\lambda \geq  RL$, then we have
	$$
	\sup_{X,x} (\lambda_{min}(\nabla^2F(\theta_\lambda^*)) -\lambda_{min}(\nabla^2\tilde{F}(\tilde{\theta_\lambda^*})))   \leq   2RL + \beta.
	$$
\end{theorem}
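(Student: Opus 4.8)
The plan is to bound the quantity $\lambda_{\min}(\nabla^2 F(\theta_\lambda^*)) - \lambda_{\min}(\nabla^2 \tilde F(\tilde\theta_\lambda^*))$ by splitting it into two pieces via a triangle-inequality-style argument: the change coming from \emph{adding one loss term} $f$ to the objective (at a fixed point), and the change coming from \emph{moving the evaluation point} from $\theta_\lambda^*$ to $\tilde\theta_\lambda^*$. Concretely, write
\begin{align*}
\lambda_{\min}(\nabla^2 F(\theta_\lambda^*)) - \lambda_{\min}(\nabla^2 \tilde F(\tilde\theta_\lambda^*))
&= \underbrace{\big[\lambda_{\min}(\nabla^2 F(\theta_\lambda^*)) - \lambda_{\min}(\nabla^2 \tilde F(\theta_\lambda^*))\big]}_{\text{(I): add }f\text{ at fixed point}} \\
&\quad + \underbrace{\big[\lambda_{\min}(\nabla^2 \tilde F(\theta_\lambda^*)) - \lambda_{\min}(\nabla^2 \tilde F(\tilde\theta_\lambda^*))\big]}_{\text{(II): move the point}}.
\end{align*}
For term (I), since $\nabla^2 \tilde F = \nabla^2 F + \nabla^2 f$ and $\|\nabla^2 f\|_{\mathrm{op}} \le \beta$ by $\beta$-smoothness of $f$, Weyl's inequality for eigenvalues of sums of symmetric matrices gives $|\lambda_{\min}(\nabla^2 F(\theta_\lambda^*)) - \lambda_{\min}(\nabla^2 \tilde F(\theta_\lambda^*))| \le \beta$.

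The main work is term (II), controlling how $\lambda_{\min}(\nabla^2 \tilde F(\cdot))$ changes between the two minimizers. First I would bound the distance $\|\theta_\lambda^* - \tilde\theta_\lambda^*\|_2$ between the optimizers: by strong convexity of $F$ (at least $\lambda$-strongly convex because of the $\lambda\|\theta\|^2/2$ regularizer in $r$) and the optimality conditions $\nabla F(\theta_\lambda^*) = 0$, $\nabla \tilde F(\tilde\theta_\lambda^*) = \nabla F(\tilde\theta_\lambda^*) + \nabla f(\tilde\theta_\lambda^*) = 0$, a standard argument gives $\|\theta_\lambda^* - \tilde\theta_\lambda^*\|_2 \le \|\nabla f(\tilde\theta_\lambda^*)\|_2 / \lambda \le L/\lambda$ using $L$-Lipschitzness of $f$. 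Next, I would invoke the self-concordance of $\tilde F$ (Lemma~\ref{lem:selfconcordant-hessian}): with $v = \tilde\theta_\lambda^* - \theta_\lambda^*$, provided $R\|v\|_{H_{\theta_\lambda^*}} < 1$, the Hessians at the two points are spectrally comparable, $(1 - R\|v\|_{H})^2 \nabla^2 \tilde F(\theta_\lambda^*) \prec \nabla^2 \tilde F(\tilde\theta_\lambda^*)$, which implies a multiplicative, hence additive, bound on the gap between their minimum eigenvalues. Since $\|v\|_H \le \|v\|_2 \cdot \|H\|_{\mathrm{op}}^{1/2}$ — or more cleanly, since the result is stated for standard ($\|\cdot\|_2$-based) self-concordance, I would use the version of the stability lemma tied to $\|v\|_{H_\theta}$ and translate: the condition $\lambda \ge RL$ is exactly what makes $R\|v\|$ (or the relevant weighted norm) small enough — roughly $R \cdot (L/\lambda) \le 1$ — so the Hessian comparison kicks in, and a first-order expansion $(1-x)^2 \ge 1 - 2x$ turns the multiplicative factor into an additive loss of order $2R \lambda_{\min} \|v\| \le 2RL$ (after also bounding $\lambda_{\min} \le$ the relevant scale, or absorbing it). Combining, term (II) is at most $2RL$, and adding term (I) yields the claimed $2RL + \beta$.

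I expect the main obstacle to be the bookkeeping in term (II): matching the norm in which self-concordance is assumed ($\|\cdot\|_2$ vs. the Hessian-weighted norm $\|\cdot\|_{H_\theta}$ appearing in Lemma~\ref{lem:selfconcordant-hessian}), and cleanly converting the multiplicative Hessian bound $(1 - R\|v\|_{H})^2 \nabla^2\tilde F(\theta_\lambda^*) \prec \nabla^2\tilde F(\tilde\theta_\lambda^*)$ into an additive bound on $\lambda_{\min}$ — this requires either an a priori bound on $\lambda_{\min}(\nabla^2\tilde F(\theta_\lambda^*))$ (which could itself be as large as $\Theta(n)$) or a more careful argument that the \emph{difference} of minimum eigenvalues, not the eigenvalues themselves, is what gets multiplied by $R\|v\|$. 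The condition $\lambda \ge RL$ is the lever that keeps $R\|v\|_{H_{\theta_\lambda^*}}$ bounded away from $1$; I would double-check that it suffices in the $H_\theta$-norm and not merely the Euclidean norm, possibly tightening the distance bound to $\|\theta_\lambda^* - \tilde\theta_\lambda^*\|_{H} \le L/\sqrt{\lambda}$-type estimates if needed. The Weyl step for (I) and the strong-convexity step for the optimizer distance are routine.
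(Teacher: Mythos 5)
Your decomposition, the Weyl step for term (I), and the use of the self-concordance stability lemma plus a strong-convexity bound on $\|\theta_\lambda^*-\tilde\theta_\lambda^*\|$ for term (II) are exactly the paper's argument. The one obstacle you flag — how to turn the multiplicative Hessian comparison into an \emph{additive} bound on $\lambda_{\min}$ without an a priori scale bound — is resolved in the paper by an exact cancellation: take the distance bound in the form $\|\theta_\lambda^*-\tilde\theta_\lambda^*\|_2 \le L/\lambda_{\min}\big(\nabla^2\tilde F(\tilde\theta_\lambda^*)\big)$ (Proposition~\ref{prop:generalnorm}, using the local strong convexity constant $\lambda_{\min}(\nabla^2\tilde F(\tilde\theta_\lambda^*))$ rather than the cruder $\lambda$). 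Then the additive loss in term (II) is
$\big(e^{R\|v\|_2}-1\big)\,\lambda_{\min}\big(\nabla^2\tilde F(\tilde\theta_\lambda^*)\big) \le 2R\|v\|_2\,\lambda_{\min}\big(\nabla^2\tilde F(\tilde\theta_\lambda^*)\big) \le 2RL$, with the eigenvalue cancelling exactly against the denominator of the distance bound; the hypothesis $\lambda \ge RL$ (hence $\lambda_{\min}(\nabla^2\tilde F(\tilde\theta_\lambda^*)) \ge RL$) serves only to guarantee $R\|v\|_2 \le 1$ so that $e^x-1\le 2x$ applies. With that substitution your argument closes; note that your weaker estimate $\|v\|_2 \le L/\lambda$ alone would leave a factor $\lambda_{\min}(\nabla^2\tilde F(\tilde\theta_\lambda^*))/\lambda$ that need not be bounded.
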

\begin{proof}

\begin{equation}\label{eq:GS_lamb_min_deriv1}
    \begin{split}
 \lambda_{min}&(\nabla^2F(\theta_\lambda^*)) -\lambda_{min}(\nabla^2\tilde{F}(\tilde{\theta_\lambda^*})) \\
 &= (\lambda_{min}(\nabla^2F(\theta_\lambda^*)) -\lambda_{min}(\nabla^2\tilde{F}(\theta_\lambda^*)))  \\&+ (\lambda_{min}(\nabla^2\tilde{F}(\theta_\lambda^*)) -\lambda_{min}(\nabla^2\tilde{F}(\tilde{\theta_\lambda^*}))). 
    \end{split}
\end{equation}

We first bound the part on the left.
By applying Weyl's lemma $\lambda(X+E)-\lambda(X) \leq ||E||_2$,
we have 
\begin{equation}\label{eq:GS_lamb_min_deriv2}
 \sup_{x}||\nabla^2F(\theta_\lambda^*) - \nabla^2\tilde{F(\theta_\lambda^*})||_2 = ||\nabla^2 f(\theta_\lambda^*)||_2 \leq \beta
\end{equation}
In order to bound the part on the right, we apply the semidefinite ordering  using self-concordance, which gives 
$$
e^{-R\|\tilde{\theta_\lambda^*} - \theta_\lambda^*\|} \nabla^2\tilde{F}(\tilde{\theta_\lambda^*})\prec \nabla^2\tilde{F}(\theta_\lambda^*) \prec e^{R\|\tilde{\theta_\lambda^*} - \theta_\lambda^*\|} \nabla^2\tilde{F}(\tilde{\theta_\lambda^*}).
$$
By the Courant-Fischer Theorem and the monotonicity theorem, we also have that for the smallest eigenvalue
\begin{equation}
\label{eq:GS_lamb_min_deriv3}
\begin{split}
e^{-R\|\tilde{\theta_\lambda^*} - \theta_\lambda^*\|} &  \lambda_{\min}\left(\nabla^2\tilde{F}(\tilde{\theta_\lambda^*})\right)  \leq    \lambda_{\min}\left(\nabla^2\tilde{F}(\theta_\lambda^*) \right) \\&\leq e^{R\|\tilde{\theta_\lambda^*} - \theta_\lambda^*\|}   \lambda_{\min}\left(\nabla^2\tilde{F}(\tilde{\theta_\lambda^*})\right).
\end{split}
\end{equation}

Moreover by Proposition~\ref{prop:generalnorm}, we have that 
$$
\|\tilde{\theta_\lambda^*} - \theta_\lambda^*\|_2 \leq  \frac{\|\nabla f(\tilde{\theta^*}_\lambda)\| }{\lambda_{\min}\left(\nabla^2\tilde{F}(\tilde{\theta_\lambda^*})\right)}  \leq \frac{L}{\lambda_{\min}\left(\nabla^2\tilde{F}(\tilde{\theta_\lambda^*})\right)}.
$$

If $ \lambda_{\min}\left(\nabla^2\tilde{F}(\tilde{\theta_\lambda^*})\right) \geq RL$,  then use that $e^x-1 \leq 2x$ for $x\leq 1$. Substituting the above bound to \eqref{eq:GS_lamb_min_deriv3} then to \eqref{eq:GS_lamb_min_deriv1} together with \eqref{eq:GS_lamb_min_deriv2}, we get a data-independent global sensitivity bound of
$$\lambda_{min}(\nabla^2F(\theta_\lambda^*)) -\lambda_{min}(\nabla^2\tilde{F}(\tilde{\theta_\lambda^*}))  \leq 2RL + \beta$$
as stated.
\end{proof}

\begin{proposition}\label{prop:generalnorm}
	Let $\|\cdot\|$ be a norm and $\|\cdot\|_*$ be its dual norm. Let $F(\theta)$, $f(\theta)$ and $\tilde{F}(\theta) = F(\theta) + f(\theta)$ be proper convex functions and $\theta^*$ and $\tilde{theta}^*$ be their minimizers, i.e., $0\in \partial F(\theta^*)$ and $0\in \partial \tilde{F}(\tilde{theta}^*)$.  If in addition, $F,\tilde{F}$ is $\alpha,\tilde{\alpha}$-strongly convex with respect to $\|\cdot\|$ within the restricted domain 
	$\theta \in \{  t\theta^* + (1-t)\tilde{\theta}^*  \;|\;  t\in[0,1]  \}$. 	Then there exists $g \in \partial f(\theta^*)$ and $\tilde{g}\in \partial f(\tilde{\theta}^*)$ such that
	$$
	\|\theta^*-\tilde{\theta}^*\| \leq\min\left\{\frac{1}{\alpha}\| \tilde{g}\|_*,  \frac{1}{\tilde{\alpha}}\| g\|_*\right\}.
	$$
	 %continuously differentiable with Lipschitz gradient. Let $\theta^*$ and $\ttheta^*$ be such that $0\in \partial F(\theta^*)$ and $0\in \partial \tilde{F}(\ttheta^*)$. If in addition $F(\theta)$ is $\alpha$-strongly convex with respect to $\|\cdot\|$ for all 
	%$\theta \in \{  t\theta^* + (1-t)\ttheta^*  \;|\;  t\in[0,1]  \}$. 
\end{proposition}
\begin{proof}
	Apply the first order condition to $F$ restricted to the line segment between $\tilde{\theta}^*$ and $\theta^*$, we get
	\begin{align}
	F(\tilde{\theta}^*) \geq F(\theta^*)  +  \langle \partial F(\theta^*),  \tilde{\theta}^*-\theta^* \rangle  + \frac{\alpha}{2}\|\tilde{\theta}^*-\theta^*\|^2\label{eq:strongcvx1} \\
	F(\theta^*) \geq F(\tilde{\theta}^*)  +  \langle \partial F(\tilde{\theta}^*),  \theta^*-\tilde{\theta}^* \rangle  + \frac{\alpha}{2}\|\tilde{\theta}^*-\theta^*\|^2 \label{eq:strongcvx2}
	\end{align}
	Note by the convexity of $F$ and $f$, $\partial\tilde{F}=  \partial F + \partial f$, where $+$ is the Minkowski Sum. Therefore, $0\in \partial\tilde{F}(\tilde{\theta}^*)$ implies that there exists $\tilde{g}$ such that $\tilde{g}\in \partial f(\tilde{\theta}^*)$ and $-\tilde{g}\in\partial F(\tilde{\theta}^*)$.
	Take $-\tilde{g}\in\partial F(\tilde{\theta}^*)$ in Equation~\ref{eq:strongcvx2} and $0 \in \partial F(\theta^*)$ in Equation~\ref{eq:strongcvx1}  and add the two inequalities, we obtain
	\begin{align*}
		0&\geq \langle -\tilde{g},  \theta^*-\tilde{\theta}^* \rangle  + \alpha \|\tilde{\theta}^* - \theta^*\|^2 \\ &\geq - \|\tilde{g}\|_* \|\theta^*-\tilde{\theta}^*\|  +  \alpha\|\tilde{\theta}^* - \theta^*\|^2. 
	\end{align*}
	For $\|\tilde{\theta}^* - \theta^*\|=0$ the claim is trivially true; otherwise, we can divide  both sides of the above inequality by $\|\tilde{\theta}^* - \theta^*\|$ and get
	$	\|\theta^*-\tilde{\theta}^*\| \leq \frac{1}{\alpha}\| \tilde{g}\|_*$. 
	
	It remains to show that $\|\theta^*-\tilde{\theta}^*\| \leq \frac{1}{\tilde{\alpha}}\|g\|_*$. This can be obtained by exactly the same arguments above but applying strong convexity to $\tilde{F}$ instead. Note that we can actually get something slightly stronger than the statement because the inequality holds for all $g\in \partial f(\theta^*)$.
\end{proof}

%We next empirically evaluate the above algorithm on Adult dataset. Adult is a binary classification task with a non-private classification error about 0.15.
%We follow the preprocessing steps from~\citet{chaudhuri2011differentially} by converting each attribute into a binary feature, applying standard-z-zscoring and normalizing each data point with a Euclidean norm of 1. 

\subsection{Other applications of generalized PTR}

Besides one-posterior sampling for GLMs, there are plenty of examples that our generalized-PTR could be applied, e.g., DP-PCA~\citep{dwork2014analyze} and Sparse-DP-ERM~\citep{kifer2012private} (when the designed matrix is well-behaved).

\citep{dwork2014analyze} provides a PTR style privacy-preserving principle component analysis (PCA).
The key observation of \citep{dwork2014analyze} is that the local sensitivity is quite ``small'' if there is a large eigengap between the $k$-th and the $k+1$-th eigenvalues. Therefore, their approach (Algorithm 2) chooses to privately release a lower bound of the k-th eigengap ($k$ is fixed as an input) and use that to construct a high-confidence upper bound of the local sensitivity.

For noise-adding mechanisms, the local sensitivity is proportional to the data-dependent loss and generalized PTR is applicable. We can formulate the data-dependent DP of DP-PCA as follows:

\begin{theorem}~\label{thm: pca}
For a given matrix $A\in \cR^{m \times n}$, assume each row of $A$ has a bounded $\ell_2$ norm being $1$.  Let $V_k$ denotes the top $k$ eigenvectors of $A^TA$ and $d_k$ denotes the gap between the $k$-th and the $k+1$-th eigenvalue. Then releasing $V_k V_k^T + E$, where $E \in \cR^{n\times n}$ is a symmetric matrix with the upper triangle is i.i.d samples from $\cN(0, \sigma^2)$ satisfies $(\epsilon(A), \delta)$ data-dependent DP and $\epsilon(A) = \frac{2\sqrt{\log(1.25/\delta)}}{\sigma(d_k -2)} $. 
\end{theorem}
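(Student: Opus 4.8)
The plan is to read the mechanism as the (analytic) Gaussian mechanism applied to the map $f(A) = V_k V_k^T$, except that the noise is calibrated to the \emph{local} Frobenius sensitivity of $f$ at the fixed input $A$ rather than to the global sensitivity. By Definition~\ref{def:data_dep_dp} it then suffices to do two things: (i) produce an upper bound $\Delta(A)$ on $\|f(A) - f(A')\|_F$ that holds simultaneously for every neighbor $A'$ of $A$, and (ii) invoke the standard tight Gaussian-mechanism guarantee, which states that perturbing a vector-valued (here, symmetric-matrix-valued) quantity of $\ell_2$/Frobenius sensitivity $\Delta$ by $\cN(0,\sigma^2)$ noise is $(\epsilon,\delta)$-DP with $\epsilon = \Delta\sqrt{2\log(1.25/\delta)}/\sigma$. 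I would note once, explicitly, that this guarantee depends on $A$ and $A'$ only through the symmetric quantity $\|f(A)-f(A')\|_F$, so both inequalities required by data-dependent DP hold with the same $\epsilon(A)$.

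First I would bound how much the Gram matrix can move: if $A' $ is obtained from $A$ by changing a single (unit-norm, hence norm $\le 1$) row $a$ to $a'$, then $A^TA - A'^TA' = aa^T - a'a'^T$ is symmetric of rank at most $2$ with operator norm at most $1$ and Frobenius norm at most $\sqrt 2$. Weyl's inequality then shows that every eigenvalue of $A'^TA'$ lies within $1$ of the corresponding eigenvalue of $A^TA$; in particular $\lambda_{k+1}(A'^TA')$ increases by at most $1$ and $\lambda_k(A'^TA')$ decreases by at most $1$, so the relevant $(k,k{+}1)$-eigengap of \emph{every} neighboring Gram matrix is at least $d_k - 2$. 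This is the source of the ``$d_k-2$'' in the denominator.

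Next I would apply the Davis--Kahan $\sin\Theta$ theorem in its Frobenius form (using the perturbed matrix's eigengap, which we have just lower-bounded by $d_k - 2$): $\|\sin\Theta(V_k, V_k')\|_F \le \|A^TA - A'^TA'\|_F / (d_k - 2)$, together with the standard projector identity $\|V_k V_k^T - V_k' V_k'^T\|_F = \sqrt 2\,\|\sin\Theta(V_k,V_k')\|_F$. Combining the pieces gives a local Frobenius sensitivity of the form $\Delta(A) \le \sqrt 2/(d_k-2)$, and plugging this into $\epsilon = \Delta(A)\sqrt{2\log(1.25/\delta)}/\sigma$ yields $\epsilon(A) = 2\sqrt{\log(1.25/\delta)}/(\sigma(d_k-2))$, exactly as stated; when $d_k \le 2$ the bound is vacuous and the statement is read as $\epsilon(A) = \infty$. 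The main obstacle is the constant bookkeeping in this last step: choosing the correct (one-sided vs.\ two-sided) Davis--Kahan eigengap and deciding whether it is measured at $A$ or at $A'$ determines whether the denominator comes out as $d_k-2$ rather than $d_k-1$ or $d_k-4$, and one must carefully track the three distinct $\sqrt 2$ factors --- the Frobenius norm of the rank-$\le 2$ Gram perturbation, the conversion between the $\sin\Theta$ distance and the projector distance, and (depending on exactly how the symmetric noise $E$ is parameterized on its upper triangle) a possible further $\sqrt2$ adjustment in relating the Frobenius distance to the sensitivity consumed by the mechanism. A secondary, easy point to get right is that the symmetry of the Gaussian guarantee is what lets us conclude both directions of Definition~\ref{def:data_dep_dp} at once.
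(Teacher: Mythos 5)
Your route --- reading the mechanism as a Gaussian mechanism calibrated to the \emph{local} Frobenius sensitivity of $f(A)=V_kV_k^T$, bounding that sensitivity by combining Weyl's inequality (so every neighboring Gram matrix has $(k,k{+}1)$-eigengap at least $d_k-2$) with a Davis--Kahan $\sin\Theta$ bound, and then converting to data-dependent DP via the symmetry of the Gaussian guarantee --- is exactly the route the paper intends: its entire ``proof'' is one sentence deferring to the local-sensitivity result of \citet{dwork2014analyze} and Gaussian noise calibration, so your write-up is strictly more detailed than the original. Your preliminary facts are all correct: $A^TA - A'^TA' = aa^T - a'a'^T$ has rank at most $2$, operator norm at most $1$, and Frobenius norm at most $\sqrt2$, and the $d_k-2$ in the denominator does come from Weyl applied to the neighbor's gap.

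The genuine gap is the constant, which you flag as ``bookkeeping'' but do not resolve --- and under the natural accounting it does not close. Chaining $\|A^TA-A'^TA'\|_F\le\sqrt2$, the Frobenius Davis--Kahan bound $\|\sin\Theta\|_F\le \|A^TA-A'^TA'\|_F/(d_k-2)$, and the projector identity $\|V_kV_k^T-V_k'V_k'^{T}\|_F=\sqrt2\,\|\sin\Theta\|_F$ yields $\Delta(A)\le 2/(d_k-2)$, not the $\sqrt2/(d_k-2)$ you assert, and hence $\epsilon(A)=2\sqrt2\sqrt{\log(1.25/\delta)}/(\sigma(d_k-2))$ --- a factor $\sqrt2$ larger than the theorem states. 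The candidate rescues do not obviously work: replacing $\|E\|_F$ by $\|E\|_2\le 1$ in Davis--Kahan costs a $\sqrt{\min(k,n-k)}$ factor, and the upper-triangle parameterization of the noise only saves a full $\sqrt2$ when the diagonal of the sensitivity direction vanishes, which is not the case for a difference of projectors onto subspaces of equal dimension (its diagonal need not be zero, only its trace). So as written your argument establishes the theorem only up to a $\sqrt2$; to complete it you must either locate a sharper perturbation bound (e.g., the precise form of the local-sensitivity lemma in \citet{dwork2014analyze}) or note that the constant in the stated $\epsilon(A)$ appears optimistic by $\sqrt2$.
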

The proof is based on the local sensitivity result from \citep{dwork2014analyze} and the noise calibration of Gaussian mechanism.

We can combine Theorem~\ref{thm: pca} with our Algorithm~\ref{alg: parameter_ptr} to instantiate the generalized PTR framework. The improvement over \citet{dwork2014analyze} will be to allow joint tuning of the parameter $k$ and the noise variance (added to the spectral gap $d_k$).

\section{Omitted proofs in Section~\ref{sec:gen_ptr}}\label{sec: omit_gen_ptr}

The utility of Algorithm~\ref{alg: parameter_ptr}
depends on how many rounds that Algorithm~\ref{alg:gen_ptr} is invoked. We next provide the utility guarantee of Algorithm~\ref{alg: parameter_ptr}, which follows a simplification of the result in the Section A.2 of~\citet{papernot2021hyperparameter}.
\begin{theorem}
 Suppose applying Algorithm~\ref{alg:gen_ptr} with each $\phi_i$ has an equal probability to achieve the highest validation score. Let $\hat{T}$ denotes the number of invocation of Algorithm~\ref{alg:gen_ptr}, where $\hat{T}$ follows a truncated geometric distribution. Then the expected quantile of the highest score candidate is given by $\mathbb{E}_{\hat{T}}\bigg[1 -\frac{1}{\hat{T}+1}\bigg]$.
\end{theorem}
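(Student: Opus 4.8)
The plan is to model the selection process exactly as a sampling-without-replacement-style tournament and then take an expectation over the random stopping time $\hat{T}$. First I would fix $\hat{T} = t$ and condition on it. Under the hypothesis that each of the (effectively continuum of) candidate runs is equally likely to be the best-of-the-batch — i.e. the ranks of the $t$ independent draws are exchangeable and almost surely distinct — the index achieving the highest validation score among $t$ i.i.d.\ draws is uniformly distributed among the $t$ positions. Consequently, if $U$ denotes the population quantile (the fraction of all possible runs whose score is at most that of the selected candidate), then the maximum of $t$ i.i.d.\ uniform-on-$[0,1]$ random variables has the Beta$(t,1)$ distribution, so $\E[U \mid \hat{T}=t] = \frac{t}{t+1} = 1 - \frac{1}{t+1}$. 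This is the standard fact that the expected value of $\max_{i\le t} U_i$ for i.i.d.\ $U_i \sim \mathrm{Unif}[0,1]$ equals $t/(t+1)$, which one gets immediately from $\Pr[\max_i U_i \le u] = u^t$ and integrating the tail, $\int_0^1 (1-u^t)\,\diff u = 1 - \frac{1}{t+1}$.

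Second, I would remove the conditioning by the tower rule: the overall expected quantile of the highest-scored candidate is $\E_{\hat{T}}\big[\,\E[U \mid \hat{T}]\,\big] = \E_{\hat{T}}\big[1 - \frac{1}{\hat{T}+1}\big]$, which is exactly the claimed expression. Since $\hat{T} = \min(T, G)$ with $G$ geometric, this is a finite weighted sum that could in principle be written out, but the statement is content to leave it in expectation form, so no further computation is needed. The only modeling points worth spelling out are (i) that $\hat{T}$ is determined by the internal randomness of Algorithm~\ref{alg: parameter_ptr} (the draw of $G$ and the cutoff $T$) and is independent of the validation scores, so the conditioning is legitimate, and (ii) that ``equal probability to achieve the highest validation score'' is precisely the exchangeability assumption needed to conclude that the argmax position is uniform, hence that the selected quantile is distributed as the max of $\hat{T}$ uniforms.

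The main obstacle — really the only subtlety — is justifying the reduction to ``max of $\hat{T}$ i.i.d.\ uniforms.'' One has to be slightly careful that the hypothesis is about being the best \emph{in the realized batch} and that ties have probability zero (or are broken arbitrarily and symmetrically), so that conditional on $\hat{T}=t$ the winning run is uniform among the $t$ candidates and its population quantile is $\mathrm{Unif}$ on $\{1/t, 2/t, \dots, 1\}$ in the discrete-candidate version, or Beta$(t,1)$ in the continuum idealization; either way the expectation is $1-\frac{1}{t+1}$ (in the discrete case $\frac{1}{t}\sum_{j=1}^{t}\frac{j}{t} = \frac{t+1}{2t}$ would be the \emph{rank}-based quantity, so one should be clear that ``quantile of the highest score'' here means the quantile of the max, $\E[\max] $, not the average rank — I would align the statement of the lemma with the $1-\frac{1}{t+1}$ formula by reading ``quantile of the highest score candidate'' as $\E[\,\Pr_{\phi\sim\mathrm{Unif}}[q(\mathrm{PTR}(\phi)) \le q(\tilde\theta_{\text{out}})]\,]$). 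Everything after that point is the one-line tail integral and the tower property, so the write-up should be short, mostly a matter of stating the exchangeability assumption cleanly and citing Section A.2 of \citet{papernot2021hyperparameter} for the identical argument.
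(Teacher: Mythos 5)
Your proof is correct and it is the standard argument (essentially the one in Appendix A.2 of \citet{papernot2021hyperparameter}): condition on $\hat{T}=t$, note that under exchangeability with almost-surely distinct scores the quantile of the selected run is distributed as the maximum of $t$ i.i.d.\ $\mathrm{Unif}[0,1]$ variables, integrate the tail $\int_0^1(1-u^t)\,\diff u = 1-\frac{1}{t+1}$, and then apply the tower rule using the independence of $\hat{T}$ (determined by $G$ and the cutoff $T$) from the scores. Interestingly, this is \emph{not} the route the paper itself takes. The paper's proof works with the discrete $k$-candidate model: it computes the probability $\beta$ that the algorithm returns the single best configuration, $\beta = 1-\E_{\hat{T}}\big[(1-\tfrac{1}{k})^{\hat{T}}\big]$, and then applies a first-order approximation $\E[x^{\hat{T}}]\approx 1-\E[\hat{T}]/k$ to argue that for large $k$ and $\tau=0.1/k$ the best candidate is returned with high probability. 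That derivation never actually produces the expression $\E_{\hat{T}}\big[1-\frac{1}{\hat{T}+1}\big]$ appearing in the theorem statement; it establishes a related but distinct "probability of exact best" guarantee. Your argument is therefore the more faithful proof of the claim as literally stated, and your caveat distinguishing the expected quantile of the maximum from the average rank $\frac{t+1}{2t}$ is a worthwhile clarification; what the paper's version buys instead is an explicit, interpretable dependence on the number of candidates $k$ and guidance for choosing $\tau$, at the cost of an approximation and of proving a slightly different statement.
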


In practice, we can roughly set $\tau = \frac{1}{10k}$ so that the algorithm is likely to test all $k$ parameters. 

%algorithm can return the best $\phi_i\in \phi_{[1:k]}$ in expectation.

\begin{proof}
 Suppose each oracle access to $Q(X)$ has a probability $1/k$ of achiving the best validation accuracy. Let $\beta$ denote the probability that $\cA$ (shorthand for Algorithm~\ref{alg: parameter_ptr}) outputs the best choice of $\phi_i$. 
 \begin{align*}
 \beta &= 1-\pr[\cA(X) \text{is not best}] \\ &=1- \mathbb{E}_{\hat{T}}\bigg[\pr[Q(X) \text{is not best}]^{\hat{T}}\bigg] \\ &= 1 -\mathbb{E}_{\hat{T}}\bigg[ (1-\frac{1}{k})^{\hat{T}}\bigg].
 \end{align*}
 Let $f(x)=\mathbb{E}[x^{\hat{T}}]$. Applying a first-order approximation on $f(1-\frac{1}{k})$, we have $f(1-\frac{1}{k})\approx f(1) -f'(1) \cdot \frac{1}{k}=1-\mathbb{E}[\hat{T}]/k$. Then, if $k$ is large and we choose $\tau=0.1/k$, $\cA$ can roughly return the best $\phi_i$.
\end{proof}

\section{Experimental details}

\subsection{Experimental details in private linear regression}

We start with the privacy calibration of the OPS-PTR algorithm.
% slightly different with Example 4.6

\begin{algorithm}[t]
	\caption{OPS-PTR: One-Posterior Sample with propose-test-release (no-``perp'' version)}
	\label{alg: ops_ptr}
	\begin{algorithmic}[1]
		\STATE {\textbf{Input}: Data $X, \bf{y}$. Private budget : $\epsilon, \delta$, proposed regularizer $\lambda$. }
		\STATE{Calculate the minimum eigenvalue $\lambda_{\text{min}}(X^TX)$.}
		\STATE{Sample $Z \sim \cN(0, 1)$ and privately release $\tilde{\lambda}_{\text{min}}=\text{max}\bigg\{
		\lambda_{\text{min}}+ \frac{\sqrt{\log(6/\delta)}}{\epsilon/4}Z -\frac{\sqrt{2\log(6/\delta)\cdot \log(2/\delta)}}{\epsilon/4},0\bigg\}$}
		\STATE{Calculate $\hat{\theta}=(X^TX+\lambda I)^{-1}X^Ty$.}
		\STATE{Sample $Z\sim \cN(0, 1)$ and privately release $\Delta = \log(||\cY||+||\cX||||\hat{\theta}||)+ \frac{\log(1+||\cX||^2/(\lambda +\tilde{\lambda}_{\text{min}}))}{\epsilon/(4\sqrt{6/\delta})}Z + \frac{\log(1+||\cX||^2/(\lambda +\tilde{\lambda}_{\text{min}}))}{\epsilon/(4\sqrt{2\log(6/\delta)\log(2/\delta)})}$.}
		\STATE{Set the local Lipschitz $\tilde{L}:=||X||e^\Delta.$}
		\STATE{ Calibrate $\gamma$ with Theorem~\ref{thm: per}($\delta/3, \epsilon/2$.) }
		\STATE{Output $\tilde{\theta}\sim p(\theta|X,\bf{y}) \propto e^{-\frac{\gamma}{2}||\bf{y}-X\theta||^2 + \lambda||\theta||^2}$}
	\end{algorithmic}
\end{algorithm}

Algorithm~\ref{alg: ops_ptr} provides the detailed privacy calibration of the private linear regression problem.

\begin{theorem}
Algorithm~\ref{alg: ops_ptr} is $(\epsilon, 2\delta)$-DP.
\end{theorem}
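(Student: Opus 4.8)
Algorithm~\ref{alg: ops_ptr} is an instance of ``generalized PTR with a private upper bound'' (Theorem~\ref{exp: upperbound}) applied to the posterior-sampling mechanism of Example~\ref{exp: posterior}, but in the ``no-$\perp$'' regime: rather than proposing $\gamma$ and testing it, we privately release a lower bound on $\lambda_{\min}(X^\top X)$ and an upper bound on the local Lipschitz constant $L$, and then pick the largest $\gamma$ that provably keeps the data-dependent DP of Theorem~\ref{thm: per} below budget. Accordingly, I would allot $\epsilon/2$ to the sanitization phase (steps~2--6) and $\epsilon/2$ to the final posterior draw (step~8), while keeping a running tab of the several $\delta$-slacks: those incurred by the two Gaussian releases, that incurred by the data-dependent DP bound of Theorem~\ref{thm: per} (which holds w.r.t.\ $\delta/3$), and the probability that one of the released bounds is invalid.

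\emph{Step 1: the sanitization phase is $(\epsilon/2,\cdot)$-DP.} The map $X\mapsto\lambda_{\min}(X^\top X)$ has global sensitivity $1$: adding or removing a unit-norm row perturbs $X^\top X$ by a symmetric matrix of spectral norm at most $1$, so Weyl's inequality bounds the change in $\lambda_{\min}$ by $1$. Hence step~3 is a Gaussian mechanism and is $(\epsilon/4,\delta_1)$-DP for the stated noise scale. Conditioned on the value $\tilde\lambda_{\min}$, the quantity $\log(\|\mathcal Y\|+\|\mathcal X\|\,\|\hat\theta\|)$ released in step~5 has global sensitivity at most $\log\bigl(1+\|\mathcal X\|^2/(\lambda+\tilde\lambda_{\min})\bigr)$ --- this is exactly the per-instance stability bound for the ridge solution $\hat\theta=(X^\top X+\lambda I)^{-1}X^\top y$ used by \citet{wang2017per} --- so step~5 is again a Gaussian mechanism, $(\epsilon/4,\delta_2)$-DP given step~3's output. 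By adaptive composition the pair $(\tilde\lambda_{\min},\Delta)$ is released with $(\epsilon/2,\delta_1+\delta_2)$-DP, and both $\tilde L$ and the chosen $\gamma$ are post-processing of it.

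\emph{Step 2: validity of the released bounds, and the data-dependent DP of the draw.} The correction terms subtracted in steps~3 and~5 are the relevant Gaussian tail quantiles, so on an event $\mathcal E$ with $\Pr[\mathcal E]\ge 1-\delta_3$ we have simultaneously $\tilde\lambda_{\min}\le\lambda_{\min}(X^\top X)$ and $\tilde L=\|\mathcal X\|e^{\Delta}\ge\|\mathcal X\|(\|\mathcal X\|\,\|\theta^*_\lambda\|+\|\mathcal Y\|)=L(X,y)$. The expression for $\epsilon_\phi(Z)$ in Theorem~\ref{thm: per} is increasing in $L$ and decreasing in $\lambda^\ast=\lambda+\lambda_{\min}(X)$, so on $\mathcal E$ the value obtained by plugging $(\tilde\lambda_{\min},\tilde L)$ into that formula is a valid upper bound on the true $\epsilon_{(\lambda,\gamma)}(Z)$. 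Step~7 picks $\gamma$ so that this plug-in value, taken w.r.t.\ $\delta/3$, is at most $\epsilon/2$; hence on $\mathcal E$ the posterior sample in step~8 is $(\epsilon/2,\delta/3)$-data-dependent DP for $Z$.

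\emph{Step 3: composition, and the main obstacle.} It remains to stitch the stages together exactly as in the proof sketch of Theorem~\ref{thm:gen_ptr}. Fix neighbors $Z\simeq Z'$ and a measurable $S$, and condition on the first-stage output $(\tilde\lambda_{\min},\Delta)$, which fixes $\gamma$. On $\mathcal E$ the conditional law of $\tilde\theta$ obeys the data-dependent DP inequality with parameters $(\epsilon/2,\delta/3)$; the event $\mathcal E^{c}$ has probability at most $\delta_3$ under both $Z$ and $Z'$ and contributes an extra additive $\delta_3$; integrating the resulting conditional bound against the $(\epsilon/2,\delta_1+\delta_2)$-DP law of the first stage (using that a post-processing of a DP output is DP) and its $Z'$-counterpart yields $\Pr[\tilde\theta\in S]\le e^{\epsilon}\Pr[\tilde\theta'\in S]+\delta_1+\delta_2+\delta_3+\delta/3$, and symmetrically; with the algorithm's concrete constants ($\delta_1=\delta_2$ and tail quantiles at level $\delta/2$) one checks the total is at most $2\delta$. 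The delicate point is precisely this last step: because $\gamma$ is chosen adaptively from privately released quantities, the DP parameter of the final stage is not uniform over first-stage outputs --- it is $\le\epsilon/2$ only when the released bounds are consistent with $Z$, i.e.\ on $\mathcal E$ --- so vanilla adaptive composition does not apply, and one must carry out the PTR-style argument of absorbing the invalid-bound event $\mathcal E^{c}$ into the $\delta$-budget. By contrast, the monotonicity of the Theorem~\ref{thm: per} expression in its data-dependent arguments and the two global-sensitivity estimates are routine, and I would isolate them as lemmas.
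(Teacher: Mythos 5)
Your proposal is correct and follows essentially the same route as the paper's proof: split the budget as $(\epsilon/4,\delta/3)$ for each of the two sufficient-statistics releases and $(\epsilon/2,\delta/3)$ for the calibrated posterior draw, condition on the high-probability event that $\tilde\lambda_{\min}$ and $\tilde L$ are valid bounds (absorbing its complement into the $\delta$ budget via the union bound), and invoke the monotonicity of the Theorem~\ref{thm: per} expression to conclude. Your Step~3 actually spells out the PTR-style composition argument more carefully than the paper does, which states it only implicitly through Theorem~\ref{exp: upperbound}.
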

\begin{proof}
There are three data-dependent quantities in Theorem~\ref{thm: per}: $\lambda_{
\text{min}}, ||\theta_\lambda^*||$ and $L$.
First, notice  that $\lambda_{\text{min}}$ has a global sensitivity of $||\cX||^2$ by Weyl's lemma. Under the assumption $||\cX||^2\leq 1$, we privately release $\lambda_{\text{min}}$ using $(\epsilon/4,\delta/3)$ in Step~3.
Notice that with probability at least $1-\delta/2$, $\tilde{\lambda}_{\min}$ is a lower bound of $\lambda_{\min}$. 

Then, we apply Lemma~\ref{lem: adaops_ls} from ~\citet{wang2018revisiting} to privately release $\log(||\cY||+||\cX||||\hat{\theta}||)$ using $(\epsilon/4, \delta/3)$. Note that both the local Lipschitz constant $L$ and the norm  $||\theta_\lambda^*||$ are functions of $\log(||\cY||+||\cX||||\hat{\theta}||)$. Thus, we can construct a private upper bound of these by post-processing of $\Delta$.

Then, with probability at least $1-\delta$ (by a union bound over $\tilde{\lambda}_{\min}$ and $\Delta$), instantiating Theorem~\ref{thm: per} with $\tilde{\lambda}_{\min}$ and $\tilde{L}$ provides a valid upper bound of the data-dependent DP.
We then tune the parameter $\gamma$ using the remaining privacy budget $(\epsilon/2, \delta/3)$.
\end{proof}

\begin{lemma}[Lemma 12~\citep{wang2018revisiting}]\label{lem: adaops_ls}
Let $\theta_\lambda^*$ be the ridge regression estimate with parameter $\lambda$ and the smallest eigenvalue of $X^TX$ be $\lambda_{\text{min}}$, then the function $\log(||\cY +||\cX||||\theta_\lambda^*||)$ has a local sensitivity of $\log(1+\frac{||\cX||^2}{\lambda_{\text{min}+\lambda}})$.
\end{lemma}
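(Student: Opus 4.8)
The plan is to use the monotonicity of $\log$ to reduce the claim to a perturbation bound on the ridge estimator. Fix a dataset $Z=(X,Y)$ and a neighbor $Z'$ (obtained by changing one row), write $\theta_\lambda^* = \theta_\lambda^*(Z)$ and $\tilde\theta_\lambda^* = \theta_\lambda^*(Z')$, and let $h(Z) := \log(\|\cY\| + \|\cX\|\,\|\theta_\lambda^*\|)$. Since $t\mapsto \log(\|\cY\| + \|\cX\|\, t)$ is increasing, it is enough to show that
$$\|\cY\| + \|\cX\|\,\|\tilde\theta_\lambda^*\| \;\le\; \Big(1 + \tfrac{\|\cX\|^2}{\lambda_{\min}+\lambda}\Big)\big(\|\cY\| + \|\cX\|\,\|\theta_\lambda^*\|\big)$$
and the symmetric inequality with $Z,Z'$ swapped. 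By the triangle inequality $\|\cX\|\,\|\tilde\theta_\lambda^*\| \le \|\cX\|\,\|\theta_\lambda^*\| + \|\cX\|\,\|\theta_\lambda^* - \tilde\theta_\lambda^*\|$, so the whole statement follows once we prove the key perturbation estimate
$$\|\theta_\lambda^* - \tilde\theta_\lambda^*\| \;\le\; \frac{\|\cX\|\,\big(\|\cX\|\,\|\theta_\lambda^*\| + \|\cY\|\big)}{\lambda_{\min}+\lambda},$$
after which one divides through and takes the maximum over neighbors.

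For the perturbation estimate I would argue through the first-order optimality conditions. The (regularized) objective $F_Z(\theta) = \tfrac12\|Y-X\theta\|^2 + \tfrac\lambda2\|\theta\|^2$ has Hessian $X^TX + \lambda I$ and is hence $(\lambda_{\min}(X^TX)+\lambda)$-strongly convex; $F_Z$ and $F_{Z'}$ differ only through the single altered datapoint, whose per-example loss $\theta\mapsto \tfrac12(x^T\theta - y)^2$ is convex with gradient $x(x^T\theta-y)$ whose norm at $\theta_\lambda^*$ is at most $\|\cX\|(\|\cX\|\,\|\theta_\lambda^*\| + \|\cY\|)$ --- precisely the local Lipschitz constant $L$ appearing in Theorem~\ref{thm: per}. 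Applying Proposition~\ref{prop:generalnorm} (with $F$ the full original objective, strong-convexity modulus $\lambda_{\min}(X^TX)+\lambda$, and the change in subgradient evaluated at $\theta_\lambda^*$) yields the displayed bound on $\|\theta_\lambda^* - \tilde\theta_\lambda^*\|$. Plugging this into the triangle inequality above and factoring out $\|\cY\| + \|\cX\|\,\|\theta_\lambda^*\|$ produces the multiplicative factor $1 + \|\cX\|^2/(\lambda_{\min}+\lambda)$; taking logs gives the claimed local sensitivity, and repeating the argument with $Z$ and $Z'$ interchanged handles the other direction.

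The step I expect to be the main obstacle is the bookkeeping in the perturbation estimate: one wants the bound to come out in terms of the \emph{original} dataset's quantities --- $\|\theta_\lambda^*\|$ (the norm we can actually release) and $\lambda_{\min}(X^TX)+\lambda$ --- rather than the neighbor's, whereas a careless application of Proposition~\ref{prop:generalnorm} can land the subgradient at $\tilde\theta_\lambda^*$ or introduce $\lambda_{\min}((X')^TX')$ instead. The fix is to decompose both $F_Z$ and $F_{Z'}$ as the common objective over the shared datapoints plus the regularizer, plus one convex per-example loss, and to pick the direction of Proposition~\ref{prop:generalnorm} (it offers two bounds, $\tfrac1\alpha\|\tilde g\|_*$ and $\tfrac1{\tilde\alpha}\|g\|_*$) that keeps the Hessian of the full original objective as the relevant strong-convexity modulus. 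Everything after that --- the algebra turning the additive perturbation bound into the multiplicative $\log$ bound --- is routine.
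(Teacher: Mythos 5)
The paper never proves this lemma---it is imported verbatim as Lemma~12 of \citet{wang2018revisiting}---so there is no in-paper proof to compare against. Your reconstruction (monotonicity of $\log$, triangle inequality to reduce to a perturbation bound on the minimizer, then strong convexity via Proposition~\ref{prop:generalnorm} with the local Lipschitz constant $L=\|\cX\|(\|\cX\|\|\theta_\lambda^*\|+\|\cY\|)$ as the subgradient bound) is exactly the standard route and the one the cited source uses; the skeleton is right and the reduction steps are all valid.

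The one place where your ``pick the right branch'' fix does not fully close the argument is the direction in which you must upper-bound the \emph{neighbor's} quantity $\tilde A:=\|\cY\|+\|\cX\|\|\tilde\theta_\lambda^*\|$ by $(1+r)A$ with $A:=\|\cY\|+\|\cX\|\|\theta_\lambda^*\|$ and $r=\|\cX\|^2/(\lambda_{\min}+\lambda)$, in the case where the perturbation is the \emph{removal} of a point (or the removal half of a replacement). Proposition~\ref{prop:generalnorm} offers only two bounds: one has the subgradient evaluated at $\theta_\lambda^*$ (numerator $\propto A$, good) but divided by the \emph{neighbor's} strong-convexity modulus $\lambda_{\min}((X')^TX')+\lambda$, which can be smaller than $\lambda_{\min}+\lambda$ by as much as $\|\cX\|^2$; the other has the original modulus in the denominator but the subgradient at $\tilde\theta_\lambda^*$, giving $\|\cX\|\,\|\theta_\lambda^*-\tilde\theta_\lambda^*\|\le r\tilde A$ and hence only $\tilde A\le A/(1-r)$, i.e.\ a one-sided bound of $\log\frac{1}{1-r}$ rather than $\log(1+r)$. (In the addition direction both branches line up, since adding a convex loss can only increase the modulus, so that case is fine.) Also note that for the replace-one neighboring relation the paper actually uses, the perturbation $\ell'-\ell$ is not convex and its gradient bound doubles to $2L$, so one should route through an intermediate remove-then-add dataset. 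None of this changes the structure of your proof, but the constant as stated is only delivered cleanly in one of the two directions; the other direction needs either the $1/(1-r)$ form or a $\lambda_{\min}$ taken over the neighborhood.
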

 \subsection{Details of PATE case study}

\begin{definition}[Renyi DP \citep{mironov2017renyi}]
    We say a randomized algorithm $\cM$ is $(\alpha, \epsilon_\cM(\alpha))$-RDP with order $\alpha \geq 1$ if for neighboring datasets $X, X'$
    \begin{align*}
    &\mathbb{D}_{\alpha}(\cM(X)||   \cM(X')):=\\
    & \frac{1}{\alpha-1}\log \mathbb{E}_{o \sim \cM(X')}\bigg[ \bigg( \frac{\pr[\cM(X)=o]}{\pr[\cM(X')=o]}\bigg)^\alpha \bigg]\leq \epsilon_\cM(\alpha).
    \end{align*}
\end{definition}
At the limit of $\alpha \to \infty$, RDP reduces to $(\epsilon, 0)$-DP. 
We now define the  data-dependent Renyi DP that conditioned on an input dataset $X$.
\begin{definition}[Data-dependent Renyi DP \citep{papernot2018scalable}]
    We say a randomized algorithm $\cM$ is $(\alpha, \epsilon_\cM(\alpha, X))$-RDP with order $\alpha \geq 1$ for dataset $X$ if for neighboring datasets $X'$
    \begin{align*}
    &\mathbb{D}_{\alpha}(\cM(X)||   \cM(X')):=\\
    & \frac{1}{\alpha-1}\log \mathbb{E}_{o \sim \cM(X')}\bigg[ \bigg( \frac{\pr[\cM(X)=o]}{\pr[\cM(X')=o]}\bigg)^\alpha \bigg]\leq \epsilon_\cM(\alpha, X).
    \end{align*}
\end{definition}

%we can take a functional view of RDP and use $\epsilon_\cM(\cdot)$ to denote the RDP as a lambda function of $\alpha$.  

RDP features two useful properties.
\begin{lemma}[Adaptive composition]
    $\epsilon_{(\cM_1, \cM_2)} = \epsilon_{\cM_1}(\cdot) + \epsilon_{\cM_2}(\cdot)$.
\end{lemma}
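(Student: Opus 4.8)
The plan is to reduce the data-dependent statement to the standard (worst-case) RDP composition argument, keeping the dataset $X$ fixed throughout. First I would unwind the definitions: for a pair of mechanisms run in sequence, $\cM = (\cM_1, \cM_2)$, the output is a pair $(o_1, o_2)$ where $o_2$ may depend on $o_1$ (adaptivity). Fix neighboring datasets $X \simeq X'$. I want to bound the R\'enyi divergence $\mathbb{D}_\alpha(\cM(X) \| \cM(X'))$ by $\epsilon_{\cM_1}(\alpha, X) + \epsilon_{\cM_2}(\alpha, X)$, where on the right each term is the data-dependent RDP of the respective mechanism \emph{at the same dataset $X$}.

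The key steps, in order: (1) Write the joint density as a product, $\Pr[\cM(X) = (o_1,o_2)] = \Pr[\cM_1(X) = o_1]\cdot \Pr[\cM_2(X, o_1) = o_2]$, and likewise for $X'$. (2) Substitute into the definition of $\mathbb{D}_\alpha$ and factor the likelihood ratio raised to the $\alpha$ power into a product of the two ratios. (3) Take the expectation over $o_2 \sim \cM_2(X', o_1)$ first, conditionally on $o_1$: this inner expectation is exactly $e^{(\alpha-1)\mathbb{D}_\alpha(\cM_2(X,o_1)\|\cM_2(X',o_1))} \le e^{(\alpha-1)\epsilon_{\cM_2}(\alpha,X)}$ by the data-dependent RDP bound for $\cM_2$ (which holds uniformly over the conditioning value $o_1$, since $o_1$ is just a parameter of the second mechanism and the data-dependent guarantee is stated for the fixed dataset $X$ against any neighbor). (4) Pull that constant factor out; the remaining outer expectation over $o_1 \sim \cM_1(X')$ of the first ratio to the $\alpha$ is $e^{(\alpha-1)\mathbb{D}_\alpha(\cM_1(X)\|\cM_1(X'))} \le e^{(\alpha-1)\epsilon_{\cM_1}(\alpha,X)}$. (5) Combine, take $\frac{1}{\alpha-1}\log(\cdot)$ of both sides, and conclude $\mathbb{D}_\alpha(\cM(X)\|\cM(X')) \le \epsilon_{\cM_1}(\alpha,X) + \epsilon_{\cM_2}(\alpha,X)$; since $X'$ was an arbitrary neighbor, this gives the claimed data-dependent RDP of the composition.

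The main obstacle — really the only subtle point — is step (3): one must be careful that the data-dependent RDP bound for $\cM_2$ is invoked \emph{with the dataset still fixed at $X$}, not at some $X$-dependent perturbation, and that it holds for every realized value of the first output $o_1$ used as auxiliary input. Because $o_1$ is produced from $X'$ in the outer integral but plugged into both $\cM_2(X, \cdot)$ and $\cM_2(X', \cdot)$ in the inner ratio, one needs the (standard) observation that $o_1$ is independent of the coin flips of $\cM_2$ and can be treated as a fixed side parameter, so the conditional divergence is bounded by $\epsilon_{\cM_2}(\alpha, X)$ uniformly. Everything else is the routine manipulation of products inside the R\'enyi divergence that underlies the classical RDP composition theorem of \citet{mironov2017renyi}; I would just cite that the algebra is identical once $X$ is held fixed, rather than reproducing it in full.
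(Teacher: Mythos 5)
Your proposal is correct: it is precisely the standard adaptive-composition argument for R\'enyi DP from \citet{mironov2017renyi}, carried out with the dataset $X$ held fixed, which is exactly what the lemma requires. The paper itself states this lemma without proof (treating it as the known RDP composition property), so there is no alternative argument to compare against; your reconstruction, including the one genuine subtlety you flag --- that the data-dependent bound $\epsilon_{\cM_2}(\alpha, X)$ must hold uniformly over the auxiliary input $o_1$ so the inner conditional expectation can be bounded by a constant before integrating over $o_1 \sim \cM_1(X')$ --- is sound and complete. One cosmetic note: the lemma is written as an equality, but what is actually established (and all that is needed) is the upper bound $\epsilon_{(\cM_1,\cM_2)}(\alpha, X) \leq \epsilon_{\cM_1}(\alpha, X) + \epsilon_{\cM_2}(\alpha, X)$, which is what you prove.
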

\begin{lemma}[From RDP to DP] If a randomized algorithm $\cM$ satisfies $(\alpha,\epsilon(\alpha))$-RDP, then $\cM$ also satisfies $(\epsilon(\alpha)+\frac{\log(1/\delta)}{\alpha-1},\delta)$-DP for any $\delta \in (0,1)$. \label{lem: rdp2dp}
\end{lemma}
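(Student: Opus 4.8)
The plan is to run the standard privacy-loss-random-variable tail argument. First I would fix neighboring datasets $X \simeq X'$ and abbreviate $P = \cM(X)$, $Q = \cM(X')$ for the corresponding output distributions (densities); since the neighboring relation is symmetric and the hypothesis bounds $\mathbb{D}_\alpha$ in both directions, it suffices to show $\Pr[P \in S] \le e^{\epsilon'}\Pr[Q \in S] + \delta$ for every measurable $S$, where $\epsilon' = \epsilon(\alpha) + \frac{\log(1/\delta)}{\alpha-1}$. Define the privacy-loss random variable $Z = \log\frac{P(o)}{Q(o)}$ with $o \sim P$.

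The key step is a tail bound: for every threshold $t$, $\Pr_{o\sim P}[Z > t] \le e^{(\alpha-1)(\epsilon(\alpha) - t)}$. To see this, rewrite $\{Z > t\} = \{(P(o)/Q(o))^{\alpha-1} > e^{(\alpha-1)t}\}$, apply Markov's inequality, and use the change-of-measure identity $\E_{o\sim P}\!\big[(P(o)/Q(o))^{\alpha-1}\big] = \E_{o\sim Q}\!\big[(P(o)/Q(o))^{\alpha}\big] = e^{(\alpha-1)\mathbb{D}_\alpha(P\|Q)}$, which is at most $e^{(\alpha-1)\epsilon(\alpha)}$ by the RDP hypothesis. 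Choosing $t = \epsilon'$ makes $(\alpha-1)(\epsilon(\alpha) - t) = \log\delta$, so $\Pr_{o\sim P}[Z > \epsilon'] \le \delta$.

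Finally I would decompose an arbitrary event $S$ as $(S \cap \{Z \le \epsilon'\}) \cup (S \cap \{Z > \epsilon'\})$. On the first piece $P(o) \le e^{\epsilon'} Q(o)$ pointwise, so its $P$-mass is at most $e^{\epsilon'} Q(S)$; the second piece has $P$-mass at most $\Pr_{o\sim P}[Z > \epsilon'] \le \delta$. Summing the two contributions gives $P(S) \le e^{\epsilon'} Q(S) + \delta$, and since $X \simeq X'$ was an arbitrary neighboring pair this is precisely $\big(\epsilon(\alpha) + \tfrac{\log(1/\delta)}{\alpha-1},\,\delta\big)$-DP.

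The only place needing genuine care — and the main (minor) obstacle — is the measure-theoretic bookkeeping: one should note that if $P$ is not absolutely continuous with respect to $Q$ then $\mathbb{D}_\alpha$ is infinite and the claim is vacuous, so we may assume absolute continuity, which legitimizes writing the Radon--Nikodym ratio $P(o)/Q(o)$ and the change of variables $\E_{o\sim P}[g] = \E_{o\sim Q}[(P/Q)\,g]$; on the $Q$-null set where $Q(o)=0$ one checks $P(o)=0$ as well so it contributes nothing. Everything else is Markov's inequality plus the two-way split, which I would not belabor.
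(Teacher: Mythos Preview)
Your argument is correct and is exactly the standard proof of this conversion (originally due to Mironov, 2017): Markov's inequality on $(P/Q)^{\alpha-1}$ together with the change-of-measure identity $\E_P[(P/Q)^{\alpha-1}] = \E_Q[(P/Q)^{\alpha}]$ yields the tail bound on the privacy-loss random variable, and the good-set/bad-set split converts it to $(\epsilon,\delta)$-DP. The paper itself does not supply a proof of this lemma --- it is stated as a known background fact --- so there is nothing to compare against beyond noting that your proposal matches the original argument in the literature.
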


\begin{definition}[Smooth Sensitivity]\label{def: smooth}
	Given the smoothness parameter $\beta$, a $\beta$-smooth sensitivity of $f(X)$ is defined as 
	\[SS_\beta(X):= \max_{d\geq 0} e^{-\beta d} \cdot \max_{\tilde{X'}: dist(X, \tilde{X'})\leq d} \Delta_{LS}(\tilde{X}')\]
\end{definition}

\begin{lemma}[Private upper bound of data-dependent  RDP, Restatement of Theorem~\ref{lem: upperbound}]]
Given a RDP function $\rdp(\alpha, X)$ and a $\beta$-smooth sensitivity bound $SS(\cdot)$ of $\rdp(\alpha, X)$. Let $\mu$ (defined in Algorithm~\ref{alg: pate_ptr}) denote the private release of $\log(SS_\beta(X))$. Let $(\beta, \sigma_s, \sigma_2)$-GNSS mechanism be 
\[\scriptstyle
\rdp^{\text{upper}}(\alpha):=\rdp(\alpha, X) + SS_\beta(X) \cdot \cN(0, \sigma_s^2) + \sigma_s \sqrt{2\log(\frac{2}{\delta_2}) } e^{\mu} \]
	 Then, the release of $\rdp^{\text{upper}}(X)$ satisfies $(\alpha, \frac{3\alpha +2}{2\sigma_s^2})$-RDP for all $1<\alpha < \frac{1}{2\beta}$; w.p. at least $1-\delta_2$, $\rdp^{\text{upper}}(\alpha)$ is an upper bound of $\rdp(\alpha, X)$.
\vspace{-2mm}
\end{lemma}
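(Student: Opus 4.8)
The plan is to establish the two claims of Lemma~\ref{lem: upperbound} separately: the RDP guarantee of the $(\beta,\sigma_s,\sigma_2)$-GNSS mechanism, and the high-probability validity of $\rdp^{\text{upper}}(\alpha)$ as an upper bound. The GNSS mechanism has two sources of randomness that touch the data: the noisy release $\rdp(\alpha,X) + SS_\beta(X)\cdot\cN(0,\sigma_s^2)$ using the smooth-sensitivity-scaled Gaussian, and the term $\mu = \log(SS_\beta(X)) + \beta\cN(0,\sigma_2^2) + \sqrt{2\log(2/\delta_2)}\sigma_2\beta$, which is a Gaussian-mechanism release of $\log SS_\beta(X)$ shifted by a deterministic bias.

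For the RDP accounting, I would first invoke Theorem~23 of \citet{papernot2018scalable} (cited in the excerpt's proof sketch): releasing $\rdp(\alpha,X)+SS_\beta(X)\cdot\cN(0,\sigma_s^2)$ is $(\alpha,\tfrac{\alpha+1}{\sigma_s^2})$-RDP for $1<\alpha<\tfrac{1}{2\beta}$ --- this is exactly where the constraint $\alpha<1/(2\beta)$ enters, since the smooth-sensitivity Gaussian mechanism needs the smoothness parameter small relative to the order. Next, by the definition of $\beta$-smooth sensitivity, $|\log SS_\beta(X) - \log SS_\beta(X')|\le\beta$ for neighboring $X,X'$, so $\log SS_\beta(X)$ has global sensitivity $\beta$; releasing it with additive noise $\beta\cN(0,\sigma_2^2) = \cN(0,\beta^2\sigma_2^2)$ is the Gaussian mechanism at sensitivity $\beta$ and scale $\beta\sigma_2$, hence $(\alpha,\tfrac{\alpha}{2\sigma_2^2})$-RDP. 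The deterministic additive bias $\sqrt{2\log(2/\delta_2)}\sigma_2\beta$ is post-processing and costs nothing. Since $e^\mu$ (the term appearing in $\rdp^{\text{upper}}$) is a deterministic function of the released $\mu$, the overall release is a composition of these two RDP mechanisms; by adaptive composition (Lemma on RDP composition), with $\sigma_s=\sigma_2$ we get $\tfrac{\alpha+1}{\sigma_s^2} + \tfrac{\alpha}{2\sigma_s^2} = \tfrac{2\alpha+2+\alpha}{2\sigma_s^2} = \tfrac{3\alpha+2}{2\sigma_s^2}$, matching the stated bound.

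For the upper-bound claim, the key event is that $e^\mu \ge SS_\beta(X)$, i.e. $\mu\ge\log SS_\beta(X)$, which by construction of $\mu$ amounts to $\beta\cN(0,\sigma_2^2) + \sqrt{2\log(2/\delta_2)}\sigma_2\beta \ge 0$, i.e. the standard Gaussian draw $Z$ satisfies $Z \ge -\sqrt{2\log(2/\delta_2)}$. A Gaussian tail bound gives $\Pr[Z < -\sqrt{2\log(2/\delta_2)}] \le \tfrac12 e^{-\log(2/\delta_2)} = \delta_2/4 \le \delta_2/2$. Conditioned on this event, $\sigma_s\sqrt{2\log(2/\delta_2)}\,e^\mu \ge \sigma_s\sqrt{2\log(2/\delta_2)}\,SS_\beta(X)$, which is a high-probability upper bound on the magnitude of the mean-zero noise term $SS_\beta(X)\cdot\cN(0,\sigma_s^2)$: indeed $\Pr[|\cN(0,\sigma_s^2)| > \sigma_s\sqrt{2\log(2/\delta_2)}] \le \delta_2/2$ by the same tail bound. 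On the intersection of these two events (probability $\ge 1-\delta_2$ by a union bound), $SS_\beta(X)\cdot\cN(0,\sigma_s^2) + \sigma_s\sqrt{2\log(2/\delta_2)}\,e^\mu \ge 0$, so $\rdp^{\text{upper}}(\alpha)\ge\rdp(\alpha,X)$ as claimed.

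The main obstacle I anticipate is getting the constants and the $1<\alpha<1/(2\beta)$ range exactly right when quoting the smooth-sensitivity Gaussian mechanism result (Theorem~23 of \citet{papernot2018scalable}), since its RDP bound $\tfrac{\alpha+1}{\sigma_s^2}$ already bakes in a particular handling of the smoothness parameter and the $1/(2\beta)$ threshold; one must verify the hypotheses there are met by our choice $\beta = 0.2/\alpha$ (so $\alpha = 0.2/\beta < 1/(2\beta)$, comfortably inside the range). A secondary subtlety is ensuring the two "upper bound" events --- one controlling $e^\mu$ from below, one controlling the symmetric Gaussian noise --- are combined correctly; both are one-sided-vs-two-sided Gaussian tail statements, and the union bound must allocate $\delta_2/2$ to each. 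Everything else is routine Gaussian tail estimation and RDP composition bookkeeping.
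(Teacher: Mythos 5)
Your proposal is correct and follows essentially the same route as the paper's proof: Theorem~23 of \citet{papernot2018scalable} for the $(\alpha,\tfrac{\alpha+1}{\sigma_s^2})$-RDP of the smooth-sensitivity-scaled Gaussian release, the global sensitivity $\beta$ of $\log SS_\beta(X)$ giving $(\alpha,\tfrac{\alpha}{2\sigma_2^2})$-RDP for $\mu$, adaptive composition with $\sigma_s=\sigma_2$, and then a union bound over the event $e^\mu\ge SS_\beta(X)$ and the Gaussian noise tail for the validity claim. Your explicit verification that $\beta=0.2/\alpha$ keeps $\alpha$ inside the $1<\alpha<1/(2\beta)$ range, and your cleaner handling of the two tail events via a direct union bound, are minor improvements in bookkeeping but not a different argument.
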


\begin{proof}[Proof sketch]
%Let $\epsilon_{\sigma_1}(\alpha)^p$ denotes the output of applying $(\beta,\sigma_s, \sigma_2)$-GNSS on $\epsilon_{\sigma_1}(X)$.
  
We first show that releasing the smooth sensitivity $SS_\beta$ with $e^\mu$ satisfies $(\alpha, \frac{\alpha}{2\sigma_2^2})$-RDP. Notice that the log of $SS_\beta(X)$ has a bounded global sensitivity $\beta$ (Definition~\ref{def: smooth} implies that $|\log SS_\beta(X)-\log SS_\beta(X')|\leq \beta $ for any neighboring dataset $X, X'$). By Gaussian mechanism, scaling noise with $\beta \sigma_2$ to $\log SS_\beta(X)$ is $(\alpha, \frac{\alpha}{2\sigma_2^2})$-RDP.
Therefore, the release of $\rdp(\alpha, X)$ is $(\alpha, \epsilon_s(\alpha)+\frac{\alpha}{2\sigma_2^2})$-RDP. Since the release  of $ f(X) + SS_\beta(X)\cdot \cN(0, \sigma_s^2)$ is $(\alpha, \frac{\alpha+1}{\sigma_s^2})$-RDP (Theorem 23 from \citet{papernot2018scalable}) for $\alpha<\frac{1}{2\beta}$, we have
$\epsilon_s(\alpha)+\frac{\alpha}{2\sigma_2^2}=\frac{3\alpha+2}{2\sigma_s^2}$.

We next prove the second statement. First, notice that with probability at least $1-\delta_2/2$, $e^\mu \geq SS_\beta(X)$ using the standard Gaussian tail bound.  Let $E$ denote the event that $e^{\mu}\geq SS_\beta(X)$.

%We next prove that with probability at least $1-\delta_2$, $\epsilon_{\sigma_1}^p(\alpha)\geq 
%\epsilon_{\sigma_1}(\alpha, X)$. Let $E$ denote the event that $e^{\mu}\geq SS_\beta(X)$. 

\begin{align*}
   & \pr\bigg[\rdp^{\text{upper}}(\alpha)\leq \rdp(\alpha, X)\bigg] \\
   &=  \pr\bigg[\rdp^{\text{upper}}(\alpha) \leq \rdp(\alpha,X)|E\bigg] + \pr\bigg[\rdp^{\text{upper}}(\alpha)\leq \rdp(\alpha, X)|E^c\bigg]\\
   &\leq \pr\bigg[\rdp^{\text{upper}}(\alpha) \leq\rdp(\alpha, X)|E\bigg] + \delta_2/2\\
   &= \underbrace{\pr\bigg[\cN(0, \sigma_s^2)\cdot SS_{\beta(X)}\geq \sigma_s \cdot \sqrt{2\log(2/\delta_2)}e^{\mu} |E\bigg]}_{\text{denoted by} (*)} + \delta_2/2\\
\end{align*}

Condition on the event $E$, $e^{\mu}$ is a valid upper bound of $SS_\beta(X)$, which implies  \[ (*) \leq  \pr[\cN(0, \sigma_s^2)\cdot SS_\beta(X) \geq \sigma_s \cdot \sqrt{2\log(2/\delta_2)} SS_\beta(X) |E] \leq \delta_2/2\]
Therefore, with probability at least $1- \delta_2$, $\rdp^{\text{upper}}(\alpha) \geq \rdp(\alpha, X)$.
\end{proof}

\begin{theorem}[Restatement of Theorem~\ref{thm: pate_ptr}]
Algorithm~\ref{alg: pate_ptr} satisfies $(\epsilon'+\hat{\epsilon}, \delta)$-DP.
\end{theorem}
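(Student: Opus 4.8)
The plan is to cast Algorithm~\ref{alg: pate_ptr} as an instance of the generalized PTR recipe (Theorem~\ref{exp: upperbound}) and then simply book-keep the privacy budget. The mechanism $\cM_{\sigma_1}$ whose output we ultimately want to release is the PATE vote-aggregation step — i.e. the collection of noisy labels $y_{1:T}^p$, together with the student model trained on them, which is post-processing of those labels. By Theorem~\ref{thm: dep_gau} this mechanism satisfies $\rdp_{\sigma_1}(\alpha, X)$ data-dependent RDP; via the RDP-to-DP conversion (Lemma~\ref{lem: rdp2dp}) at failure probability $\delta/2$ this gives data-dependent $(\epsilon_{\sigma_1}(X), \delta/2)$-DP. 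So the object being tested, $\epsilon_{\sigma_1}$, plays the role of $\epsilon_\phi(X)$ in Theorem~\ref{exp: upperbound}, with $\phi = \sigma_1$.

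First I would verify that lines 4--8 of Algorithm~\ref{alg: pate_ptr} constitute a valid $(\hat{\epsilon}/2, \delta/2)$-DP private \emph{upper bound} on $\epsilon_{\sigma_1}(X)$, in the sense required by Theorem~\ref{exp: upperbound}. The privacy cost comes entirely from releasing $\mu$ and $\rdp_{\sigma_1}^{\text{upper}}(\alpha)$: Lemma~\ref{lem: upperbound} (the GNSS mechanism) says this release is $(\alpha, \tfrac{3\alpha+2}{2\sigma_s^2})$-RDP with $\sigma_s^2 = \tfrac{3\alpha+2}{\hat{\epsilon}}$, so the per-release RDP cost is exactly $\hat{\epsilon}/2$; converting to DP at failure probability $\delta/2$ and using the stated choice $\alpha = \tfrac{2\log(2/\delta)}{\hat{\epsilon}} + 1$ gives an additional $\tfrac{\log(2/\delta)}{\alpha-1} = \hat{\epsilon}/2$, for a total of $(\hat{\epsilon}, \delta/2)$-DP for the whole testing apparatus. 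Simultaneously, Lemma~\ref{lem: upperbound} guarantees that with probability at least $1-\delta/2$ the released quantity $\rdp_{\sigma_1}^{\text{upper}}(\alpha)$ is a genuine upper bound on $\rdp_{\sigma_1}(\alpha, X)$, hence (monotonicity of the RDP-to-DP conversion) $\epsilon_{\sigma_1}$ (computed in line 8) is w.p.\ $\ge 1-\delta/2$ a genuine upper bound on $\epsilon_{\sigma_1}(X)$ — i.e. the false-positive rate $\delta'$ of Theorem~\ref{exp: upperbound} is at most $\delta/2$.

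With these two facts in hand, I would invoke Theorem~\ref{exp: upperbound} directly: the test $\cT$ that compares $\epsilon_{\sigma_1}$ against the threshold $\epsilon'$ is post-processing of the $(\hat{\epsilon}, \delta/2)$-DP release, so $(\hat{\epsilon}, \delta/2)$-DP; when the test passes, the released mechanism is $(\epsilon', \delta/2)$-DP (by construction of the threshold, conditioned on the upper bound being valid); composing and adding the false-positive probability gives $(\epsilon' + \hat{\epsilon},\, \delta/2 + \delta/2 + \delta/2)$. This naive bookkeeping overshoots the $\delta$ budget by $\delta/2$, so the actual work is to be slightly more careful: the three ``$\delta/2$'' terms are not independent — the event ``upper bound fails'' (prob.\ $\le \delta/2$) can be folded into the $\delta/2$ of the $(\epsilon', \delta/2)$ guarantee rather than added on top, since on the complementary event the mechanism genuinely satisfies $\epsilon'$-data-dependent DP with slack $\delta/2$. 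Re-running the three-case argument from the proof sketch of Theorem~\ref{thm:gen_ptr} with this accounting, the two halves are the test ($\hat{\epsilon}, \delta/2$) and the release-conditional-on-pass ($\epsilon', \delta/2$ absorbing the false positives), yielding $(\epsilon' + \hat{\epsilon}, \delta)$-DP exactly.

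The main obstacle is this last $\delta$-accounting: making precise how the false-positive event of the test interacts with the $\delta$ already present in the \emph{data-dependent} DP guarantee of the PATE mechanism, so that one does not double-count. The cleanest route is to follow the case analysis of Theorem~\ref{thm:gen_ptr} verbatim — case (1) output $\perp$: post-processing of the $(\hat{\epsilon}, \delta/2)$-DP test; case (2) test passes but $\epsilon_{\sigma_1}(X) > \epsilon'$: this is exactly the false-positive event, contributing at most $\delta/2$ to the additive term; case (3) test passes and $\epsilon_{\sigma_1}(X) \le \epsilon'$: composition of an $(\hat{\epsilon}, \delta/2)$-DP test with an $(\epsilon', \delta/2)$-DP release gives $(\epsilon' + \hat{\epsilon}, \delta)$ — but here one must note that cases (2) and (3) partition the pass event, so their $\delta$ contributions are taken in the worst case rather than summed, giving the advertised $(\epsilon' + \hat{\epsilon}, \delta)$-DP. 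A secondary point worth checking explicitly is the constraint $1 < \alpha < \tfrac{1}{2\beta}$ in Lemma~\ref{lem: upperbound}: with $\beta = \tfrac{0.2}{\alpha}$ we get $\tfrac{1}{2\beta} = \tfrac{\alpha}{0.4} = 2.5\,\alpha > \alpha$, so the constraint holds for the chosen parameters, and I would remark on this to close the argument.
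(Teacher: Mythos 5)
Your proposal is correct and follows essentially the same route as the paper: identify the test (lines 4--8) as an $(\hat{\epsilon},\delta/2)$-DP release via the choices $\sigma_s^2=\tfrac{3\alpha+2}{\hat{\epsilon}}$ (giving RDP $\hat{\epsilon}/2$) and $\alpha=\tfrac{2\log(2/\delta)}{\hat{\epsilon}}+1$ (giving another $\hat{\epsilon}/2$ from the RDP-to-DP conversion at $\delta/2$), then compose with the $(\epsilon',\delta/2)$ release conditioned on passing. If anything, your $\delta$-bookkeeping is more explicit than the paper's terse proof: your observation that the false-positive case and the within-budget case are a dichotomy over datasets (so their $\delta/2$ contributions are not summed) is the right way to land on $\delta$ rather than $3\delta/2$, and your check that $\alpha<\tfrac{1}{2\beta}$ holds for $\beta=0.2/\alpha$ is a detail the paper omits.
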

%short version

\begin{proof}
The privacy analysis consists of two components --- the privacy cost of releasing an upper bound of data-dependent RDP ($\epsilon_{\text{upper}}(\alpha):=\epsilon_s(\alpha)+\frac{\alpha}{2\sigma_2^2}$ and the valid upper bound $\epsilon_{\sigma_1}^p(\alpha)$.
First, set $\alpha =\frac{2\log(2/\delta)}{\epsilon}+1$ and use RDP to DP conversion with $\delta/2$ ensures that the cost of $\delta/2$ contribution to be roughly $\epsilon/2$ (i.e., $\frac{\log(2/\delta)}{\alpha-1} = \epsilon/2$). Second,  choosing $\sigma_s = \sqrt{\frac{2+3\alpha}{\epsilon}}$ gives us another $\epsilon/2$. 
\end{proof}
%by choosing $\beta = \frac{0.2}{\alpha}$, we can upper bound $\epsilon_{\text{upper}}(\alpha)$ with $\frac{2+3\alpha}{2\sigma_s^2}$ (the first term in $\epsilon_s(\alpha)$ will be the dominant term).Then,

%In the experiments, we consider a tighter DP version of Algorithm~\ref{alg: pate_ptr} by choosing
%$\sigma_s=\sigma_2$ as the input. The choices on other parameters $(\beta, \alpha, \delta_2)$ and the algorithm procedure remain unchanged. Then the algorithm satisfies $(\epsilon_{\sigma_1}^p(\alpha)+ \frac{\epsilon}{2} + \epsilon_s(\alpha)+ \frac{\alpha}{2\sigma_2^2}, \delta)$-DP. The only difference is that, we  use the  exact $\epsilon_s(\alpha)=\frac{\alpha \cdot e^{2\beta}}{\sigma_s^2} + \frac{\beta \alpha - 0.5 \ln(1-2\alpha \beta)}{ \alpha-1}$ without approximation. Then, the privacy cost of releasing the upper bound will be $\epsilon_s(\alpha)+\frac{\alpha}{2\sigma_2^2}$.

\textbf{Experimental details}
 $K=400$ teacher models are trained individually on the disjoint set using AlexNet model. We set $\sigma_2 = \sigma_s = 15.0$.   Our data-dependent RDP calculation and the smooth-sensitivity calculation follow \citet{papernot2018scalable}. Specifically, we use the following theorem (Theorem~6 from~\citet{papernot2018scalable}) to compute the data-dependent RDP of each unlabeled data $x$ from the public domain.

\begin{theorem}[data-dependent RDP ~\citet{papernot2018scalable}]
 Let $\tilde{q}\geq \pr[\cM(X)\neq \argmax_{j\in [C]} n_j(x)]$, i.e., an upper bound of the probability that the noisy label does not match the majority label. Assume $\alpha\leq \mu_1$ and $\tilde{q}\leq e^{(\mu_2 -1)\epsilon_2}/\bigg(\frac{\mu_1}{\mu_1 -1} \cdot \frac{\mu_2}{\mu_2 -1}\bigg)^{\mu_2}$, then we have:
 \[\epsilon_{\cM}(\alpha, X) \leq \frac{1}{\alpha-1}\log \bigg( (1-\tilde{q})\cdot A(\tilde{q}, \mu_2, \epsilon_2)^{\alpha-1} +\tilde{q}\cdot B(\tilde{q}, \mu_1, \epsilon_1)^{\alpha-1}\bigg)  \]
 where $A(\tilde{q}, \mu_2, \epsilon_2):= (1-\tilde{q})/\bigg(1-(\tilde{q}e^{\epsilon_2})^{\frac{\mu_2-1}{\mu_2}}\bigg)$, $B(\tilde{q},\mu_1, \epsilon_1)=e^{\epsilon_1}/\tilde{q}^{\frac{1}{\mu_1 -1}}, \mu_2=\sigma_1 \cdot \sqrt{\log(1/\tilde{q})}, \mu_1 = \mu_2 +1, \epsilon_1 = \mu_1/\sigma_1^2 $ and $\epsilon_2 = \mu_2/\sigma_2^2$.
    
\end{theorem}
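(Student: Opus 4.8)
The plan is to reproduce the argument behind Theorem~6 of \citet{papernot2018scalable}, which rests on nothing more than the concentration of the noisy-$\text{argmax}$ output on the plurality class together with two \emph{data-independent} RDP bounds for the plain mechanism. Fix a neighbor $X'$ of $X$; let $i^\star=\text{argmax}_{j\in[C]} n_j(x)$, write $P=\cM(X)$ and $Q=\cM(X')$ for the induced distributions on $[C]$, and set $s:=\pr[\cM(X)\neq i^\star]\le \tilde q$. It suffices to bound the (exponentiated) R\'enyi quantity $Z:=\sum_{o\in[C]}P(o)^\alpha Q(o)^{1-\alpha}=e^{(\alpha-1)\mathbb{D}_\alpha(P\|Q)}$, then take $\tfrac{1}{\alpha-1}\log Z$ and maximize over $X'$. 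The two plain-mechanism facts I would use are that adding $\cN(0,\sigma_1^2)$ to the count vector (of $\ell_2$-sensitivity $\sqrt 2$) and post-processing by $\text{argmax}$ is $(\lambda,\lambda/\sigma_1^2)$-RDP for all orders $\lambda>1$; specializing to $\lambda=\mu_1$ and $\lambda=\mu_2$ supplies the constants $\epsilon_1,\epsilon_2$.

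First I would lower-bound $Q(i^\star)$. The $(\mu_2,\epsilon_2)$-RDP bound in the form $\sum_o P(o)(Q(o)/P(o))^{\mu_2}\le e^{(\mu_2-1)\epsilon_2}$, together with H\"older's inequality restricted to $o\neq i^\star$, gives $\sum_{o\neq i^\star}Q(o)=\sum_{o\neq i^\star}P(o)\cdot\tfrac{Q(o)}{P(o)}\le\big(e^{(\mu_2-1)\epsilon_2}\big)^{1/\mu_2}s^{(\mu_2-1)/\mu_2}=(se^{\epsilon_2})^{(\mu_2-1)/\mu_2}$, hence $Q(i^\star)\ge 1-(se^{\epsilon_2})^{(\mu_2-1)/\mu_2}$; the hypothesis $\tilde q\le e^{(\mu_2-1)\epsilon_2}/\big(\tfrac{\mu_1}{\mu_1-1}\tfrac{\mu_2}{\mu_2-1}\big)^{\mu_2}$ is exactly what makes this lower bound usable. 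Next I would split $Z=P(i^\star)^\alpha Q(i^\star)^{1-\alpha}+\sum_{o\neq i^\star}P(o)^\alpha Q(o)^{1-\alpha}$. For the ``consensus'' term, substituting $P(i^\star)=1-s$ and the bound just proved gives at most $(1-s)A(s,\mu_2,\epsilon_2)^{\alpha-1}$ with $A(s,\cdot,\cdot)=(1-s)/\big(1-(se^{\epsilon_2})^{(\mu_2-1)/\mu_2}\big)$. For the ``off-plurality'' term I would exploit that $\alpha\mapsto\sum_{o\neq i^\star}P(o)^\alpha Q(o)^{1-\alpha}=\sum_{o\neq i^\star}Q(o)(P(o)/Q(o))^\alpha$ is log-convex in $\alpha$ (a positive combination of exponentials of $\alpha$), so on $1\le\alpha\le\mu_1$ it is at most the geometric interpolation between its value at $\alpha=1$, namely $\sum_{o\neq i^\star}P(o)=s$, and its value at $\alpha=\mu_1$, which is at most $\sum_o P(o)^{\mu_1}Q(o)^{1-\mu_1}\le e^{(\mu_1-1)\epsilon_1}$; this evaluates to $s\cdot B(s,\mu_1,\epsilon_1)^{\alpha-1}$ with $B(s,\cdot,\cdot)=e^{\epsilon_1}/s^{1/(\mu_1-1)}$, and the constraint $\alpha\le\mu_1$ is precisely what this interpolation needs.

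Putting the two pieces together, $Z\le (1-s)A(s,\mu_2,\epsilon_2)^{\alpha-1}+s\,B(s,\mu_1,\epsilon_1)^{\alpha-1}$ for the true value $s=\pr[\cM(X)\neq i^\star]$. The remaining step is to replace $s$ by its upper bound $\tilde q$: one checks that the right-hand side is non-decreasing in $s$ on $[0,\tilde q]$ --- the second summand is $e^{(\alpha-1)\epsilon_1}s^{(\mu_1-\alpha)/(\mu_1-1)}$, increasing because $\alpha\le\mu_1$, and the first is handled by a short monotonicity computation valid in the regime cut out by the two hypotheses --- so that $Z\le (1-\tilde q)A(\tilde q,\mu_2,\epsilon_2)^{\alpha-1}+\tilde q\,B(\tilde q,\mu_1,\epsilon_1)^{\alpha-1}$. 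Taking $\tfrac{1}{\alpha-1}\log(\cdot)$ and maximizing over neighbors $X'$ yields the stated bound on $\epsilon_{\cM}(\alpha,X)$. Finally, the choice $\mu_2\approx\sigma_1\sqrt{\log(1/\tilde q)}$ (with $\mu_1=\mu_2+1$ a convenient nearby order) comes from roughly minimizing the off-plurality exponent $\log\!\big(s\,B(s,\mu_1,\epsilon_1)^{\alpha-1}\big)$, which balances the growth of $\epsilon_1\propto\mu_1$ against the decay of $s^{1/(\mu_1-1)}$ and has its saddle near $\mu\asymp\sigma_1\sqrt{\log(1/s)}$.

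I expect the main obstacle to be the last paragraph: verifying that the consensus term is monotone in $s$ and confirming that every H\"older / log-convexity step stays inside the regime $\alpha\le\mu_1$ and $\tilde q\le e^{(\mu_2-1)\epsilon_2}/(\cdots)^{\mu_2}$ --- these two explicit hypotheses are exactly the constraints that make the chain of inequalities well-defined and tight. A secondary point worth stating is that the data-independent RDP of the Gaussian mechanism is invoked at the (generally non-integer) orders $\mu_1,\mu_2$, which is legitimate for Gaussians but is where the analysis would break for a mechanism whose RDP curve is only controlled at integer orders.
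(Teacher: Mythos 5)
This theorem is imported verbatim from \citet{papernot2018scalable} (their Theorem 6), and the paper gives no proof of it, so there is nothing internal to compare against; your reconstruction is, however, a faithful account of the argument in the cited source. The decomposition into the consensus term (bounded below on the $Q$ side via H\"older against the $(\mu_2,\epsilon_2)$-RDP guarantee) and the off-plurality term (bounded by the H\"older/log-convexity interpolation between orders $1$ and $\mu_1$), followed by the monotonicity-in-$s$ step that consumes the hypothesis on $\tilde q$, is exactly how the original proof goes, and your identification of where each of the two hypotheses is used is correct.
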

 
In the experiments, the non-private data-dependent DP baseline is also based on the above theorem.  Notice that the data-dependent RDP of each query is a function of $\tilde{q}$, where $\tilde{q}$ denotes an upper bound of the probability where the plurality output does not match the noisy output. $\tilde{q}$ is a complex function of both the noisy scale and data and is not monotonically decreasing when $\sigma_1$ is increasing.

\textbf{Simulation of two distributions.}
The motivation of the experimental design is to compare three approaches under different data distributions. 
Notice that there are $K=400$ teachers, which implies the number of the vote count for each class will be bounded by $400$. In the simulation of high-consensus distribution, we choose $T=200$ unlabeled public data such that the majority vote count will be larger than $150$ (i.e., $\max_{j\in[C]} n_j(x)>150$). For the low-consensus distribution, we choose to select $T$ unlabeled data such that the majority vote count will be smaller than $150$.

\section{Omitted proofs in private GLM}
\subsection{Per-instance DP of GLM}
\begin{theorem}[Per-instance differential privacy guarantee\label{thm: glm}]
	Consider two adjacent data sets $Z$ and $Z' =[Z, (x,y)]$, and denote the smooth part of the loss function $F_s =   \sum_{i=1}^n l(y_i,\langle x_i, \cdot\rangle) + r_s(\cdot)$ (thus $\tilde{F}_s = F_s  +  l(y,\langle x, \cdot \rangle)$.
	Let the local neighborhood be the line segment between $\theta^*$ and $\tilde{\theta}^*$. Assume 
	\begin{enumerate}
		\item the GLM loss function $l$ be convex, three-time continuous differentiable and $R$-generalized-self-concordant w.r.t. $\|\cdot\|_2$,
		\item $F_s$ is locally $\alpha$-strongly convex w.r.t. $\|\cdot\|_2$,
	%	\item Denote the maximum generalized leverage score in the local region $\mu  =  \sup_{\theta \in [\theta^*,\tilde{theta}^*]}  l''(y,x^T\theta) \|x\|_{H_{\theta^{-1}}^2}$
		\item and in addition, denote $L := \sup_{\theta\in [\theta^*,\tilde{\theta}^*]}|l'(y,x^T\theta)|$, $\beta := \sup_{\theta\in [\theta^*,\tilde{\theta}^*]}|l''(y,x^T\theta)|$.	
	%	$l(y,x^T\theta)$ be locally $L$-Lipschitz and $\beta$-smooth (namely, $|l'|\leq L$ and $l'' \leq \beta$).
	\end{enumerate}
	
	Then the algorithm obeys $(\epsilon,\delta)$-pDP for $Z$ and $z=(x,y)$ with any $0<\delta < 2/e$ and
$$
\epsilon \leq \epsilon_0(1+\log(2/\delta))  +  e^{\frac{RL\|x\|_2}{\alpha}} \left[\frac{\gamma L^2\|x\|_{H^{-1}}^2}{2} +  \sqrt{ \gamma L^2\|x\|_{H^{-1}}^2\log(2/\delta) }\right]
$$
%where $\epsilon_0 := (e^{\frac{\|v\|_2}{R}} -1)(1+\log(2/\delta))  +  2\mu_2 + \mu_1\log(2/\delta)$
where 
$\epsilon_0 \leq e^{\frac{RL\|x\|_2}{\alpha}} -1  + 2\beta \|x\|_{H_1^{-1}}^2 +  2\beta\|x\|_{\tilde{H}_2^{-1}}^2.$
If we instead assume that $l$ is $R$-self concordant. Then the same results hold, but with all $e^{\frac{RL\|x\|_2}{\alpha}}$ replaced with $(1-RL\|x\|_{H^{-1}})^2$.

\end{theorem}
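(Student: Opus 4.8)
\textbf{Step 1: the privacy-loss random variable.} The plan is to follow the per-instance DP recipe of \citet{wang2017per}. On $Z$ the released $\hat{\theta}$ is $\mathcal{N}(\theta^*,\gamma^{-1}H^{-1})$ and on $Z'=[Z,(x,y)]$ it is $\mathcal{N}(\tilde{\theta}^*,\gamma^{-1}\tilde{H}^{-1})$, where $H:=\nabla^2 F_s(\theta^*)$ and $\tilde{H}:=\nabla^2\tilde{F}_s(\tilde{\theta}^*)$. Writing $\hat{\theta}=\theta^*+\gamma^{-1/2}H^{-1/2}g$ with $g\sim\mathcal{N}(0,I)$, $\Delta:=\theta^*-\tilde{\theta}^*$ and $M:=H^{-1/2}\tilde{H}H^{-1/2}$, the per-instance privacy loss $\log\frac{\Pr[\mathcal{M}(Z)=\hat{\theta}]}{\Pr[\mathcal{M}(Z')=\hat{\theta}]}$ equals
\[
\underbrace{\tfrac{\gamma}{2}\,\Delta^\top\tilde{H}\Delta}_{\text{(a)}}\;+\;\underbrace{\tfrac12\,g^\top(M-I)g-\tfrac12\log\det M}_{\text{(b)}}\;+\;\underbrace{\sqrt{\gamma}\,\Delta^\top\tilde{H}H^{-1/2}g}_{\text{(c)}}.
\]
So it suffices to bound (a) deterministically and (b),(c) on an event of probability $\ge 1-\delta$; the passage from such an upper-tail bound on the privacy loss (and, after interchanging $Z,Z'$, on its negative) to $(\epsilon,\delta)$-pDP is standard.

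\textbf{Step 2: the mean shift.} Using $\nabla F_s(\theta^*)=0=\nabla\tilde{F}_s(\tilde{\theta}^*)$ and $\tilde{F}_s=F_s+l(y,\langle x,\cdot\rangle)$, I would write $\nabla F_s(\tilde{\theta}^*)=-l'(y,\langle x,\tilde{\theta}^*\rangle)\,x$, hence $\bar{H}\Delta=l'(y,\langle x,\tilde{\theta}^*\rangle)\,x$ for the averaged Hessian $\bar{H}:=\int_0^1\nabla^2 F_s(\tilde{\theta}^*+t\Delta)\,dt$. Local $\alpha$-strong convexity then gives $\|\Delta\|_2\le L\|x\|_2/\alpha$ -- exactly Proposition~\ref{prop:generalnorm} with $f=l(y,\langle x,\cdot\rangle)$ -- and, crucially, $\Delta=l'\bar{H}^{-1}x$ is \emph{aligned with} $\bar{H}^{-1}x$. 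Substituting this into (a) and into the variance of (c), and then replacing $\bar{H},\tilde{H}$ by $H$ up to the factor from Step 3, produces bounds of order $\gamma L^2\|x\|_{H^{-1}}^2$, which is how the inverse-Hessian norm enters the statement.

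\textbf{Step 3: Hessian stability, tails, and assembly.} I would relate $\tilde{H}$ to $H$ in two moves. First, $\nabla^2\tilde{F}_s(\theta^*)=H+l''(y,\langle x,\theta^*\rangle)\,xx^\top$ is a rank-one PSD perturbation with $0\preceq l''xx^\top\preceq\beta\,xx^\top$, so the rank-one determinant identity gives added log-determinant $\log(1+l''\|x\|_{H^{-1}}^2)\le\beta\|x\|_{H_1^{-1}}^2$ and makes $H^{-1/2}\nabla^2\tilde{F}_s(\theta^*)H^{-1/2}$ the identity plus a rank-one bump of size $\le\beta\|x\|_{H^{-1}}^2$. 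Second, moving the base point from $\theta^*$ to $\tilde{\theta}^*$ costs only a multiplicative factor in the PSD order: Lemma~\ref{lem:selfconcordant-hessian} with $v=\Delta$ gives $e^{-R\|\Delta\|_2}\nabla^2\tilde{F}_s(\theta^*)\preceq\tilde{H}\preceq e^{R\|\Delta\|_2}\nabla^2\tilde{F}_s(\theta^*)$, with $R\|\Delta\|_2\le RL\|x\|_2/\alpha$ from Step 2 (for $R$-self-concordance, replace $e^{\pm R\|\Delta\|_2}$ by $(1-R\|\Delta\|_{H})^{\mp2}$, the hypothesis $\lambda\ge RL$ keeping this non-vacuous). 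Folding in the two rank-one log-determinant corrections -- taken at $H_1=\nabla^2\tilde{F}_s(\theta^*)$ and at $\tilde{H}_2=\nabla^2\tilde{F}_s(\tilde{\theta}^*)$, which explains both norms -- bounds (b) by $\tfrac12\epsilon_0\cdot\chi^2_1+\tfrac12\epsilon_0$ with $\epsilon_0:=e^{RL\|x\|_2/\alpha}-1+2\beta\|x\|_{H_1^{-1}}^2+2\beta\|x\|_{\tilde{H}_2^{-1}}^2$. Finally, applying a $\chi^2_1$ upper-tail bound to (b) and a one-sided Gaussian deviation bound to (c), each with failure probability $\delta/2$, gives on the complementary event (probability $\ge1-\delta$) that (b)$\le\epsilon_0(1+\log(2/\delta))$ and (a)$+$(c)$\le e^{RL\|x\|_2/\alpha}\big[\tfrac{\gamma L^2\|x\|_{H^{-1}}^2}{2}+\sqrt{\gamma L^2\|x\|_{H^{-1}}^2\log(2/\delta)}\big]$, i.e.\ exactly the claimed $\epsilon$; interchanging $Z,Z'$ handles the reverse inequality.

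\textbf{Main obstacle.} The hard part will be the circular-looking entanglement in Steps 2--3: $\Delta$ is controlled only through \emph{local} strong convexity, the averaged Hessian $\bar{H}$ and the point at which self-concordance is invoked both lie on the segment $[\theta^*,\tilde{\theta}^*]$, and the stability factor $e^{RL\|x\|_2/\alpha}$ itself depends on $\|\Delta\|$ -- so one must fix the local neighbourhood to be that segment and impose a regularity condition of the form $\lambda\ge RL$ to keep $R\|\Delta\|$ away from $1$, after which every Hessian can be compared to a single reference matrix. A subtler point is that this comparison, applied to the covariance-mismatch term (b), must not leak a dimension factor: it is the rank-one nature of a single-datapoint perturbation that keeps the fluctuation in (b) essentially one-dimensional and hence makes the $\epsilon_0(1+\log(2/\delta))$ form correct. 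The remaining work is bookkeeping -- tracking which of $H$, $H_1$, $\tilde{H}_2$ appears in each weighted norm of the final bound.
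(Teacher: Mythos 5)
Your plan follows the paper's proof essentially step for step: the same Gaussian log-density-ratio decomposition (yours written in terms of the standardized variable $g$ as deterministic $+$ linear $+$ quadratic parts, the paper's as a log-determinant term $(*)$ plus a difference of quadratic forms $(**)$, which are algebraically identical), the same control of $\|\theta^*-\tilde{\theta}^*\|$ via local strong convexity (Proposition~\ref{prop:generalnorm}), the same Hessian comparison via generalized self-concordance (Lemma~\ref{lem:selfconcordant-hessian}) combined with the rank-one determinant/leverage-score identity, and the same Gaussian-tail-to-$(\epsilon,\delta)$ conversion. The only caveat is that your remark that the fluctuation of the covariance-mismatch term is ``essentially one-dimensional'' is not literally right for the isotropic factor $e^{R\|\Delta\|_2}-1$, which multiplies a full quadratic in $g$ rather than a rank-one one --- but the paper's own proof makes the identical simplification when it treats $W=\|\theta-\theta^*\|_{H_1}$ as half-normal, so your proposal matches the paper's argument including this point.
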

	
	Under the stronger three-times continuous differentiable assumption, by mean value theorem, there exists $\xi$ on the line-segment between $\theta^*$ and $\ttheta^*$ such that 
	$$
	H = \left[\int_{t=0}^{1}\nabla^2 F_s((1-t)\theta^* + t\ttheta^*)  dt \right]  =  \nabla^2 F_s(\xi).
	$$
	
	The two distributions of interests are $\cN(\theta^*,  [\gamma \nabla^2 F_s(\theta^*)]^{-1})$ and $\cN(\ttheta^*, [\gamma \nabla^2 F_s(\ttheta^*) + \nabla^2l(y,x^T\ttheta^*)]^{-1}).$
	Denote $[\nabla^2 F_s(\theta^*)]^{-1} =: \Sigma$ and $[\nabla^2 F_s(\ttheta^*) + \nabla^2l(y,x^T\ttheta^*)]^{-1} =: \tilde{\Sigma}$.
	Both the means and the covariance matrices are different, so we cannot use multivariate Gaussian mechanism naively. Instead we will take the tail bound interpretation of $(\epsilon,\delta)$-DP and make use of the per-instance DP framework as internal steps of the proof. 
	
	First, we can write down the privacy loss random variable in analytic form
	\begin{align*}
	\log\frac{|\Sigma|^{-1/2}e^{- \frac{\gamma}{2}\|\theta -\theta^*\|_{\Sigma^{-1}}^2}}{|\tilde{\Sigma}|^{-1/2}e^{- \frac{\gamma}{2}\|\theta -\ttheta^*\|_{\tilde{\Sigma}^{-1}}^2}}
	=\underbrace{\frac{1}{2}\log \left(\frac{|\Sigma^{-1}|}{|\tilde{\Sigma}^{-1}|}\right)}_{(*)} +  \underbrace{\frac{\gamma}{2}\left[\|\theta -\theta^*\|_{\Sigma^{-1}}^2 - \|\theta -\ttheta^*\|_{\tilde{\Sigma}^{-1}}^2\right]}_{(**)}
	\end{align*}
	The general idea of the proof is to simplify the expression above and  upper bounding the two terms separately using self-concordance and matrix inversion lemma, and ultimately show that the privacy loss random variable is dominated by another random variable having an appropriately scaled shifted $\chi$-distribution, therefore admits a Gaussian-like tail bound.

	To ensure the presentation is readable, we define a few short hands. We will use $H$ and $\tilde{H}$ to denote the Hessian of $F_s$ and $F_s +  f$ respectively and subscript $1$ $2$ indicates whether the Hessian evaluated at at $\theta^*$ or $\ttheta^*$. $H$ without any subscript or superscript represents the Hessian of $F_s$ evaluated at $\xi$ as previously used.
	\begin{align*}
	(*)  = \frac{1}{2} \log  \frac{|H_1|}{ |H| }\frac{|H|}{|H_2|}\frac{|H_2|}{|\tilde{H}_2|}  \leq \frac{1}{2}\left[  	\log\frac{|H_1|}{ |H| }  + \log \frac{|H|}{|H_2|} + \log\frac{|H_2|}{|\tilde{H}_2|}\right]
	\end{align*}
	By the $R$-generalized self-concordance of $F_s$, we can apply Lemma~\ref{lem:selfconcordant-hessian}, 
	$$
-\|\theta^*-\xi\|_2R\leq \log\frac{|H_1|}{ |H| } \leq R\|\theta^*-\xi\|_2, \quad   -R\|\xi - \ttheta^*\|_2\leq \log\frac{|H|}{ |H_2| } \leq R\|\xi - \ttheta^*\|_2.
	$$
	The generalized linear model ensures that the Hessian of $f$ is rank-$1$:
	$$\nabla^2 f(\ttheta^*) =  l''(y,x^T\ttheta^*)  xx^T$$
	and we can apply Lemma~\ref{lem:determinant} in both ways (taking $A=H_2$ and $A=\tilde{H}_2$) and obtain
	$$
	\frac{|H_2|}{|\tilde{H}_2|}   =  \frac{1}{1 + l''(y,x^T\ttheta^*)x^T H_2^{-1}  x}  =  1- l''(y,x^T\ttheta^*)x^T\tilde{H}_2 x
	$$
	Note that $ l''(y,x^T\ttheta^*)x^T\tilde{H}_2^{-1} x$ is the in-sample leverage-score and $ l''(y,x^T\ttheta^*)x^T H_2^{-1}  x$ is the out-of-sample leverage-score of the locally linearized problem at $\ttheta^*$. We denote them by $\mu_2$ and $\mu'_2$ respectively (similarly, for the consistency of notations, we denote the in-sample and out of sample leverage score at $\theta^*$ by $\mu_1$ and $\mu'_1$ ). %Note that $\mu_2'\leq \mu_2 \leq \beta \|x\|_{H_2^{-1}}$ and $\mu_1'\leq \mu_1 \leq\beta \|x\|_{H_1^{-1}}$
	
Combine the above arguments we get
	\begin{align}\label{eq:der_part1}
	   (*)\leq&  R\|\theta^*-\xi\|_2 + R\|\xi - \ttheta^*\|_2  + \log (1 - \mu_2) \leq R\|\theta^*-\ttheta^*\|_2 + \log(1-\mu_2)\\
	   (*) \geq& -R\|\theta^*-\ttheta^*\|_2  - \log(1-\mu_2).
	\end{align}
	
We now move on to deal with the second part, where we would like to express everything in terms of $\|\theta-\theta^*\|_{H_1}$, which we know from the algorithm is $\chi$-distributed.
\begin{align*}
(**)  = \frac{\gamma}{2}\left[ \|\theta -\theta^*\|_{H_1}^2 - \|\theta -\theta^*\|_{H_2}^2  + \|\theta -\theta^*\|_{H_2}^2 - \|\theta -\ttheta^*\|_{H_2}^2+ \|\theta -\ttheta^*\|_{H_2}^2- \|\theta -\ttheta^*\|_{\tilde{H}_2}^2  \right]
\end{align*}
By the generalized self-concordance at $\theta^*$ %\|\theta -\theta^*\|_{H_1}^2 - 
\begin{align*}
e^{-R\|\theta^*-\ttheta^*\|_2}\|\cdot\|_{H_1}^2 \leq \|\cdot\|_{H_2}^2 \leq   e^{R\|\theta^*-\ttheta^*\|_2}\|\cdot\|_{H_1}^2
\end{align*}
This allows us to convert from $\|\cdot\|_{H_2}$ to $\|\cdot\|_{H_1}$, and as a consequence:
$$
\left|\|\theta -\theta^*\|_{H_1}^2 - \|\theta -\theta^*\|_{H_2}^2 \right|  \leq   [e^{R\|\theta^*-\ttheta^*\|_2} - 1]\|\theta -\theta^*\|_{H_1}^2.
$$
%\begin{align*}
%e^{-R\|\theta^*-\ttheta^*\|_2}\|\theta -\theta^*\|_{H_1}^2 \leq \|\theta -\theta^*\|_{H_2}^2 \leq   e^{R\|\theta^*-\ttheta^*\|_2}\|\theta -\theta^*\|_{H_1}^2
%\end{align*}
Also, 
\begin{align*}
 \|\theta -\theta^*\|_{H_2}^2 - \|\theta -\ttheta^*\|_{H_2}^2  &=  \left\langle \ttheta^* -\theta^* ,  2\theta - 2\theta^* + \theta^*-\ttheta^*  \right\rangle_{H_2}  =  2 \langle  \theta-\theta^*, \ttheta^* -\theta^* \rangle_{H_2} -  \|\theta^*-\ttheta^*\|_{H_2}^2
 \end{align*}
 Therefore
 \begin{align*}
 \left|  \|\theta -\theta^*\|_{H_2}^2 - \|\theta -\ttheta^*\|_{H_2}^2\right|  &\leq 2\|\theta - \theta^*\|_{H_2} \|\theta^*-\ttheta^*\|_{H_2}  +  \|\theta^*-\ttheta^*\|_{H_2}^2  \\
 &\leq 2e^{R\|\ttheta^* - \theta^*\|_2}\|\theta - \theta^*\|_{H_1} \|\theta^*-\ttheta^*\|_{H}  + e^{R\|\ttheta^* - \theta^*\|_2}\|\theta^*-\ttheta^*\|_{H}^2.
\end{align*}
Then lastly  we have
\begin{align*}
0\geq \|\theta -\ttheta^*\|_{H_2}^2- \|\theta -\ttheta^*\|_{\tilde{H}_2}^2 &=  -l''(y,x^T\ttheta^*)\left[ \langle x, \theta-\theta^* \rangle + \langle x,\theta^*-\ttheta^*\rangle\right]^2   \\
&\geq -2\beta \|x\|_{H_1^{-1}}^2\|\theta-\theta^*\|_{H_1}^2   -  2\beta \|x\|_{H^{-1}}^2\|\theta^*-\ttheta^*\|_{H}^2
\end{align*}
$$
\left|  \|\theta -\ttheta^*\|_{H_2}^2- \|\theta -\ttheta^*\|_{\tilde{H}_2}^2\right|  \leq 2\beta \|x\|_{H_1^{-1}}^2\|\theta-\theta^*\|_{H_1}^2   +  2\beta \|x\|_{H^{-1}}^2\|\theta^*-\ttheta^*\|_{H}^2
$$

Combine the above derivations, we get 
\begin{align}
\left|(**)\right|  \leq \frac{\gamma}{2}\left[  a \|\theta-\theta^*\|_{H_1}^2 + b \|\theta-\theta^*\|_{H_1}  +c\right] \label{eq:der_part2}
\end{align}
where 
\begin{align*}
a :=& \left[ e^{R\|\theta^*-\ttheta^*\|_2} -1  + 2\beta \|x\|_{H_1^{-1}}^2\right] \\
b:=& 2 e^{R\|\theta^*-\ttheta^*\|_2}   \|\theta^*-\ttheta^*\|_H \\
c:=& (e^{R\|\theta^*-\ttheta^*\|_2} + 2\beta \|x\|_{H^{-1}}^2)\|\theta^*-\ttheta^*\|_H^2
\end{align*}

Lastly, by \eqref{eq:der_part1} and $\eqref{eq:der_part2}$, 
$$
\left|  \log\frac{p(\theta|Z)}{p(\theta|Z')}  \right|  \leq R\|\theta^*-\ttheta^*\|_2  + \log(1-\mu_2)  +  \frac{\gamma}{2} [ a W^2 + bW + c].
$$
where according to the algorithm $W:= \|\theta-\theta^*\|_{H_1}$ follows a half-normal distribution with $\sigma=\gamma^{-1/2}$.

By standard Gaussian tail bound, we have for all $\delta<2/e$.
$$
\P(|W|\leq \gamma^{-1/2} \sqrt{\log(2/\delta)} )  \leq \delta.
$$
This implies that a high probability upper bound of the absolute value of the privacy loss random variable $\log \frac{p(\theta|Z)}{p(\theta|Z')}$ under $p(\theta|Z)$.
By the tail bound to privacy conversion lemma (Lemma~\ref{lem:tailbound2DP}), we get 
that for any set $S\subset \Theta$
$\P(\theta \in S | Z) \leq e^\epsilon \P(\theta \in S | Z') +\delta$
for any $0<\delta<2/e$ and 
$$
\epsilon  = R\|\theta^*-\ttheta^*\|_2  + \log(1-\mu_2)  + \frac{\gamma c}{2}  + \frac{a}{2}  \log(2/\delta)  +  \frac{\gamma^{1/2} b}{2}  \sqrt{\log(2/\delta)}.
$$
Denote $v:=  \theta^*-\ttheta^*$, by strong convexity
$$\|v\|_2\leq \|\nabla l(y,x^T\theta)[\ttheta^*]\|_2/\alpha  = |l'| \|x\|_2 / \alpha \leq L\|x\|_2/\alpha$$
and 
$$
\|v\|_H \leq \|\nabla l(y,x^T\theta)[\ttheta^*]\|_{H^{-1}}  =  |l'| \|x\|_{H^{-1}} \leq L\|x\|_{H^{-1}}.
$$
Also use the fact that $|\log(1-\mu_2)| \leq 2\mu_2$ for $\mu_2<0.5$ and $\mu_2\leq \beta\|x\|_{\tilde{H}_2^{-1}}^2 $, we can then combine similar terms and have a more compact representation.
%$$
%\epsilon=   \frac{\|v\|_2}{R} +  \log(1- \mu_2)  + (\mu_1  + e^{\frac{\|v\|_2}{R}} -1)\log(2/\delta)  +  e^{\frac{\|v\|_2}{R}} \left[\frac{\gamma \|v\|_H^2}{2} +  \sqrt{ \gamma \|v\|_H^2\log(1/\delta) }\right]
%$$
$$
\epsilon \leq \epsilon_0(1+\log(2/\delta))  +  e^{\frac{RL\|x\|_2}{\alpha}} \left[\frac{\gamma L^2\|x\|_{H^{-1}}^2}{2} +  \sqrt{ \gamma L^2\|x\|_{H^{-1}}^2\log(2/\delta) }\right]
$$
%where $\epsilon_0 := (e^{\frac{\|v\|_2}{R}} -1)(1+\log(2/\delta))  +  2\mu_2 + \mu_1\log(2/\delta)$
where 
$$\epsilon_0 \leq e^{\frac{RL\|x\|_2}{\alpha}} -1  + 2\beta \|x\|_{H_1^{-1}}^2 +  2\beta\|x\|_{\tilde{H}_2^{-1}}^2$$ 
is the part of the privacy loss that does not get smaller as $\gamma$ decreases.

\begin{proposition}\label{prop:generalnorm}
	Let $\|\cdot\|$ be a norm and $\|\cdot\|_*$ be its dual norm. Let $F(\theta)$, $f(\theta)$ and $\tilde{F}(\theta) = F(\theta) + f(\theta)$ be proper convex functions and $\theta^*$ and $\tilde{theta}^*$ be their minimizers, i.e., $0\in \partial F(\theta^*)$ and $0\in \partial \tilde{F}(\tilde{theta}^*)$.  If in addition, $F,\tilde{F}$ is $\alpha,\tilde{\alpha}$-strongly convex with respect to $\|\cdot\|$ within the restricted domain 
	$\theta \in \{  t\theta^* + (1-t)\tilde{\theta}^*  \;|\;  t\in[0,1]  \}$. 	Then there exists $g \in \partial f(\theta^*)$ and $\tilde{g}\in \partial f(\tilde{\theta}^*)$ such that
	$$
	\|\theta^*-\tilde{\theta}^*\| \leq\min\left\{\frac{1}{\alpha}\| \tilde{g}\|_*,  \frac{1}{\tilde{\alpha}}\| g\|_*\right\}.
	$$
	 %continuously differentiable with Lipschitz gradient. Let $\theta^*$ and $\ttheta^*$ be such that $0\in \partial F(\theta^*)$ and $0\in \partial \tilde{F}(\ttheta^*)$. If in addition $F(\theta)$ is $\alpha$-strongly convex with respect to $\|\cdot\|$ for all 
	%$\theta \in \{  t\theta^* + (1-t)\ttheta^*  \;|\;  t\in[0,1]  \}$. 
\end{proposition}
\begin{proof}
	Apply the first order condition to $F$ restricted to the line segment between $\tilde{\theta}^*$ and $\theta^*$, there are we get
	\begin{align}
	F(\tilde{\theta}^*) \geq F(\theta^*)  +  \langle \partial F(\theta^*),  \tilde{\theta}^*-\theta^* \rangle  + \frac{\alpha}{2}\|\tilde{\theta}^*-\theta^*\|^2\label{eq:strongcvx1} \\
	F(\theta^*) \geq F(\tilde{\theta}^*)  +  \langle \partial F(\tilde{\theta}^*),  \theta^*-\tilde{\theta}^* \rangle  + \frac{\alpha}{2}\|\tilde{\theta}^*-\theta^*\|^2 \label{eq:strongcvx2}
	\end{align}
	Note by the convexity of $F$ and $f$, $\partial\tilde{F}=  \partial F + \partial f$, where $+$ is the Minkowski Sum. Therefore, $0\in \partial\tilde{F}(\tilde{\theta}^*)$ implies that there exists $\tilde{g}$ such that $\tilde{g}\in \partial f(\tilde{\theta}^*)$ and $-\tilde{g}\in\partial F(\tilde{\theta}^*)$.
	Take $-\tilde{g}\in\partial F(\tilde{\theta}^*)$ in Equation~\ref{eq:strongcvx2} and $0 \in \partial F(\theta^*)$ in Equation~\ref{eq:strongcvx1}  and add the two inequalities, we obtain
	$$
		0\geq \langle -\tilde{g},  \theta^*-\tilde{\theta}^* \rangle  + \alpha \|\tilde{\theta}^* - \theta^*\|^2 \geq - \|\tilde{g}\|_* \|\theta^*-\tilde{\theta}^*\|  +  \alpha\|\tilde{\theta}^* - \theta^*\|^2. 
	$$
	For $\|\tilde{\theta}^* - \theta^*\|=0$ the claim is trivially true, otherwise, we can divide the both sides of the above inequality by $\|\tilde{\theta}^* - \theta^*\|$ and get
	$	\|\theta^*-\tilde{\theta}^*\| \leq \frac{1}{\alpha}\| \tilde{g}\|_*$. 
	
	It remains to show that $\|\theta^*-\tilde{\theta}^*\| \leq \frac{1}{\tilde{\alpha}}\|g\|_*$. This can be obtained by exactly the same arguments above but applying strong convexity to $\tilde{F}$ instead. Note that we can actually get something slightly stronger than the statement because the inequality holds for all $g\in \partial f(\theta^*)$.
\end{proof}

A consequence of (generalized) self-concordance is the spectral (\emph{multiplicative}) stability of Hessian to small perturbations of parameters.
\begin{lemma}[Stability of Hessian{\citep[Theorem~2.1.1]{nesterov1994interior}, \citep[Proposition~1]{bach2010self}}]\label{lem:selfconcordant-hessian}
	Let $H_\theta :=  \nabla^2F_s(\theta)$. If $F_s$ is $R$-self-concordant at $\theta$. Then for any $v$ such that $R \|v\|_{H_\theta} < 1$, we have that
	$$
	(1-R\|v\|_{H_\theta})^2 \nabla^2 F_s(\theta) 	\prec	\nabla^2 F_s(\theta+v) \prec  \frac{1}{(1-R\|v\|_{H_\theta})^2}   \nabla^2 F_s(\theta)  .
	$$
	If instead we assume $F_s$ is $R$-generalized-self-concordant at $\theta$ with respect to norm $\|\cdot\|$, then
	$$
	e^{-R\|v\|} \nabla^2 F_s(\theta) \prec  \nabla^2 F_s(\theta+v)  \prec e^{R\|v\|}  \nabla^2 F_s(\theta) 
	$$
\end{lemma}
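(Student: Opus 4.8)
The plan is to reduce both Loewner (semidefinite) sandwiches to a one-dimensional statement and then integrate the self-concordance inequality along the segment from $\theta$ to $\theta+v$. Fix an arbitrary direction $w\in\R^d$ and set $g(t):=w^T\nabla^2 F_s(\theta+tv)\,w$ for $t\in[0,1]$; since $F_s$ is three-times differentiable, $g$ is differentiable with $g'(t)=\nabla^3 F_s(\theta+tv)[v,w,w]$. The one non-elementary input is a multilinear extension of the self-concordance bound. In the standard case, from $|\nabla^3 F_s(u)[h,h,h]|\le 2R\,(\nabla^2 F_s(u)[h,h])^{3/2}$ the classical symmetric-trilinear-form estimate of \citet{nesterov1994interior} (which loses no constant) gives $|\nabla^3 F_s(u)[v,w,w]|\le 2R\,\|v\|_{\nabla^2 F_s(u)}\,(w^T\nabla^2 F_s(u)\,w)$; in the generalized case, from $|\nabla^3 F_s(u)[h,h,h]|\le 2R\|h\|\,(\nabla^2 F_s(u)[h,h])$ (the absolute value being legitimate by applying the defining inequality also to $-h$) one obtains analogously $|\nabla^3 F_s(u)[v,w,w]|\le R\|v\|\,(w^T\nabla^2 F_s(u)\,w)$, with the constant traced from the definition. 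Consequently, whenever $g(0)>0$ (if $g(0)=0$ a Gr\"onwall argument forces $g\equiv 0$ on $[0,1]$ and both inequalities hold trivially in the direction $w$), the logarithmic derivative obeys $|(\log g)'(t)|\le 2R\,\|v\|_{\nabla^2 F_s(\theta+tv)}$ in the standard case and $|(\log g)'(t)|\le R\|v\|$ in the generalized case.

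The generalized case then closes immediately: $\|v\|$ does not depend on $t$, so integrating over $[0,1]$ gives $|\log g(1)-\log g(0)|\le R\|v\|$, i.e.\ $e^{-R\|v\|}g(0)\le g(1)\le e^{R\|v\|}g(0)$. Since $w$ was arbitrary, this is exactly $e^{-R\|v\|}\nabla^2 F_s(\theta)\preceq\nabla^2 F_s(\theta+v)\preceq e^{R\|v\|}\nabla^2 F_s(\theta)$.

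For the standard case I would first control the drift of the local Hessian norm $\|v\|_{\nabla^2 F_s(\theta+tv)}$. Put $\psi(t):=v^T\nabla^2 F_s(\theta+tv)\,v=\|v\|_{\nabla^2 F_s(\theta+tv)}^2$, so $\psi'(t)=\nabla^3 F_s(\theta+tv)[v,v,v]$ and self-concordance gives $|\psi'(t)|\le 2R\,\psi(t)^{3/2}$; hence $\big|\tfrac{d}{dt}\psi(t)^{-1/2}\big|=\tfrac12\psi(t)^{-3/2}|\psi'(t)|\le R$. Writing $r:=\|v\|_{H_\theta}=\psi(0)^{1/2}$ and using the hypothesis $Rr<1$ (the case $r=0$ being immediate), integration yields $\psi(t)^{-1/2}\ge \tfrac1r-Rt>0$ on $[0,1]$, i.e.\ $\|v\|_{\nabla^2 F_s(\theta+tv)}=\psi(t)^{1/2}\le \tfrac{r}{1-Rrt}$. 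Substituting into the bound on $|(\log g)'(t)|$ and integrating,
\[
|\log g(1)-\log g(0)|\le\int_0^1\frac{2Rr}{1-Rrt}\,dt=-2\log(1-Rr)=\log\frac{1}{(1-Rr)^2},
\]
so $(1-Rr)^2 g(0)\le g(1)\le(1-Rr)^{-2}g(0)$. Letting $w$ range over $\R^d$ gives the Loewner sandwich $(1-R\|v\|_{H_\theta})^2\nabla^2 F_s(\theta)\preceq\nabla^2 F_s(\theta+v)\preceq(1-R\|v\|_{H_\theta})^{-2}\nabla^2 F_s(\theta)$.

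Everything downstream of the multilinear extension is one-variable calculus (a linear ODE for $\log g$, a separable ODE for $\psi^{-1/2}$), so the step I expect to be the only real obstacle is that extension: promoting the ``diagonal'' self-concordance inequality, which a priori only constrains $\nabla^3 F_s(u)[h,h,h]$, to the mixed bound on $\nabla^3 F_s(u)[v,w,w]$ with a $\|v\|$-type factor on the perturbation direction and a Hessian-quadratic factor on $w$. This is precisely the content of Theorem~2.1.1 of \citet{nesterov1994interior} (standard case) and Proposition~1 of \citet{bach2010self} (generalized case), proved by restricting $F_s$ to the plane spanned by $v$ and $w$ and applying a Cauchy--Schwarz estimate to the resulting symmetric trilinear form; in the write-up we would simply invoke these references.
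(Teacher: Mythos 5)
The paper does not prove this lemma at all: it is imported verbatim from the cited sources (Theorem~2.1.1 of \citet{nesterov1994interior} for the standard case, Proposition~1 of \citet{bach2010self} for the generalized case), so there is no in-paper argument to compare against. Your reconstruction is the standard proof from those references and is essentially correct: reducing to $g(t)=w^T\nabla^2F_s(\theta+tv)w$, bounding $(\log g)'$ via the mixed trilinear estimate, and (in the standard case) first integrating the separable inequality $\bigl|\tfrac{d}{dt}\psi(t)^{-1/2}\bigr|\le R$ to get $\|v\|_{\nabla^2F_s(\theta+tv)}\le r/(1-Rrt)$ before integrating $\int_0^1 2Rr/(1-Rrt)\,dt=-2\log(1-Rr)$; the $g(0)=0$ edge case is handled correctly by Gr\"onwall. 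You are also right that the only non-elementary ingredient is the promotion of the diagonal bound on $\nabla^3F_s[h,h,h]$ to the mixed bound on $\nabla^3F_s[v,w,w]$, which you appropriately defer to the references. One caveat worth making explicit: the paper's definition of generalized self-concordance carries a factor $2R$ on the diagonal ($\nabla^3 f(u)[v,v,v]\le 2R\|v\|\nabla^2 f(u)[v,v]$), whereas the conclusion $e^{\pm R\|v\|}$ requires the mixed bound with constant $R$, i.e.\ $|\nabla^3F_s(u)[v,w,w]|\le R\|v\|\,w^T\nabla^2F_s(u)w$. The standard trilinear extension preserves the constant, so under the paper's definition as written the argument yields $e^{\pm 2R\|v\|}$; the discrepancy traces to the paper's definition differing by a factor of $2$ from Bach's (whose definition has no $2$), not to a flaw in your structure, but you should fix the convention rather than leave the constant implicit.
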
\label{stability}
The two bounds are almost identical when  $R\|v\|$ and $R\|v\|_{\theta}$ are close to $0$, in particular, for $x\leq 1/2$, $e^{-2x} \leq 1-x \leq e^{-x}$.
\bibliographystyle{plainnat}
\bibliography{gen_ptr}

\end{document}